\def\eqref#1{Eq.~\ref{#1}}   %
\def\vzero{{\bm{0}}}
\def\vc{{\bm{c}}}
\def\vd{{\bm{d}}}
\def\ve{{\bm{e}}}
\def\vn{{\bm{n}}}
\def\vr{{\bm{r}}}
\def\vs{{\bm{s}}}
\def\vx{{\bm{x}}}
\def\vy{{\bm{y}}}
\def\mA{{\bm{A}}}
\def\mB{{\bm{B}}}
\def\mC{{\bm{C}}}
\def\mD{{\bm{D}}}
\def\mE{{\bm{E}}}
\def\mI{{\bm{I}}}
\def\mK{{\bm{K}}}
\def\mL{{\bm{L}}}
\def\mM{{\bm{M}}}
\def\mP{{\bm{P}}}
\def\mQ{{\bm{Q}}}
\def\mS{{\bm{S}}}
\def\mT{{\bm{T}}}
\def\mW{{\bm{W}}}
\def\mX{{\bm{X}}}
\def\mY{{\bm{Y}}}
\def\mSigma{{\bm{\Sigma}}}
\newcommand{\E}{\mathbb{E}}
\newcommand{\R}{\mathbb{R}}
\DeclareMathOperator*{\argmin}{\arg\!\min}
\newtheoremstyle{mythmstyle} 
{\topsep}    %
{\topsep}    %
{\itshape}   %
{0pt}        %
{\bfseries}  %
{}           %
{ }          %
{}           %
\theoremstyle{mythmstyle}
\newtheorem{theorem}{Theorem}
\newtheorem{lemma}[theorem]{Lemma}
 \newtheorem{assumption}{Assumption}
 \newtheorem*{proposition*}{Proposition}
 \newtheorem*{theorem*}{Theorem}
  \newtheorem*{lemma*}{Lemma}
\newcommand{\cN}{\mathcal N}
\newcommand{\cG}{\mathcal G}
\newcommand{\cL}{\mathcal L}
\newcommand{\pairwisecomparisons}{recursive residuals }
\newcommand{\pairwisecomparisonsnospace}{recursive residuals} 
\newcommand{\inlineitem}[1][]{%
\ifnum\enit@type=\tw@
    {\descriptionlabel{#1}}
  \hspace{\labelsep}%
\else
  \ifnum\enit@type=\z@
       \refstepcounter{\@listctr}\fi
    \quad\@itemlabel\hspace{\labelsep}%
\fi}
\begin{document}
\title{Multi-View Causal Discovery without Non-Gaussianity: Identifiability and Algorithms}
\date{}

\author{
    {Ambroise Heurtebise}$^{1}$\footnote{
    Corresponding author \href{mailto:[ambroise.heurtebise@inria.fr]]}{ambroise.heurtebise@inria.fr}},
    {Omar Chehab}$^{2}$,
    {Pierre Ablin}$^{3}$, \\
    {Alexandre Gramfort}$^{1}$,
    {Aapo Hyv{\"a}rinen}$^{4}$
    \\ [.8em]
    $^1${Inria, CEA, University of Paris-Saclay, France}
    \\
    $^2${University of Carnegie Mellon, Pittsburgh PA, USA}
    \\
    $^3${Apple}
    \\
    $^4${University of Helsinki, Finland}
}

\addtocontents{toc}{\protect\setcounter{tocdepth}{0}}

\maketitle

\begin{abstract}

Causal discovery is a difficult problem that typically relies on strong assumptions on the data-generating model, such as non-Gaussianity. In practice, many modern applications provide multiple related views of the same system, which has rarely been considered for causal discovery. Here, we leverage this multi-view structure to achieve causal discovery with weak assumptions. We propose a multi-view linear Structural Equation Model (SEM) that extends the well-known framework of non-Gaussian disturbances by alternatively leveraging correlation over views. We prove the identifiability of the model for acyclic SEMs. Subsequently, we propose several multi-view causal discovery algorithms, inspired by single-view algorithms (DirectLiNGAM, PairwiseLiNGAM, and ICA-LiNGAM). The new methods are validated through simulations and applications on neuroimaging data, where they enable the estimation of causal graphs between brain regions.

 \end{abstract}

\section{Introduction}
\label{sec:intro}

Causal discovery is a fundamental problem in scientific data analysis as well as several technological applications~\citep{peters2017elements}. The basic problem is that we are given a number of observed variables, and we need to infer which variables cause which, and with what “connection strengths”. In the typical case, we only observe the data passively without any possibility of performing interventions, and this purely observational quality of the data makes the problem difficult.

Often, the problem is formalized as a structural equation model (SEM)~\citep{bollen1989structural}, also called a functional causal model~\citep{pearl2009causalitybook}. Then the question is how to estimate the SEM parameters using statistical theory. In fact, before even considering estimation methods, we need to know if it is at all possible to solve the problem. This is the question of \textit{identifiability} of the model: can the parameters that describe the causal relations and directions be (uniquely) estimated?

A well-known fact is that if all variables are Gaussian and the model is linear, identifiability of the SEM is very problematic. Some of the earliest work showed that the directions can sometimes %
be recovered by an analysis of the \textit{conditional independencies} between variables~\citep{spirtes2001causation}; however, this is only possible in some special cases and notably not possible when observing only two variables. 
Recent literature has, therefore, focused on different departures from the linear-Gaussian framework to achieve identifiability. A major advance was to consider a linear model together with \textit{non-Gaussianity} \citep{shimizu2006linear}, which leads to full identifiability of the model under weak conditions such as acyclicity; however, such non-Gaussianity may not be strong enough in many data sets.
A related framework was proposed by assuming that the cause undergoes a \textit{nonlinear transform}, while the disturbance is still additive \citep{hoyer2008nonlinear}; however the nonlinearity may not be strong enough in many applications, and such nonlinearity slightly contradicts the linear additivity of the disturbance. Further related frameworks were proposed by \citep{peters2014identifiability,Zhang09UAI,Monti19UAI}.
Each framework has lead to development of a number of algorithms, the discussion of which we defer to the Background Section below.

An alternative recent framework is given by the \textit{multi-view} setting, where data is collected from different sources whose similarities can be leveraged for estimation and identifiability. However, current approaches using multi-view structure are scarce and limited;  
either they assume the views are statistically independent~\citep{shimizu2012joint} or make strong assumptions on their dependencies \citep{chen2024individual}. 
Here, we consider the linear 
multi-view SEM, providing a general theory and algorithms that leverage that multi-view structure. 

Our main contributions are:
\begin{enumerate}
    
    \item We generalize some of the most well-known \textit{algorithms} for single-view causal discovery to the multi-view setting. The generalizations of PairwiseLiNGAM  and DirectLiNGAM result in completely new and fast 
    causal discovery using second-order statistics only, while our adaptation of ICA-LiNGAM improves on \citet{chen2024individual}. 
    
    \item We show that our algorithms are \textit{consistent} under weak conditions: we can  replace the non-Gaussianity assumption on the disturbances by conditions on correlations between the views and ``diversity" of such second-order statistics (SOS). Still, if the disturbances are non-Gaussian, we generalize \citet{shimizu2012joint,chen2024individual}.

    \item The proofs above directly lead to rigorous \textit{identifiability} results on the model, including cases where all the disturbances are Gaussian, or, equivalently, identifiability results based on SOS only. Importantly, only weak conditions on SOS is necessary for identifiability, greatly generalizing previous work. 
    
    \item We make our algorithms usable by practitioners by providing them as \textit{open-source}, and demonstrate their usefulness by benchmarking them on real neuroimaging data.
    
\end{enumerate}

\section{Background}
\label{sec:background}

\paragraph{Causal ordering}
In the following, we consider a random vector $\vx \in \R^p$ whose entries are nodes of a directed acyclic graph (DAG). 
The DAG is represented by an adjacency matrix $\mB \in \R^{p \times p}$, where non-zero values indicate the edges of the graph. The entries of $\vx$ can be reordered such that each ``causes" the next. This is formalized by the adjacency matrix admitting a special decomposition $\mB = \mP^\top \mT \mP$, where $\mT$  is  a strictly lower triangular matrix  and $\mP$ is a permutation matrix referred to as the \emph{causal ordering}. In general, the causal ordering is not unique. 

\paragraph{LiNGAM: A single-view model for causal discovery}
We consider a structural equation model (SEM), also known as a functional causal model~\citep{bollen1989structural,pearl2009causalitybook}, that is linear and where the data follows
\begin{align}
    \label{eq:causal_model_original}
    \vx = \mB \vx + \ve
\end{align}
and the entries $e_1,\ldots,e_p$ of the random vector $\ve \in \R^p$ are independent noise terms, or \emph{disturbances}.
Causal discovery consists in inferring the parameters $\mB$ of the model, from observations of $\vx$.
Yet, the identifiability of the
model and therefore the uniqueness of $\mB$ is not straightforward.
In fact, it is well-known that with Gaussian disturbances, the model is unidentifiable in general~\citep{richardson2002identifiability,genin2021identifiability}.
A major advance by \citet{shimizu2006linear} was assuming \textit{non-Gaussian} disturbances, leading to their Linear Non-Gaussian Acyclic Model (LiNGAM). 
\paragraph{Identifiability of LiNGAM} 
\citet{shimizu2006linear} showed that the LiNGAM model is identifiable in terms of the matrix $\mB$. %
A rigorous re-statement of this result is given in  Appendix~\ref{app:ssec:identifiability_lingam}.
A further question that has received less attention is under what conditions the model is identifiable in terms of the causal ordering $\mP$. %
In fact, it is not in general: specifically, there may exist many permutation matrices $\mP$ and strictly lower triangular matrices $\mT$ such that the generated data has the same distribution and the generating permutation cannot be identified.
For instance, in the degenerate case $\mB = \mT = \vzero$, any permutation matrix $\mP$ is equally valid and gives the same data distribution; any very sparse $\mB$ leads to several equally valid orderings. %
As is well-known, a DAG in general defines only a \emph{partial} ordering in the sense that for some pairs of variables, we cannot necessarily say which is “earlier” and which is “later”. Thus, if we want to make the causal ordering unique --- \textit{i.e.} to obtain a \emph{total} ordering --- we need further assumptions, as considered in later sections. %

 \paragraph{Estimation of LiNGAM}
Since the LiNGAM model was proposed, two important categories of algorithms for estimating this model have been developed. 
First, algorithms based on \textit{recursively} computing the \textit{residuals} of pairwise regressions include DirectLiNGAM \citep{shimizu2011directlingam} and PairwiseLiNGAM \citep{Hyva13JMLR}. These compare pairs of variables from $\vx$ and deduce what is the causal direction between the two variables based on regressing the variables in the two directions. By aggregating this information, they finally determine a causal ordering $\mP$. Once an ordering is found, the matrix $\mT$ can easily be obtained with conventional methods such as least-squares (LS) regression, which then gives $\mB = \mP^\top \mT \mP$.

The second category is the ICA-based algorithm that was proposed in the original LiNGAM paper~\citep{shimizu2005lingam}. It is based on the observation that LiNGAM model can be rewritten as a latent variable model, in particular an ICA model~\citep{Hyvabook} as
\begin{align}
    \label{eq:ica_model}
    \vx = \mA \vs \;.
\end{align}
Again, the entries in $\vs$ are independent and non-Gaussian, and the “mixing” matrix $\mA$ expresses how the data is generated from the latent variables $\vs$. Many methods developed for ICA can be used to estimate the matrix $\mA$. However, it is important to consider the special structure of the mixing, since $\mA = (\mI - \mB)^{-1}$, where $\mB$ is a DAG~\citep{shimizu2006linear}. 
This method first recovers the adjacency matrix $\mB$, and then estimates a causal ordering $\mP$ from $\mB$.
\section{A Multi-View Model for Causal Discovery} 
\label{sec:model}

We now extend the linear acyclic model in~\eqref{eq:causal_model_original} to the multi-view setting, leading to a Linear Multi-View Acyclic Model (LiMVAM). Suppose we observe $m$ different views of a $p$-dimensional vector $\vx^i$. The most general linear acyclic extension is 
\begin{equation}
\label{eq:multivariate_model}
    \vx^i = \mB^i \vx^i + \ve^i
\end{equation}
for views $i \in [\![1, m]\!]$. The  $\mB^i$ are adjacency matrices of DAGs, and $\ve^i$ are view-specific disturbance vectors (with entries of non-zero variance). Similarly to~\eqref{eq:ica_model}, the LiMVAM model can equivalently be written as a latent-variable model
\begin{align}
    \label{eq:multi_view_ica_model}
    \vx^i = \mA^i \ve^i
\end{align}
where the mixing matrix has a special structure, $\mA^i = (\mI - \mB^i)^{-1}$, which we will exploit later in Section~\ref{sec:estimating_model_using_ICA}. Our model generalizes that by \citet{chen2024individual}, who assumed a specific structure on the influences, as will also be explained in Section~\ref{sec:estimating_model_using_ICA}. 

To exploit the multi-view structure, we follow~\citet{shimizu2012joint, chen2024individual} and assume that all adjacency matrices $\mB^i$ share the same causal ordering.
Equivalently, they admit a decomposition $\mB^i = \mP^\top \mT^i \mP$ where $\mT^i$ is strictly lower triangular and $\mP$ is a permutation matrix that is shared among the views $i$. 
Moreover, as commonly assumed in causal discovery, we require the components of each $\ve^i$ to be mutually independent, for any given $i$. (In contrast, correlations between views $i$ and $i'$ will be seen to be essential for identifiability.)
Contrary to most existing approaches, we do not impose any restriction on the distribution of the disturbances, such as non-Gaussianity, which greatly broadens the applicability of the model. %

The next two sections develop efficient methods for estimating the adjacency matrices $\mB^i$, together with identifiability theory. 
We begin in Section~\ref{sec:estimating_model_using_pairwise_comparisons} with methods that estimate the causal ordering using \pairwisecomparisons and using only second-order statistics, based on the single-view algorithms DirectLiNGAM~\citep{shimizu2011directlingam} and PairwiseLiNGAM~\citep{Hyva13JMLR}. Then, we propose a modified version of the ICA-based multi-view model from~\citet{chen2024individual} in Section~\ref{sec:estimating_model_using_ICA}.

\section{Estimation by SOS and \pairwisecomparisons}
\label{sec:estimating_model_using_pairwise_comparisons}

First, we show how to estimate the model using Second-Order Statistics (SOS) only, and how the SOS alone lead to identifiability. To do so, we will rely on a principle
that we call \textit{\pairwisecomparisonsnospace}. While it has been used by \citet{shimizu2011directlingam,Hyva13JMLR,shimizu2012joint}, we provide here a unified treatment.

\subsection{Causal ordering using \pairwisecomparisons}
\label{ssec:general_principle}

Methods based on what we call \pairwisecomparisons %
are built on the following principle: 1) we can find a root variable (\textit{i.e.} one with no parents) by analyzing the causal directions between all the pairs of variables, and 2) when all other variables are regressed on a root variable, the resulting residuals preserve the causal ordering of the original
variables~\cite[Lemma 2 and Corollary~1]{shimizu2011directlingam}. In the multi-view case, the same principles apply %
since all views are assumed to share a common ordering. A formal proof is provided in Appendix~\ref{app:sssec:proof_of_theorem_part_1} and~\ref{app:sssec:proof_of_theorem_part_2}.

Algorithms based on \pairwisecomparisons therefore follow the same recursive scheme to recover the causal ordering: (i) perform pairwise regressions to obtain residuals, (ii) identify a root variable according to a specific criterion of causal direction computed on the residuals, and (iii) remove the selected root and repeat the procedure on the residuals until all variables are ordered. The main difference between algorithms lies in step (ii), \textit{i.e.} in how the root variable is selected. %

Since a root variable has no parents, we can find one by testing the causal direction for every pair of variables and taking as a root a variable that is never inferred to be an effect. 
A practical way to find the root variable from such tests with a finite sample is described in Appendix~\ref{app:ssec:find_root_from_M}.

Thus, root selection reduces to designing a criterion  of causal direction capable of reliably distinguishing the causal direction between two variables. In the following, we introduce two new such criteria in the multi-view setting.
For simplicity of notation, we consider two random variables aggregated across views: $\vx_1=(x_1^1, \dots, x_1^m)^\top$ and $\vx_2=(x_2^1, \dots, x_2^m)^\top$; 
either $\vx_1$ causes $\vx_2$, or the converse.
Covariance matrices are denoted by $\mSigma_{\vx_1} = \E[\vx_1 \vx_1^\top]$, %
and likewise for $\vx_2$, $\ve_1$ and~$\ve_2$. 

\subsection{Likelihood-based criterion: PairwiseLiMVAM}
\label{ssec:likelihood_based_criterion}

We first derive a criterion that extends the PairwiseLiNGAM criterion of \citet{Hyva13JMLR} to the multi-view setting. 
Assuming Gaussian disturbances, we compute the expected log-likelihoods of two models: $\vx_1 \rightarrow \vx_2$ and $\vx_2 \rightarrow \vx_1$ (where an arrow denotes a directed edge between two adjacent variables in the causal graph). 
This leads to the following criterion, which is the logarithm of the maximum likelihood ratio (LR) between the two models:
\begin{equation}
\label{eq:criterion}
    \mathrm{LR}_{12}
    = 
    - \log \det \mSigma_{\vx_1}
    - \log \det \mSigma_{\ve_2}
    + \log \det \mSigma_{\vx_2}
    + \log \det \mSigma_{\ve_1} \enspace.
\end{equation}

We prove in Appendix~\ref{app:sec:theory_bivariate_model} that, under very weak constraints on the correlations of variables and views, 
this criterion is positive when $\vx_1 \rightarrow \vx_2$, and negative in the opposite direction, \emph{no matter whether the disturbances are Gaussian or non-Gaussian}. Importantly, this makes the causal direction --- and thus the entire causal ordering --- identifiable under very general circumstances.
So, although we developed this criterion for Gaussian disturbances, we emphasize that the resulting method is consistent regardless of the Gaussianity of the disturbances.

The necessary and sufficient conditions for the consistency of this criterion are given in Appendix~\ref{app:ssec:conditions_for_strict_positivity_of_likelihood_based_criterion};  a sufficient condition, assuming for simplicity that $\vx_1$ is the cause, is that two views $i,i'$ exist such that
 $ |\text{corr}(e_2^{i},e_2^{i'})| \ne |\text{corr}(x_1^{i},x_1^{i'})|$, while $  \text{corr}(x_1^i,e_2^i)\neq 0 $ or $\text{corr}(x_1^{i'},e_2^{i'})\neq 0$. %
This essentially says that the views need to be correlated and the correlations need to be diverse, and will be explained in more detail below.
In practice, we replace the covariance matrices $\mSigma_{\vx_1}$, $\mSigma_{\ve_1}$, $\mSigma_{\vx_2}$, and $\mSigma_{\ve_2}$ with consistent estimators (see Appendix~\ref{app:ssec:implementation_of_likelihood_based_criterion}); we use Ordinary LS to compute the residuals throughout this paper.
Thus, we obtain a consistent estimator of the true direction.

\subsection{Cross-covariance-based criterion: DirectLiMVAM}
\label{ssec:covariance_based_criterion}

Our next criterion is a new multi-view extension of the DirectLiNGAM criterion~\citep{shimizu2011directlingam}. %
In a single-view setting, \citet{shimizu2011directlingam} relied on the fact that $x_1$ and $e_2$ are \textit{independent} if the causal direction is $x_1 \rightarrow x_2$, %
and measured the independence %
using a kernel-based estimator of mutual information called KGV~\citep{bach2002kernel}. A multi-view extension was actually proposed by~\citet{shimizu2012joint}, where the correct direction was chosen by comparing the \emph{sums over views} of these scores for the two directions.
However, such %
summation over views does not fully  take advantage of the multi-view structure. Thus, the disturbances still needed to be non-Gaussian. %

Here, we efficiently exploit the multi-view framework while only looking at SOS. %
We choose to evaluate the \emph{cross-covariance} between the \emph{entire} vectors $\vx_1 = (x_1^1, \dots, x_1^m)^\top$ and $\ve_2 = (e_2^1, \dots, e_2^m)^\top$
for the direction $\vx_1 \rightarrow \vx_2$, and between $\vx_2$ and $\ve_1$ for the direction $\vx_2 \rightarrow \vx_1$. 
Specifically, the correct direction is chosen by comparing the Frobenius norms of the cross-covariances between regressors and residuals.
Thus, we obtain the following criterion:
\begin{equation}
\label{eq:criterion_covariance}
    \mathrm{FC}_{12}
    =
    \left\Vert \E \big[\vx_2 \ve_1^\top \big] \right\Vert_F
    -
    \left\Vert \E \big[ \vx_1 \ve_2^\top \big] \right\Vert_F 
    \enspace.
\end{equation}
We prove in Appendix~\ref{app:sec:cross_covariance_based_criterion} that this criterion consistently finds the correct direction under the same conditions as the likelihood-based criterion, again \emph{no matter whether the disturbances are Gaussian or non-Gaussian}. 

\subsection{Estimating the causal coefficients}

Applying the recursive scheme explained in Section~\ref{ssec:general_principle},
and using the criteria from~\eqref{eq:criterion} or~\eqref{eq:criterion_covariance}, we obtain a causal ordering encoded by $\mP$.
Importantly, since the ordering is now known, the causal matrices $\mB^i$ can be uniquely recovered with little effort.
In fact, the model becomes 
\begin{equation}
    \vx'^i = \mT^i \vx'^i + \ve'^i \;, \qquad i \in [\![1, m]\!]
\end{equation}
where $\vx'^i = \mP \vx^i$ are the reordered observations and $\ve'^i = \mP \ve^i$ are the reordered disturbances. By construction, each variable $x'^i_j$ only depends on its predecessors $x'^i_1, \dots, x'^i_{j-1}$.
For each $j \in [\![2, p]\!]$, we estimate the $j$-th row of all matrices $\mT^i$ jointly using one-step Feasible Generalized Least Squares (FGLS)~\cite{zellner1962efficient}, which is more efficient than Ordinary LS. A detailed description of one-step FGLS is provided in Appendix \ref{app:ssec:one_step_FGLS}.
 Under the assumption that disturbances are mutually independent within each view, FGLS yields a unique regression for every variable on its parents.  Finally, the adjacency matrices are recovered as $\mB^i = \mP^\top \mT^i \mP$. 

The overall procedure is summarized in Algorithm~\ref{alg:pairwise} in the Appendix; it notably requires no hyperparameter tuning.
Its worst-case \textit{computational complexity} is in $O(m^3 \cdot p^3 \cdot n)$, which we show in Appendix~\ref{app:ssec:computational_complexity_pairwise_direct_limvam}. In practice, these algorithms are highly parallelizable as they rely on basic algebraic operations (\textit{e.g.} multiplying pairs of matrices), and fast as they do not require numerical optimization of an objective function. 

\subsection{Identifiability using second-order statistics}
\label{sec:identifiability_sos}
We can build an identifiability theory based on the conditions required by the criteria $\mathrm{LR}$ and $\mathrm{FC}$ (given in Appendices~\ref{app:sec:theory_bivariate_model} and~\ref{app:sec:cross_covariance_based_criterion}). 
We start by formulating the following central assumption, which will be seen to be a sufficient condition for identifiability. %
Even milder sufficient conditions are discussed in Appendix~\ref{app:ssec:multivariate_conditions_strict_positivity} but they are harder to interpret. 
\begin{assumption}[Correlation and diversity across views]
    \label{assumption:simple_diversity_condition}
    For any two variables $\vx_j$ and $\vx_{j'}$ 
    such that $\vx_j \rightarrow \vx_{j'}$ (in the sense that $\mB_{j'j}^i\neq 0$ for at least one $i$), %
    there exist two distinct views $i \ne i'$ such that a) those variables %
    are correlated in at least one of those views,
     $   \mathrm{corr}(x_j^i, x_{j'}^i) \ne 0
        \text{ or } %
        \mathrm{corr}(x_j^{i'}, x_{j'}^{i'}) \ne 0,
   $ %
    and b) their correlations fulfill the “diversity” condition:
    \begin{equation}
    \label{eq:simple_condition}
        |\mathrm{corr}(x_j^i, x_j^{i'})|
        \ne
        |\mathrm{corr}(e_{j'}^i, e_{j'}^{i'})| \enspace.
    \end{equation}
\end{assumption}
To better understand this assumption, note first that it requires some \emph{correlation across views}: it rules out the trivial case where all cross-view correlations are zero. Moreover, it demands some \emph{diversity} in these correlations. Consider the extreme situation where, for standardized variables, $x^i_j = \alpha x^{i'}_j$ and $x^i_{j'} = \alpha x^{i'}_{j'}$ (and analogously for the residuals), for some scalar $\alpha$. In this case, all correlations equal $1$, so condition~\eqref{eq:simple_condition} is violated. Appendix~\ref{app:sec:condition_intepretation} provides more realistic examples of violations. We next introduce an additional assumption that ensures identifiability of the total causal ordering.

\begin{assumption}[Dense connectivity of the graph when pooled across views]
\label{assumption:dense_connectivity_sos}
Let $\cG$ denote the union graph of the $m$ views, obtained by taking the union of their edge sets, and assume that the ordering induced by $\cG$ is \emph{total}, i.e. there exists a directed path between any two variables. 
\end{assumption}
With these assumptions, we prove in Appendix~\ref{app:ssec:proof_of_thm1} the following identifiability result
\begin{theorem}[Identifiability of the ordering and adjacency matrices $\mB^i$]
\label{thm:identifiability_causal_matrices_pairwise}
    Assume the data follows the model in~\eqref{eq:multivariate_model}, and that Assumption %
    \ref{assumption:simple_diversity_condition} holds.
    Then, causal (partial) ordering and causal coefficients $\mB^i$ are identifiable. If we further make Assumption~\ref{assumption:dense_connectivity_sos}, then $\mP$ is identifiable as well.
\end{theorem}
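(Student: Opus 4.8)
The plan is to derive identifiability at the population level from three ingredients that are already in place: the consistency of the pairwise criteria of Sections~\ref{ssec:likelihood_based_criterion}--\ref{ssec:covariance_based_criterion}, the recursive-residuals principle of Section~\ref{ssec:general_principle}, and the uniqueness of regression once an ordering is fixed. The logic is that if the population version of the estimation procedure returns a unique answer, the corresponding parameters are identifiable.

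First I would establish that, under Assumption~\ref{assumption:simple_diversity_condition}, for every adjacent pair $\vx_j \to \vx_{j'}$ the criterion $\mathrm{LR}_{jj'}$ (or equivalently $\mathrm{FC}_{jj'}$) is strictly signed and reveals the correct direction: this is exactly the content of the bivariate consistency results of Appendices~\ref{app:sec:theory_bivariate_model} and~\ref{app:sec:cross_covariance_based_criterion}, whose hypotheses Assumption~\ref{assumption:simple_diversity_condition} is designed to supply. It then follows that a true root variable---having no parents---is never inferred to be an effect in any of its adjacent comparisons, so the root-selection rule returns a genuine root. Next I would run the recursion: by the recursive-residuals principle, such a root is detectable from pairwise comparisons, and regressing the remaining variables on the chosen root yields residuals that again obey a LiMVAM with the \emph{same} shared ordering on the reduced variable set. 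Inducting on $p$, repeated root extraction and deflation recovers a complete valid causal (partial) ordering from the distribution alone, so the partial ordering is identifiable.

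I expect the main obstacle to be checking that Assumption~\ref{assumption:simple_diversity_condition} is \emph{inherited} by the residual model at each stage, so that the pairwise criteria remain consistent throughout the recursion. Concretely, one must show that the correlation-and-diversity condition on each remaining edge survives the linear deflation step; this is the only nonroutine part, since the bivariate consistency itself is already established in the appendices and everything else is bookkeeping over the recursion.

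Given any valid ordering encoded by $\mP$, I would reorder to $\vx'^i = \mT^i \vx'^i + \ve'^i$ with strictly lower-triangular $\mT^i$ and regress each reordered variable on all its predecessors. Because the within-view disturbances are mutually independent, $e'^i_j$ is uncorrelated with every predecessor (each predecessor being a function of $e'^i_1,\dots,e'^i_{j-1}$ only), so the normal equations have a unique minimizer and OLS/FGLS recovers $\mT^i$ exactly; hence $\mB^i = \mP^\top \mT^i \mP$ is uniquely determined. Since any two valid orderings are linear extensions of the same reachability order and thus differ only by permutations within antichains---which assign zero coefficients to non-parents and leave each $\mB^i$ unchanged---the coefficients $\mB^i$ are identifiable regardless of which consistent ordering is selected. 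Finally, under Assumption~\ref{assumption:dense_connectivity_sos} the union graph $\cG$ contains a directed path between every pair of variables, so its induced order is total; this removes all antichains and forces the recovered partial ordering to collapse to a single total ordering, whence $\mP$ itself is identified.
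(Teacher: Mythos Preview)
Your proposal is correct and follows essentially the same three-step architecture as the paper's proof in Appendix~\ref{app:ssec:proof_of_thm1}: detect a root via the pairwise criteria under the diversity assumption, invoke the recursive-residuals lemma (Lemma~\ref{app:lemma:residuals_follow_limvam}) to deflate while preserving the LiMVAM structure and ordering, and then recover each $\mT^i$ by regression once an ordering is fixed. Your flagging of the inheritance of Assumption~\ref{assumption:simple_diversity_condition} under deflation as the ``only nonroutine part'' is apt---the paper's own proof treats this step at the same level of informality, simply re-invoking the assumption at each recursive stage without explicitly verifying that the correlation-and-diversity condition, stated for the original $x_j$, carries over to the residuals.
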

The identifiability of the \textit{partial} ordering does not necessarily mean that the permutation matrix $\mP$ can be recovered uniquely. If the ordering is only partial (for example, in a three-nodes graph: $\vx_1 \rightarrow \vx_2$ and $\vx_1 \rightarrow \vx_3$, but there is no relation between $\vx_2$ and $\vx_3$, so both orderings $1 \rightarrow 2 \rightarrow 3$ and $1 \rightarrow 3 \rightarrow 2$ are valid), then $\mP$ is only identifiable up to a class of permutations that induce the same partial ordering. The above theorem shows a case where $\mP$ can be (uniquely) identified.

\section{Estimation by Non-Gaussianity assuming shared disturbances}
\label{sec:estimating_model_using_ICA}

Next, we provide an alternative approach to the LiMVAM model definition and estimation. %
 We utilize the non-Gaussianity of the disturbances to obtain identifiability conditions that are different from the previous section. However, we still do not impose non-Gaussianity as a necessary condition.

\paragraph{Model definition} The approach here is to propose a particular model for the disturbances by supposing that they can be additively composed into two terms: one term is view-specific, while the other is shared across views and therefore makes the views dependent. Specifically: 
\begin{assumption}[Shared disturbances] \label{assump:disturbance_decomposition}
The disturbances can be decomposed as
\begin{align}
    \label{eq:disturbance_decomposition}
    \ve^i = \mD^i \vs + \vn^i
\end{align}
where the 
 $\mD^i$ are diagonal matrices with positive entries on the diagonal, the $\vs$ has mutually independent entries and second-order moment $\E[\vs \vs^\top] = \mI_p$, and the view-specific noises are Gaussian $\vn^i \sim \cN(\vzero, \mSigma^i)$ and depend on the view $i$ via the second-order moments $\mSigma^i$ that are diagonal matrices. Lastly, the vectors $\vs$ and $\vn^i,i=1,\ldots,m$, are assumed to be mutually independent. 
\end{assumption}

This Assumption and \eqref{eq:disturbance_decomposition} are related to~\citet{chen2024individual} who where originally inspired by the multi-view 
Shared ICA of~\citet{richard2021sharedica}. Specifically, 
\citet{chen2024individual} considered the special case  where the matrices $\mD^i = \mI_p$, so that there are the same disturbances~$\vs$ over views. 
We argue here that their model was too restrictive and unrealistic as it imposes an arbitrary scaling for the data variables. Consider the root variable of the DAG, denoting it by $x_i$. Its disturbance is $s_i+n_i$ which has variance of at least one since the variance of $s_i$ is defined as one. Thus, the root variable has variance of at least one which is absurd: the model should be invariant to the scaling of the variables. Likewise, suppose we
  create a new data set by dividing just one view, say $\vx^1$ by, say $10$, giving new data $\vx'^1$. 
This new data does not follow the model by \citet{chen2024individual} anymore since $\vs$ should be rescaled to $\vs/10$ for just this view; now the $\vs/10$ here is different from the $\vs$ in other views. Such problems motivated us to develop a model where the scaling terms $\mD^i$ are new parameters, and thus to the definition in Eq.~\ref{eq:disturbance_decomposition}.

\paragraph{Identifiability}
Now we proceed to show the identifiability of the model based on Assumption~\ref{assump:disturbance_decomposition}. 
The starting point is that our model can be rewritten as a multi-view ICA model as in \eqref{eq:multi_view_ica_model}, just like in the case of LiNGAM.
Thus, the methods developed for multi-view ICA can be used to %
analyze its identifiability.
We first discuss the assumptions used for our theorems. %
\begin{assumption}[Diversity of the views with Gaussian disturbances]
    \label{assump:noise_diversity}
    If $j$ and $j'$, $j \neq j'$, are the indices of two Gaussian common disturbances in $\vs$,     then the number of views is at least three, and there exists a view $i$ such that $\frac{\mSigma^i_{jj}}{(\mD^i_{jj})^2} \neq \frac{\mSigma^i_{j'j'}}{(\mD^i_{j'j'})^2}$.
\end{assumption}
This assumption, inspired by~\citet{richard2021sharedica}, is trivially fulfilled if all disturbances are non-Gaussian. For Gaussian disturbances, it essentially states that the views should be diverse in terms of disturbances' SOS. This follows from the theory of Shared ICA which does not require the common disturbances $\vs$ to be non-Gaussian if there is enough diversity~\citep{richard2021sharedica,anderson2013multiviewidentifiability}.
The next assumption assumes that the union graph of the $\mB^i$ is dense enough across views.
\begin{assumption}[Dense connectivity of the graph when pooled across views]
\label{assump:total_order_shared_P}
    There exists a permutation $\bar{\mP}$ such that the matrices $\bar{\mT}^i = \bar{\mP} \mB^i \bar{\mP}^\top$ are strictly lower triangular, and for each entry $(j, k)$ with $j > k$, there is at least one view $i$ such that $\bar{\mT}^i_{j, k} \neq 0$.
\end{assumption}
Note that this assumption is quite weak as the $\mT^i$ can be very sparse. This is especially the case when the number of views is large, since only one non-zero entry is required over all views.

We can now state our main identifiability result for this variant of the model. %
\begin{theorem}(Identifiability with shared disturbances)          
\label{theorem:identifiability_B_tilde_i}
      Assume the LiMVAM model in Eq.~(\ref{eq:multivariate_model} together with Assumption~\ref{assump:disturbance_decomposition}. %
      Under Assumption~\ref{assump:noise_diversity},
    the model is identifiable in the sense that the parameters $\mB^i$, $\mD^i$, and $\mSigma^i$ are identifiable. If we further make Assumption~\ref{assump:total_order_shared_P},  $\mP$ is identifiable as well. 
\end{theorem}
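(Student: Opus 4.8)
The plan is to reduce the model to a multi-view ICA problem and then peel off the causal structure. Substituting Assumption~\ref{assump:disturbance_decomposition} into \eqref{eq:multi_view_ica_model} gives
\[
\vx^i = \mA^i \ve^i = \mM^i \big(\vs + \tilde\vn^i\big), \qquad \mM^i := \mA^i \mD^i = (\mI - \mB^i)^{-1}\mD^i,
\]
where $\tilde\vn^i := (\mD^i)^{-1}\vn^i$ is Gaussian with diagonal covariance $\tilde\mSigma^i$, $\tilde\mSigma^i_{jj} = \mSigma^i_{jj}/(\mD^i_{jj})^2$. This is exactly a multi-view ICA model with shared unit-variance independent sources $\vs$, view-specific invertible mixing $\mM^i$, and view-specific additive Gaussian noise whose per-source noise-to-signal ratios are precisely the quantities appearing in Assumption~\ref{assump:noise_diversity}. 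First I would establish identifiability of the $\mM^i$, then use the DAG structure and the positivity of $\mD^i$ to disentangle $\mB^i$ and $\mD^i$, and finally recover $\mSigma^i$ and the ordering.

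For the ICA step I would invoke the multi-view / Shared ICA identifiability theory of \citet{richard2021sharedica}, in the tradition of \citet{anderson2013multiviewidentifiability}. For non-Gaussian sources, each $\mM^i$ is identifiable up to a common column permutation and per-source sign, the scale being fixed by $\E[\vs\vs^\top] = \mI$. For Gaussian sources, higher-order information vanishes and one must rely only on the second-order statistics $\E[\vx^i(\vx^{i'})^\top] = \mM^i(\mM^{i'})^\top$ for $i\neq i'$ and $\E[\vx^i(\vx^i)^\top] = \mM^i(\mI + \tilde\mSigma^i)(\mM^i)^\top$; here Assumption~\ref{assump:noise_diversity} is exactly the non-degeneracy that makes the simultaneous diagonalization unique, since two Gaussian sources sharing the same ratio across all views could be mixed by an arbitrary rotation, and the requirement of three views with a distinguishing ratio rules this out. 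The output is the collection $\{\mM^i\}$ up to one shared signed permutation $\mPi\mLambda$ of the sources, together with the noise covariances.

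Then I would resolve the remaining ambiguities and read off the parameters. The sign is fixed by positivity of $\mD^i$: since $(\mI-\mB^i)^{-1}$ has unit diagonal on a DAG, the diagonal of $\mM^i$ equals $\mD^i_{jj}>0$. The source permutation is forced by the DAG structure: a wrong source-to-variable assignment would place on the diagonal of $\mM^i$ (or of its inverse) an entry corresponding to an \emph{absent} causal effect, hence a zero, contradicting positivity; so the assignment is unique and the $\mM^i$ are correctly labelled. Writing $(\mM^i)^{-1} = (\mD^i)^{-1}(\mI-\mB^i)$, its diagonal yields $\mD^i$ and its off-diagonal yields $\mB^i = \mI - \mD^i(\mM^i)^{-1}$, so $\mB^i$ and $\mD^i$ are identified as labelled matrices. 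Finally $\mSigma^i$ is recovered from $\mA^i\mSigma^i(\mA^i)^\top = \E[\vx^i(\vx^i)^\top] - \mM^i(\mM^i)^\top$, inverted through the now-known $\mA^i=(\mI-\mB^i)^{-1}$; the cross-view covariances are what separate this noise part from the signal part $\mM^i(\mM^i)^\top$. This proves the first claim.

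The only freedom left concerns the total ordering: the identified $\mB^i$ determine a partial order, and any linear extension gives a valid $\mP$. Under Assumption~\ref{assump:total_order_shared_P} the pooled union graph connects every pair of variables by a directed path, so the partial order is in fact total and $\mP$ is unique, giving the last claim. The step I expect to be the main obstacle is the Gaussian-source case of the ICA identifiability with view-specific mixing: showing that Assumption~\ref{assump:noise_diversity} is exactly the condition needed and that the residual ambiguity is precisely a shared signed permutation, with no leftover orthogonal mixing among Gaussian sources of equal noise-to-signal profile, while checking that the view-specific, non-white observation noise $\mA^i\vn^i$ does not obstruct the simultaneous-diagonalization argument. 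Once the $\mM^i$ are pinned down, the triangular read-off of $\mB^i,\mD^i$ and the recovery of $\mSigma^i$ are essentially bookkeeping.
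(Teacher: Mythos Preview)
Your approach is essentially the paper's: cast the model as Shared ICA, invoke \citet{richard2021sharedica} under Assumption~\ref{assump:noise_diversity} to recover each mixing $\mM^i$ up to a common sign-permutation, then use the DAG-plus-positive-$\mD^i$ structure to eliminate that indeterminacy and read off $\mB^i,\mD^i,\mSigma^i$; the paper packages the elimination step as a standalone technical lemma (Lemma~\ref{lemma:identifiability_lingam}) and the $\mP$-uniqueness as Lemma~\ref{app:lemma:Bi_fully_connected} applied to the union matrix $\sum_i\operatorname{abs}(\mT^i)$. Your one-line justification that a wrong permutation ``places on the diagonal an entry corresponding to an absent causal effect, hence a zero'' is the right intuition but not literally correct as stated --- an off-diagonal entry of $(\mI-\mB^i)^{-1}$ or of $\mI-\mB^i$ need not vanish --- and the actual contradiction is that positive diagonals along an entire permutation cycle would force edges $j_1\to j_2\to\cdots\to j_k\to j_1$ in the DAG, which is precisely what the paper's lemma establishes via a chain/cycle argument on the lower-triangular structure.
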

The proof of the theorem is given in Appendix~\ref{app:sec:identifiability_limvam_shared_disturbances}. 
We thus see that for non-Gaussian disturbances, identifiability is achieved with weak conditions; in fact, no special conditions on the covariances are needed. Still, the theory of Shared ICA leads to identifiability even for Gaussian disturbances, but with stronger conditions than obtained in Section~\ref{sec:estimating_model_using_pairwise_comparisons}. The utility of the Shared ICA theory lies also in the fact that it allows us to construct an algorithm very easily.

\paragraph{Estimation} The model 
can be estimated by a combination of the ICA-based estimation procedure by~\citet{shimizu2006linear}, combined with the Shared ICA methods by \citet{richard2021sharedica}.  We call it ICA-LiMVAM and explain it in detail in Algorithm~\ref{alg:ica_limvam}. Surprisingly, the resulting algorithm gives estimates of $\mB_i$ that are identical to \citet{chen2024individual}, even though our model is more general. In contrast, our algorithm estimates different noise variances and scalings $\mD_i$. 
The worst-case \textit{computational complexity} is in $O(tnmp^3 + mp^5)$, for $t$ iterations of the Shared ICA (ML) algorithm, $n$ samples, $m$ views and $p$ components. We detail this in Appendix~\ref{app:ssec:computational_complexity_ica_limvam}.

\section{Experiments}
\label{sec:experiments}

\subsection{Simulations}
\label{ssec:simulations}

\begin{figure*}[!t]
    \centerline{\includegraphics[width=0.98\linewidth]{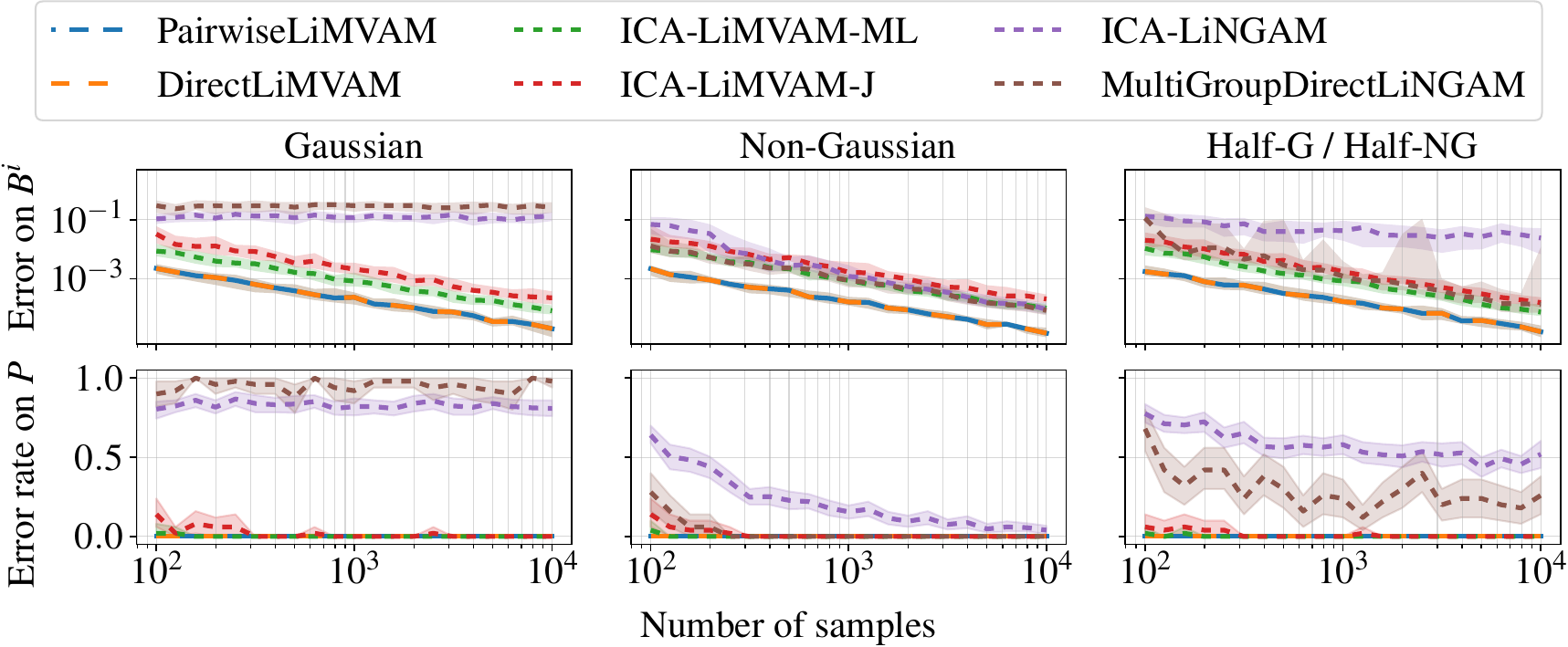}}
    \caption{
    Separation performance of three \pairwisecomparisons algorithms and three ICA-based algorithms. We assume shared disturbances as in~\eqref{eq:disturbance_decomposition}. We used 50 different seeds. Lower is better.
    }
    \label{fig:simulation_study}
    \vspace{-0.2cm}
\end{figure*}

\begin{wrapfigure}{r}{0.5\textwidth}
    \vspace{-2cm}
    \centering
    \includegraphics[width=0.5\textwidth]{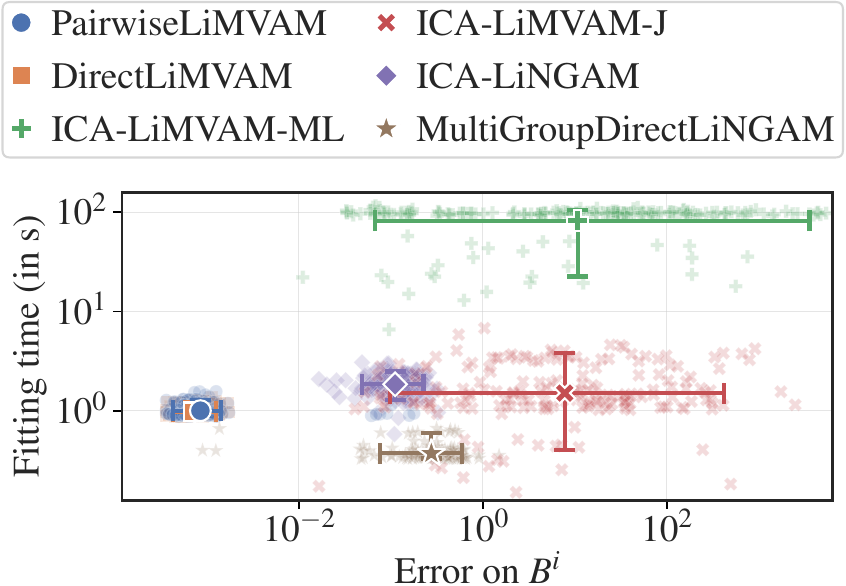}
    \caption{Estimation error of the adjacency matrices $\mB^i$ versus total fitting time for each method, across 200 random runs with $m = 6$ views, $p = 5$ variables, and $n = 1000$ samples. Disturbances are drawn from a standard Gaussian. Note that the point clouds for PairwiseLiMVAM and DirectLiMVAM largely overlap.}
    \label{fig:execution_time}
\end{wrapfigure}

We benchmark a total of six multi-view causal discovery algorithms on synthetic data. Three of these algorithms are based on \pairwisecomparisonsnospace: our PairwiseLiMVAM and DirectLiMVAM, as well as MultiGroupDirectLiNGAM from~\citet{shimizu2012joint}. The other three algorithms are ICA-based: ICA-LiMVAM which is essentially the same as~\citet{chen2024individual}, as well as ICA-LiNGAM~\citep{shimizu2006linear} which is naively applied to each view, separately. Note that ICA-LiMVAM comes in two variants, each using a different Shared ICA algorithm: either Shared ICA “ML”, a likelihood-based method that can handle both Gaussian and non-Gaussian disturbances, or Shared ICA “J”, which jointly diagonalizes covariance matrices without using non-Gaussianity.

The first synthetic experiment is inspired by~\citet{richard2021sharedica}. We monitor the performance of causal discovery algorithms across varying sample sizes and disturbance distributions. %
More specifically, the data are generated according to the LiMVAM model with shared disturbances from~\eqref{eq:disturbance_decomposition}. The performance of the causal discovery algorithms is measured by the $\ell_2$ distance between true and estimated adjacency matrices $\mB^i$, and the percentage of incorrectly estimated causal orders $\mP$ over $50$ random runs.

Results of the first synthetic experiment are plotted in Figure~\ref{fig:simulation_study}. Our algorithms PairwiseLiMVAM and DirectLiMVAM consistently outperform all others by a significant margin. 
ICA-LiMVAM yields reliable estimates 
but with larger errors.
In contrast, MultiGroupDirectLiNGAM and ICA-LiNGAM, which are designed for non-Gaussian disturbances, fail entirely in the Gaussian setting, as expected.
Note that the high error rate of ICA-LiNGAM in recovering the causal ordering $\mP$ is likely explained by its inability to exploit that the ordering is shared across views. 

For the second synthetic experiment, we report the trade-off between computational complexity and estimation error of the adjacency matrices. We use a different data generation model, described by~\eqref{eq:multivariate_model} where the disturbances are Gaussian with equal variances. This breaks the assumptions for all methods but our PairwiseLiMVAM and DirectLiMVAM. As expected, 
we see in Figure~\ref{fig:execution_time} that our algorithms PairwiseLiMVAM and DirectLiMVAM provide a major reduction in estimation errors, at  little or no computational cost.
In contrast, the four other algorithms struggle with this data. 

In Appendix~\ref{app:sec:experiments}, we include further experiments that test Assumptions \ref{assump:noise_diversity} and \ref{assump:total_order_shared_P}, as well as
scalability with more views and variables. In particular,  we find that DirectLiMVAM outperforms PairwiseLiMVAM in some cases, which is less obvious from Figures~\ref{fig:simulation_study} and~\ref{fig:execution_time} alone.

\subsection{Real data experiments with MEG}
\label{ssec:real_data_experiments}

Next, we performed experiments on magnetoencephalography (MEG) data measuring human brain activity.
MEG data has a high temporal resolution, so typically methods related to Granger causality are used. However, we are here interested in analyzing the \emph{energies} %
of \emph{oscillatory} signals. %
The energies change very slowly (on the time-scale of seconds), while the underlying brain activity being measured is likely to change much more quickly. Thus, it is appropriate to use a model of instantaneous causality as in our model. %

We used the Cam-CAN dataset, which is the largest publicly available MEG dataset~\citep{shafto2014camcan,taylor2017camcan}, and considered the “sensorimotor task” during which each participant had to respond with a right index finger button press to auditory/visual stimuli. Further details are available in Appendix~\ref{app:sssec:preprocessing_meg}.
We applied PairwiseLiMVAM to these data, each of the 98 participants being one view. %
The experiment ran in 2 hours and 20 minutes using 5 CPUs.
Although our model does not require or assume any similarity across the matrices $\mT^i$, one may reasonably hypothesize that the participants' brains share structural patterns. %
So, to facilitate population-level interpretation, we computed the element-wise median of the $\mB^i$ matrices.

\begin{figure*}[htbp]
    \centering
    \begin{subfigure}[b]{0.5\textwidth}
        \centering
        \includegraphics[width=\textwidth]{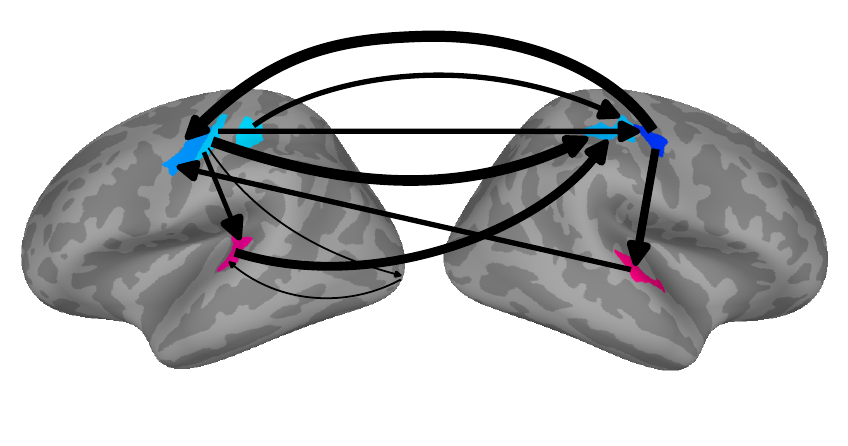}
        \caption{}
        \label{fig:brain_pairwise}
    \end{subfigure}
    \hfill
    \begin{subfigure}[b]{0.24\textwidth}
        \centering
        \includegraphics[width=\textwidth]{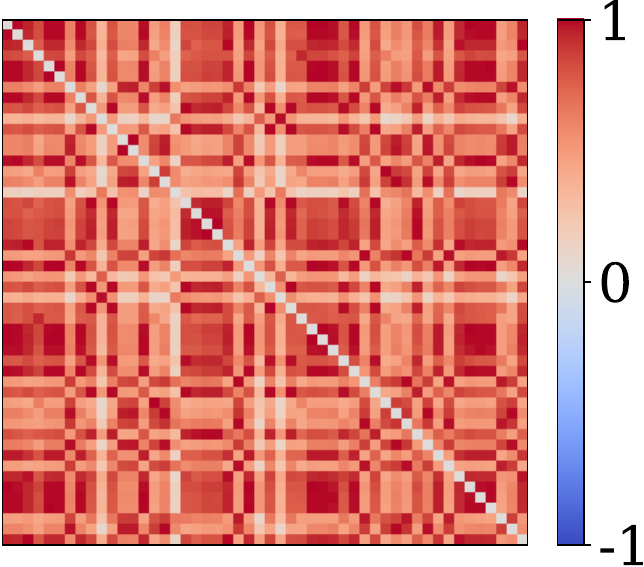}
        \caption{}
        \label{fig:pearson_coefs_B}
    \end{subfigure}
    \hfill
    \begin{subfigure}[b]{0.16\textwidth}
        \centering
        \includegraphics[width=\textwidth]{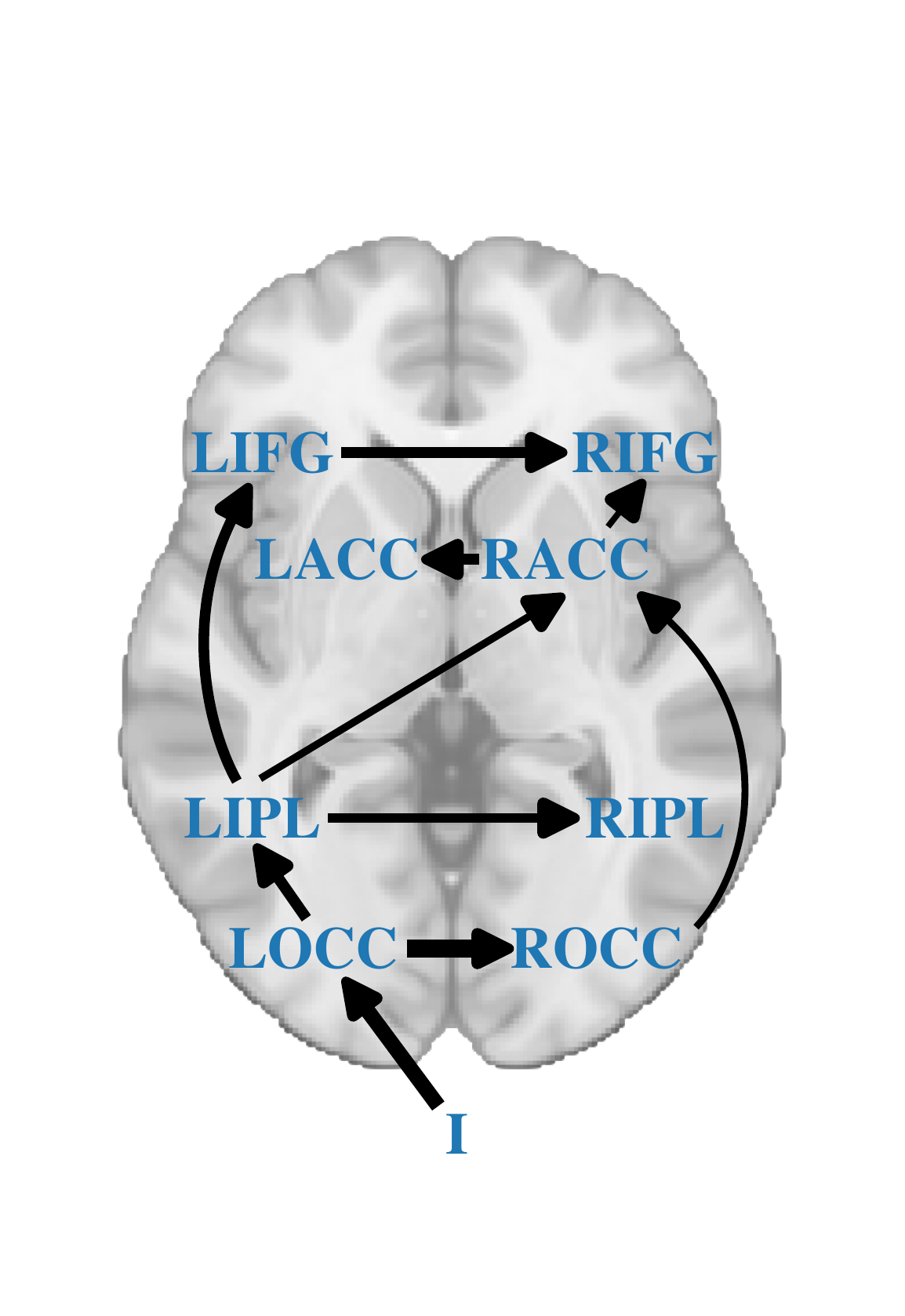}
        \caption{}
        \label{fig:fMRI_drawing}
    \end{subfigure}
    \caption{(a) Top ten strongest median causal effects (estimated by PairwiseLiMVAM) across 98 Cam-CAN (MEG) participants. 
    (b) Pearson correlations between median causal  matrices across 50 runs in MEG. %
    (c) Top ten strongest median causal effects from PairwiseLiMVAM across 9 fMRI participants. Arrows indicate direction and strength of effects, with width proportional to magnitude.}
    \label{fig:real_data_experiment}
\end{figure*}

Fig.~\ref{fig:brain_pairwise} displays the ten strongest (in absolute value) median causal effects. 
Notably, many causal connections are between homologous regions across hemispheres --- particularly within the primary motor and somatosensory cortices --- consistent with prior findings that symmetric brain regions exhibit strong inter-hemispheric correlations \cite{Li651}. Additionally, numerous arrows involve the postcentral regions, %
which have been implicated in fine motor control and hand-related tasks \cite{braun2002functional}.
To evaluate the robustness of the results, we repeated the experiment 50 times, each time randomly selecting 30 participants from the full cohort.
We assessed consistency by measuring the Pearson correlation between the resulting median matrices across runs.
As shown in Figure~\ref{fig:pearson_coefs_B}, the median effects are highly correlated across different subsets (average correlation: 0.67), demonstrating the stability of our method.
Additional experiments %
using ICA-LiMVAM-ML are reported in Appendix~\ref{app:sssec:MEG_using_ICSL}.

\subsection{Real data experiments with fMRI}
\label{sssec:fMRI_experiment}

We also evaluated PairwiseLiMVAM on an fMRI rhyming judgment task dataset~\citep{ramsey2010six}. 
The data consist of recordings from nine participants, each represented by nine variables: one task regressor (Input I), obtained by convolving the boxcar design of the rhyming task with a canonical hemodynamic response function, and eight regional time series from bilateral regions of interest. %
The dataset was acquired on a 3T scanner with a repetition time of 2 s, yielding 160 samples per subject, and is available via the OpenfMRI project (preprocessed version from~\footnote{\url{https://github.com/cabal-cmu/Feedback-Discovery}}). 

We report the ten strongest effects from the element-wise median of the estimated adjacency matrices in Figure~\ref{fig:fMRI_drawing}.
The recovered structure is consistent with established findings for this task. 
In particular, we find a strong edge from the task regressor to the left occipital cortex (Input $\to$ LOCC), reflecting the expected visual drive of the stimuli, and prominent connections from left to right homologous regions (\textit{e.g.} LOCC $\to$ ROCC, LIFG $\to$ RIFG, LIPL $\to$ RIPL), in line with the left‐hemisphere dominance and inter‐hemispheric influences reported by~\citet{ramsey2010six}. 
Edges such as LOCC $\to$ LACC and connections between LACC and RACC also match relationships highlighted in their analyses. 

\section{Discussion and Related work}
\label{sec:discussion}

A multi-view causal model based on LiNGAM was considered by \citet{shimizu2012joint}, but their model was very different. %
The fundamental difference is that we assume the disturbances have some shared information across views and can be Gaussian, whereas \citet{shimizu2012joint} estimates disturbances that are independent across views and necessarily non-Gaussian. Thus, they only showed how to use the multiple views to improve the estimation while still assuming non-Gaussianity of the disturbances.

\citet{chen2024individual} considered multi-view SEM using a special case of our shared disturbances model, which is in its turn a special case of our general LiMVAM model. %
Algorithmically, there is little difference between their work and our ICA-LiMVAM. However, their identifiability proofs are not available nor did they consider the identifiability of the causal ordering, while we do both.
Moreover, as we already argued in Section~\ref{sec:estimating_model_using_ICA}, their model formulation has the serious flaw that it forces an arbitrary scaling of the data variables. %
Crucially, we propose new algorithms, PairwiseLiMVAM and DirectLiMVAM, that use only SOS; they are shown to be empirically superior to the ICA-LiMVAM in most cases. 

A related very general framework, allowing for various kinds of interventions, was proposed by \citet{mooij2020joint}. On the other hand, a related ``multi-context" approach, where the disturbances are not necessarily related at all while the DAGs are, was proposed by \citet{sturma2023unpaired}.

\section{Conclusion}
We propose a new approach for causal discovery in the multi-view linear-DAG case, improving the earlier work by \citet{shimizu2012joint,chen2024individual}. We show that the model is fully identifiable by second-order statistics (covariance structure) only, and under much weaker conditions than by \citet{chen2024individual}.  %
We develop algorithms that either use the SOS only or combine them with non-Gaussian statistics. In particular, we adapt the Pairwise and Direct approaches to the multi-view case, resulting in highly efficient estimation. Results on brain imaging data are promising.
\vskip 0.2in
\bibliography{references}

\begin{thebibliography}{}

\bibitem[Anderson et~al., 2013]{anderson2013multiviewidentifiability}
Anderson, M., Fu, G., Phlypo, R., and Adali, T. (2013).
\newblock Independent vector analysis: Identification conditions and performance bounds.
\newblock {\em IEEE Transactions on Signal Processing}, 62:4399--4410.

\bibitem[Bach and Jordan, 2002]{bach2002kernel}
Bach, F.~R. and Jordan, M.~I. (2002).
\newblock Kernel independent component analysis.
\newblock {\em Journal of machine learning research}, 3(Jul):1--48.

\bibitem[Bollen, 1989]{bollen1989structural}
Bollen, K.~A. (1989).
\newblock {\em Structural equations with latent variables}, volume 210.
\newblock John Wiley \& Sons.

\bibitem[Braun et~al., 2002]{braun2002functional}
Braun, C., Haug, M., Wiech, K., Birbaumer, N., Elbert, T., and Roberts, L.~E. (2002).
\newblock Functional organization of primary somatosensory cortex depends on the focus of attention.
\newblock {\em Neuroimage}, 17(3):1451--1458.

\bibitem[Chen et~al., 2024]{chen2024individual}
Chen, W., Huang, X., Li, Z., Cai, R., Huang, Z., and Hao, Z. (2024).
\newblock Individual causal structure learning from population data.
\newblock In {\em Proceedings of the Thirty-Third International Joint Conference on Artificial Intelligence}, pages 7109--7117.

\bibitem[Comon, 1994]{comon1994identifiability}
Comon, P. (1994).
\newblock Independent component analysis, a new concept?
\newblock {\em Signal Processing}, 36(3):287--314.
\newblock Higher Order Statistics.

\bibitem[Dale et~al., 1999]{dale1999cortical}
Dale, A.~M., Fischl, B., and Sereno, M.~I. (1999).
\newblock Cortical surface-based analysis: I. segmentation and surface reconstruction.
\newblock {\em Neuroimage}, 9(2):179--194.

\bibitem[Genin, 2021]{genin2021identifiability}
Genin, K. (2021).
\newblock Statistical undecidability in linear, non-gaussian causal models in the presence of latent confounders.
\newblock In Ranzato, M., Beygelzimer, A., Dauphin, Y., Liang, P., and Vaughan, J.~W., editors, {\em Advances in Neural Information Processing Systems}, volume~34, pages 13564--13574. Curran Associates, Inc.

\bibitem[Gramfort et~al., 2013]{gramfort2013mnepython}
Gramfort, A., Luessi, M., Larson, E., Engemann, D.~A., Strohmeier, D., Brodbeck, C., Goj, R., Jas, M., Brooks, T., Parkkonen, L., and Hämäläinen, M. (2013).
\newblock Meg and eeg data analysis with mne-python.
\newblock {\em Frontiers in Neuroscience}, 7.

\bibitem[Gramfort et~al., 2014]{gramfort2014mnepython}
Gramfort, A., Luessi, M., Larson, E., Engemann, D.~A., Strohmeier, D., Brodbeck, C., Parkkonen, L., and Hämäläinen, M.~S. (2014).
\newblock Mne software for processing meg and eeg data.
\newblock {\em NeuroImage}, 86:446--460.

\bibitem[Greene, 2003]{greene2003econometric}
Greene, W.~H. (2003).
\newblock {\em Econometric analysis}.
\newblock Pearson education india.

\bibitem[H{\"a}m{\"a}l{\"a}inen and Ilmoniemi, 1994]{hamalainen-mne}
H{\"a}m{\"a}l{\"a}inen, M.~S. and Ilmoniemi, R.~J. (1994).
\newblock Interpreting magnetic fields of the brain: minimum norm estimates.
\newblock {\em Medical \& Biological Engineering \& Computing}, 32(1):35--42.

\bibitem[Hoyer et~al., 2008]{hoyer2008nonlinear}
Hoyer, P., Janzing, D., Mooij, J.~M., Peters, J., and Sch{\"o}lkopf, B. (2008).
\newblock Nonlinear causal discovery with additive noise models.
\newblock {\em Advances in neural information processing systems}, 21.

\bibitem[Hyv\"arinen et~al., 2001]{Hyvabook}
Hyv\"arinen, A., Karhunen, J., and Oja, E. (2001).
\newblock {\em Independent Component Analysis}.
\newblock Wiley Interscience.

\bibitem[Hyv\"arinen and Smith, 2013]{Hyva13JMLR}
Hyv\"arinen, A. and Smith, S.~M. (2013).
\newblock Pairwise likelihood ratios for estimation of non-{G}aussian structural equation models.
\newblock {\em Journal of Machine Learning Research}, 14:111--152.

\bibitem[Khan et~al., 2018]{KHAN201857}
Khan, S., Hashmi, J.~A., Mamashli, F., Michmizos, K., Kitzbichler, M.~G., Bharadwaj, H., Bekhti, Y., Ganesan, S., Garel, K.-L.~A., Whitfield-Gabrieli, S., Gollub, R.~L., Kong, J., Vaina, L.~M., Rana, K.~D., Stufflebeam, S.~M., Hämäläinen, M.~S., and Kenet, T. (2018).
\newblock Maturation trajectories of cortical resting-state networks depend on the mediating frequency band.
\newblock {\em NeuroImage}, 174:57--68.

\bibitem[Li et~al., 1996]{Li651}
Li, A., Yetkin, F.~Z., Cox, R., and Haughton, V.~M. (1996).
\newblock Ipsilateral hemisphere activation during motor and sensory tasks.
\newblock {\em American Journal of Neuroradiology}, 17(4):651--655.

\bibitem[Monti et~al., 2019]{Monti19UAI}
Monti, R.~P., Zhang, K., and Hyv\"arinen, A. (2019).
\newblock Causal discovery with general non-linear relationships using non-linear {ICA}.
\newblock In {\em Proc.\ 35th Conf.~ on Uncertainty in Artificial Intelligence (UAI2019)}, Tel Aviv, Israel.

\bibitem[Mooij et~al., 2020]{mooij2020joint}
Mooij, J.~M., Magliacane, S., and Claassen, T. (2020).
\newblock Joint causal inference from multiple contexts.
\newblock {\em Journal of machine learning research}, 21(99):1--108.

\bibitem[Pearl, 2009]{pearl2009causalitybook}
Pearl, J. (2009).
\newblock {\em Causality: Models, Reasoning and Inference}.
\newblock Cambridge University Press, USA, 2nd edition.

\bibitem[Peters and B{\"u}hlmann, 2014]{peters2014identifiability}
Peters, J. and B{\"u}hlmann, P. (2014).
\newblock Identifiability of gaussian structural equation models with equal error variances.
\newblock {\em Biometrika}, 101(1):219--228.

\bibitem[Peters et~al., 2017]{peters2017elements}
Peters, J., Janzing, D., and Sch{\"o}lkopf, B. (2017).
\newblock {\em Elements of causal inference: foundations and learning algorithms}.
\newblock The MIT Press.

\bibitem[Power et~al., 2023]{power2023camcandictionarylearning}
Power, L., Allain, C., Moreau, T., Gramfort, A., and Bardouille, T. (2023).
\newblock Using convolutional dictionary learning to detect task-related neuromagnetic transients and ageing trends in a large open-access dataset.
\newblock {\em NeuroImage}, 267:119809.

\bibitem[Ramsey et~al., 2010]{ramsey2010six}
Ramsey, J.~D., Hanson, S.~J., Hanson, C., Halchenko, Y.~O., Poldrack, R.~A., and Glymour, C. (2010).
\newblock Six problems for causal inference from fmri.
\newblock {\em neuroimage}, 49(2):1545--1558.

\bibitem[Richard et~al., 2021]{richard2021sharedica}
Richard, H., Ablin, P., Thirion, B., Gramfort, A., and Hyvarinen, A. (2021).
\newblock Shared independent component analysis for multi-subject neuroimaging.
\newblock In {\em Advances in Neural Information Processing Systems}, volume~34, pages 29962--29971. Curran Associates, Inc.

\bibitem[Richardson and Spirtes, 2002]{richardson2002identifiability}
Richardson, T. and Spirtes, P. (2002).
\newblock {Ancestral graph Markov models}.
\newblock {\em The Annals of Statistics}, 30(4):962 -- 1030.

\bibitem[Shafto et~al., 2014]{shafto2014camcan}
Shafto, M.~A., Tyler, L.~K., Dixon, M., Taylor, J.~R., Rowe, J.~B., Cusack, R., Calder, A.~J., Marslen-Wilson, W.~D., Duncan, J.~S., Dalgleish, T., Henson, R. N.~A., Brayne, C., and Matthews, F.~E. (2014).
\newblock The cambridge centre for ageing and neuroscience (cam-can) study protocol: a cross-sectional, lifespan, multidisciplinary examination of healthy cognitive ageing.
\newblock {\em BMC Neurology}, 14.

\bibitem[Shimizu, 2012]{shimizu2012joint}
Shimizu, S. (2012).
\newblock Joint estimation of linear non-gaussian acyclic models.
\newblock {\em Neurocomputing}, 81:104--107.

\bibitem[Shimizu, 2022]{shimizu2022statistical}
Shimizu, S. (2022).
\newblock {\em Statistical Causal Discovery: LiNGAM Approach}.
\newblock Springer.

\bibitem[Shimizu et~al., 2006]{shimizu2006linear}
Shimizu, S., Hoyer, P.~O., Hyv{\"a}rinen, A., Kerminen, A., and Jordan, M. (2006).
\newblock A linear non-gaussian acyclic model for causal discovery.
\newblock {\em Journal of Machine Learning Research}, 7(10).

\bibitem[Shimizu et~al., 2005]{shimizu2005lingam}
Shimizu, S., Hyv{\"a}rinen, A., Kano, Y., and Hoyer, P.~O. (2005).
\newblock Discovery of non-gaussian linear causal models using ica.
\newblock In {\em Conference on Uncertainty in Artificial Intelligence}.

\bibitem[Shimizu et~al., 2011]{shimizu2011directlingam}
Shimizu, S., Inazumi, T., Sogawa, Y., Hyvarinen, A., Kawahara, Y., Washio, T., Hoyer, P.~O., Bollen, K., and Hoyer, P. (2011).
\newblock Directlingam: A direct method for learning a linear non-gaussian structural equation model.
\newblock {\em Journal of Machine Learning Research-JMLR}, 12(Apr):1225--1248.

\bibitem[Spirtes et~al., 2001]{spirtes2001causation}
Spirtes, P., Glymour, C., and Scheines, R. (2001).
\newblock {\em Causation, prediction, and search}.
\newblock MIT press, second edition edition.

\bibitem[Sturma et~al., 2023]{sturma2023unpaired}
Sturma, N., Squires, C., Drton, M., and Uhler, C. (2023).
\newblock Unpaired multi-domain causal representation learning.
\newblock In Oh, A., Naumann, T., Globerson, A., Saenko, K., Hardt, M., and Levine, S., editors, {\em Advances in Neural Information Processing Systems}, volume~36, pages 34465--34492. Curran Associates, Inc.

\bibitem[Taulu and Simola, 2006]{Taulu_2006}
Taulu, S. and Simola, J. (2006).
\newblock Spatiotemporal signal space separation method for rejecting nearby interference in meg measurements.
\newblock {\em Physics in Medicine \& Biology}, 51(7):1759.

\bibitem[Taylor et~al., 2017]{taylor2017camcan}
Taylor, J.~R., Williams, N., Cusack, R., Auer, T., Shafto, M.~A., Dixon, M., Tyler, L.~K., Cam-CAN, and Henson, R.~N. (2017).
\newblock The cambridge centre for ageing and neuroscience (cam-can) data repository: Structural and functional mri, meg, and cognitive data from a cross-sectional adult lifespan sample.
\newblock {\em NeuroImage}, 144:262--269.
\newblock Data Sharing Part II.

\bibitem[Zellner, 1962]{zellner1962efficient}
Zellner, A. (1962).
\newblock An efficient method of estimating seemingly unrelated regressions and tests for aggregation bias.
\newblock {\em Journal of the American Statistical Association}, 57(298):348--368.

\bibitem[Zhang and Hyv\"arinen, 2009]{Zhang09UAI}
Zhang, K. and Hyv\"arinen, A. (2009).
\newblock On the identifiability of the post-nonlinear causal model.
\newblock In {\em Proc.\ 25th Conference on Uncertainty in Artificial Intelligence (UAI2009)}, pages 647--655, Montr\'eal, Canada.

\end{thebibliography}

\newpage

\appendix

\section*{Appendix}

The appendix is organized as follows. 

In Section~\ref{app:sec:lingam}, we rigorously restate known identifiability results for LiNGAM. 

In Section~\ref{app:sec:theory_bivariate_model}, we prove the consistency of the likelihood-based criterion for a pair of variables. 

In Section~\ref{app:sec:cross_covariance_based_criterion}, we prove the consistency of the cross-covariance-based criterion for a pair of variables. 

In Section~\ref{app:sec:identifiability_limvam}
, we extend the consistency of these criteria for a pair of variables to \textit{many} pairs, and show that the LiMVAM model is identifiable.

In Section~\ref{app:sec:sos_based_algorithms_limvam}
, we show how to implement algorithms used to identify the causal graph.

In section~\ref{app:sec:identifiability_limvam_shared_disturbances}, we prove the identifiability of a special case of LiMVAM with shared disturbances. 

In section~\ref{app:sec:ica_based_algorithm_limvam}
, we present an algorithm for estimating the causal graph in this special case. 

In section~\ref{app:sec:experiments}
, we detail our experiments on real and synthetic data. 

\renewcommand{\contentsname}{}
\vspace{-1cm}

\addtocontents{toc}{\protect\setcounter{tocdepth}{2}}

\tableofcontents

\newpage
\section{LiNGAM}
\label{app:sec:lingam}

\paragraph{Defining a domain connecting ICA and SEM parameters}
In the following proofs, we define the set
\begin{align}
    \label{eq:domain_unmixing_matrix}
\begin{split}
    \mathcal{W} 
    = 
    \{ 
    &\mW \in \mathbb{R}^{p \times p} \mid \text{there exist a diagonal matrix $\mD$ with positive diagonal entries} \\
    &\text{and a DAG matrix $\mB$, such that $\mW = \mD^{-1} (\mI - \mB)$} \}
\end{split}
\end{align}
as the domain of the “unmixing matrices” $\mW=\mA^{-1}$ in the ICA model from~\eqref{eq:ica_model}, that are compatible with the DAG structure in the structural equation model (SEM) from~\eqref{eq:causal_model_original}.
In general, unmixing matrices are not constrained in the ICA model, except by invertibility (and possibly by normalizing their rows).
However, in the context of SEM estimation, the requirement that $\mW = \mD^{-1} (\mI - \mB)$ belongs to the set $\mathcal{W}$ is a key structural constraint.
Furthermore, it should be noted that finding $\mW$ is equivalent to finding $\mD$ and $\mB$, since they are related by $\mB = \mD^{-1} (\mI-\mW)$.

Finally, we will use the fact that any DAG matrix $\mB$ can be decomposed into $\mB = \mP^\top \mT \mP$, where $\mP$ is a permutation matrix and $\mT$ a strictly lower triangular matrix.

\subsection{Technical lemmas}

The following two lemmas are about matrices in the context of DAGs. They are used in the proofs of Theorems~\ref{theorem:lingam}, \ref{theorem:identifiability_B_tilde_i}, and~\ref{theorem:identifiability_multiview_different_Pi}, and are proven in Appendix~\ref{app:ssec:lemma}.

The basic identifiability results for the causal matrices $\mB^i$ use the following:
\begin{lemma}
    \label{lemma:identifiability_lingam}
    Let $\mW = \mD^{-1} (\mI - \mB)$ and $\mW' = \mD'^{-1} (\mI - \mB')$ be two matrices that belong to $\mathcal{W}$ (defined in \eqref{eq:domain_unmixing_matrix}), and let $\mQ$ be a sign-permutation matrix. 
    Then 
    \begin{align}
        \mW' = \mQ^\top \mW 
        \implies 
        \mQ = \mI \:, \; 
        \mD' = \mD \:, \; \text{and} \; 
        \mB' = \mB \enspace.
    \end{align}
\end{lemma}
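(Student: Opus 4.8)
The plan is to convert the sign-permutation relation into a statement about a single monomial (generalized permutation) matrix acting on $\mI - \mB$, and then invoke acyclicity to force that matrix to be the identity. First I would substitute the two factorizations into the hypothesis $\mW' = \mQ^\top \mW$, obtaining $\mD'^{-1}(\mI - \mB') = \mQ^\top \mD^{-1}(\mI - \mB)$, and rearrange this into $\mI - \mB' = \mR\,(\mI - \mB)$ with $\mR := \mD' \mQ^\top \mD^{-1}$. Since $\mQ$ is a sign-permutation matrix and $\mD,\mD'$ are positive diagonal, $\mR$ is monomial: it has exactly one nonzero entry in each row and in each column. Let $\pi$ denote the permutation underlying $\mR$ (equivalently, underlying $\mQ^\top$), so that the unique nonzero of row $j$ sits in column $\pi(j)$.

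The crux of the argument, and the step where real work is needed, is to show $\pi = \mathrm{id}$; everything else is bookkeeping with diagonal and monomial matrices. I would read off the diagonal entries of $\mI - \mB' = \mR(\mI - \mB)$. Because $\mB$ and $\mB'$ are DAG matrices they have zero diagonal, so both $\mI - \mB$ and $\mI - \mB'$ have all diagonal entries equal to $1$. Expanding the $(j,j)$ entry of the right-hand side and using that row $j$ of $\mR$ is supported only in column $\pi(j)$ gives $1 = \mR_{j,\pi(j)}\,(\mI - \mB)_{\pi(j),j}$, which forces $(\mI - \mB)_{\pi(j),j} \neq 0$ for every $j$. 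For any $j$ lying in a nontrivial cycle of $\pi$ (so $\pi(j)\neq j$), this off-diagonal entry equals $-\mB_{\pi(j),j}$, hence $\mB_{\pi(j),j}\neq 0$, i.e. there is a directed edge $j \to \pi(j)$. Tracing such a cycle $j_1 \to j_2 \to \cdots \to j_k \to j_1$ of $\pi$ then produces a directed cycle in $\mB$, contradicting acyclicity. Therefore $\pi$ has no nontrivial cycle, i.e. $\pi = \mathrm{id}$.

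With $\pi = \mathrm{id}$ the matrix $\mR$ is diagonal, and hence so is $\mQ^\top = \mD'^{-1}\mR\,\mD$; a diagonal sign-permutation matrix is exactly $\mathrm{diag}(\pm 1)$. To finish I would fix the signs and scalings by positivity: the diagonal of $\mI - \mB' = \mR(\mI - \mB)$ now reads $1 = \mR_{jj}$, so $\mR = \mI$, which immediately yields $\mB' = \mB$. Writing $\mR_{jj} = d'_{jj}\,q_{jj}\,d_{jj}^{-1} = 1$ with $q_{jj} = \pm 1$ and $d_{jj}, d'_{jj} > 0$ forces $q_{jj} = 1$ and $d'_{jj} = d_{jj}$, that is $\mQ = \mI$ and $\mD' = \mD$, completing the proof. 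The main obstacle is purely the cycle argument of the middle paragraph; the surrounding steps are routine once $\mR$ is recognized as monomial.
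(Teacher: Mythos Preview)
Your proof is correct, but it takes a genuinely different route from the paper. The paper first passes to the lower-triangular decomposition $\mB = \mP^\top \mT \mP$, $\mB' = \mP'^\top \mT' \mP'$, rewrites the relation in terms of the permutation functions $\phi,\phi',\psi$ underlying $\mP,\mP',\mQ$, and then uses that $\mL=\mI-\mT$ is lower triangular to obtain the chain of inequalities $\phi(i)\le\phi(\psi(i))\le\phi(\psi^{(2)}(i))\le\cdots$, which becomes strictly increasing on any index where $\psi$ is nontrivial and hence overflows $\{1,\dots,p\}$. You bypass the triangular form entirely: by reading the diagonal of $\mI-\mB'=\mR(\mI-\mB)$ you extract directly that $\mB_{\pi(j),j}\neq 0$ whenever $\pi(j)\neq j$, and then observe that any nontrivial cycle of $\pi$ yields a directed cycle in $\mB$. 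Your argument is shorter and invokes acyclicity in its most natural form (no directed cycles) rather than via a topological ordering; the paper's version, on the other hand, makes the combinatorics of the permutation indices fully explicit and reuses the triangular machinery that also drives Lemma~\ref{app:lemma:Bi_fully_connected}. Both approaches collapse to the same endgame once $\pi=\mathrm{id}$ is established.
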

On the other hand, the identifiability results for the causal ordering $\mP$ (or causal orderings $\mP^i$) use the following:
 \begin{lemma}
 \label{app:lemma:Bi_fully_connected}
        Let $\mP$ and $\mP'$ be permutation matrices and $\mT$ and $\mT'$ be strictly lower triangular matrices, such that $\mP^\top \mT \mP = \mP'^\top \mT' \mP'$.
        Assume that $\mT$ contains only non-zero elements in its strictly lower triangular part.
        Then, $\mP = \mP'$ and $\mT = \mT'$.
    \end{lemma}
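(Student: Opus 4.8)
The plan is to reduce the statement to a single conjugation by a permutation, and then to exploit the full support of $\mT$ below its diagonal. Starting from $\mP^\top \mT \mP = \mP'^\top \mT' \mP'$, I would left-multiply by $\mP'$ and right-multiply by $\mP'^\top$ to obtain $\mT' = \mR \mT \mR^\top$, where $\mR \deq \mP' \mP^\top$ is again a permutation matrix. So the whole problem becomes the following: a permutation that conjugates a strictly lower triangular matrix $\mT$ into another strictly lower triangular matrix $\mT'$ must be trivial, provided $\mT$ has no spurious zeros below its diagonal. The point of the extra hypothesis on $\mT$ is exactly to forbid such zeros, so that triangularity of $\mT'$ rigidly constrains the permutation.

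Next I would pass to indices. Writing $\rho$ for the permutation realized by $\mR$ (and fixing its orientation convention once at the outset), the conjugation reads $T'_{ij} = T_{\rho(i)\,\rho(j)}$. The assumption that $\mT$ contains only nonzero entries in its strictly lower triangular part gives the sharp characterization $T_{kl} = 0 \iff k \le l$. Combining this with the strict lower-triangularity of $\mT'$, i.e. $T'_{ij} = 0$ for all $i \le j$, yields $T_{\rho(i)\,\rho(j)} = 0$ whenever $i \le j$, and hence the implication $i \le j \implies \rho(i) \le \rho(j)$. From here the conclusion is immediate: for $i < j$, bijectivity of $\rho$ forces $\rho(i) \ne \rho(j)$, so $\rho(i) < \rho(j)$; thus $\rho$ is strictly order-preserving and therefore the identity. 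This gives $\mR = \mI$, i.e. $\mP = \mP'$, and then $\mT = \mT'$ follows directly from $\mT' = \mR \mT \mR^\top$.

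The only real subtlety — and the step I would be most careful about — is the index bookkeeping in the rewriting $T'_{ij} = T_{\rho(i)\,\rho(j)}$: the precise placement of $\rho$ versus $\rho^{-1}$ depends on whether $\mR$ acts on rows or on entries, and getting this backwards would break the order-preservation argument. Beyond this clerical care, the argument is elementary and uses the full-support hypothesis on $\mT$ in an essential way: without it (e.g. for a very sparse $\mT$, as noted in the discussion of Assumption~\ref{assumption:dense_connectivity_sos}) the characterization $T_{kl}=0 \iff k \le l$ fails, several orderings become compatible with the same $\mB$, and uniqueness of $\mP$ can no longer be concluded.
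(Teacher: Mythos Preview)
Your proof is correct and follows essentially the same approach as the paper: both reduce to conjugation by the single permutation $\mR = \mP'\mP^\top$ and then use the full lower-triangular support of $\mT$ together with the strict lower-triangularity of $\mT'$ to force $\mR = \mI$. The only cosmetic difference is in the final combinatorial step: you argue directly that $\rho$ must be order-preserving and hence the identity, whereas the paper runs the equivalent contradiction that a nontrivial $\phi$ would generate an infinite strictly increasing chain $\phi(k) < \phi^{(2)}(k) < \cdots$ in $\llbracket 1,p\rrbracket$.
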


\subsection{Identifiability of LiNGAM}
\label{app:ssec:identifiability_lingam}

As a special case, we restate here the identifiability of the single-view LiNGAM model (in terms of the matrix $\mB$).

\begin{theorem}[Identifiability of LiNGAM]
\label{theorem:lingam}
In the statistical model defined by~\eqref{eq:causal_model_original}, the parameter $\mB$ is identifiable, provided that the entries in $\vs$ are mutually independent, that at most one of them is Gaussian, and that $\mB$ is a DAG.
\end{theorem}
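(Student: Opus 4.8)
The plan is to reduce the claim to the identifiability of the associated ICA model and then exploit the DAG structure through Lemma~\ref{lemma:identifiability_lingam}. Suppose two DAG matrices $\mB$ and $\mB'$, together with disturbance vectors $\ve$ and $\ve'$ whose entries are mutually independent with at most one Gaussian, generate the same distribution for $\vx$. Using $\vx = (\mI - \mB)^{-1}\ve$, I would first standardize the disturbances: writing $\ve = \mD\vs$ with $\mD$ diagonal positive and $\vs$ having unit-variance independent entries, the model becomes the ICA model $\vx = \mA\vs$ from~\eqref{eq:ica_model} with mixing $\mA = (\mI - \mB)^{-1}\mD$, i.e. unmixing matrix $\mW \deq \mA^{-1} = \mD^{-1}(\mI - \mB) \in \mathcal{W}$. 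The same construction applied to $\mB'$ yields a second unmixing matrix $\mW' = \mD'^{-1}(\mI - \mB') \in \mathcal{W}$ representing the \emph{same} random vector $\vx$ as a mixture of independent, at-most-one-Gaussian sources.

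Next, I would invoke the classical identifiability theorem for ICA (Comon's theorem, which rests on the Darmois--Skitovich theorem): when a random vector admits two representations as an invertible linear mixture of independent components of which at most one is Gaussian, the two mixing matrices coincide up to permutation and rescaling of their columns. Equivalently, the two unmixing matrices are related by $\mW' = \mQ^\top \mW$ for some sign-permutation matrix $\mQ$, once the source scales are fixed (here absorbed into $\mD$ and $\mD'$). This is the step where the at-most-one-Gaussian hypothesis is essential, since a second Gaussian source would permit an additional orthogonal rotation that is not a mere sign-permutation.

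Finally, both $\mW$ and $\mW'$ lie in $\mathcal{W}$ by construction, so Lemma~\ref{lemma:identifiability_lingam} applies directly to the relation $\mW' = \mQ^\top \mW$ and forces $\mQ = \mI$, $\mD' = \mD$, and $\mB' = \mB$; this yields identifiability of $\mB$. The main obstacle is not the algebra but the correct application of ICA identifiability: in particular, fixing a normalization convention for the sources so that the remaining indeterminacy reduces to exactly a sign-permutation $\mQ$, and understanding why only the identity survives. The latter is precisely where acyclicity enters, as Lemma~\ref{lemma:identifiability_lingam} uses the fact that a nontrivial reordering cannot keep both $\mI - \mB$ and $\mI - \mB'$ in the form $\mD\,(\mI - \text{DAG})$. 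Since that lemma is already available, the argument is complete once the ICA reduction is justified.
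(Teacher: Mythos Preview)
Your proposal is correct and follows essentially the same route as the paper: reduce the SEM to an ICA model, invoke Comon's identifiability theorem to obtain $\mW' = \mQ^\top \mW$ up to sign-permutation, and then apply Lemma~\ref{lemma:identifiability_lingam} to force $\mQ=\mI$ and hence $\mB'=\mB$. The only cosmetic difference is that you pre-standardize the disturbances so the scale indeterminacy is absorbed into $\mD,\mD'$ from the outset, whereas the paper carries a separate scaling factor through the ICA step and absorbs it afterward; the logic is identical.
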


This proof is based on the one from~\citet{shimizu2006linear}, which we attempt to make a bit more rigorous. 
Identifiability was also shown in~\citet[Section $2.3$]{shimizu2022statistical} in the case with $2$ components.

Let us transform the SEM
\begin{align}
    \label{app:eq:causal_model_original}
    \vx = \mB \vx + \vs
\end{align}
into an ICA model
\begin{align}
    \label{app:eq:ica_model}
    \vx = \mA \vs
\end{align}
where the mixing matrix $\mA$ is constrained as $\mA^{-1}=\mI-\mB$, and where $\mB$ is a DAG. The assumptions on $\vs$ are identical in SEM and ICA. 
Let us parameterize by $\mW = \mA^{-1} = \mI - \mB$, instead of $\mB$.

Consider two matrices $\mW = \mI - \mB$ and $\mW' = \mI - \mB'$, with $\mB$ and $\mB'$ being DAG matrices, that parameterize the same statistical model in~\eqref{app:eq:causal_model_original}.
We have $\det(\mW) = \det(\mW') = 1$, so $\mW$ and $\mW'$ are invertible, which makes them valid unmixing matrices for the same ICA model in \eqref{app:eq:ica_model}.
So, from the identifiability theory of ICA~\cite{comon1994identifiability}, we know that there exist a sign-permutation matrix $\mQ$ and a scaling matrix $\mD$ (\textit{i.e.} a diagonal matrix with positive entries on the diagonal) such that 
\begin{align}
    \label{eq:lingam_indeterminacy_prelim}
    \mW' 
    &= 
    \mD \mQ^\top \mW
\end{align}
hence
\begin{align}
    \label{eq:lingam_indeterminacy}
    \mW''
    &= 
    \mQ^\top \mW
\end{align}
where we defined $\mW'' = \mD^{-1} \mW'$.
We thus have that $\mW'' = \mD^{-1} (\mI - \mB')$ and $\mW = \mI - \mB$ both belong to the set $\mathcal{W}$, so applying Lemma~\ref{lemma:identifiability_lingam} to $\mW$ and $\mW''$ imposes $\mQ = \mI$, $\mD = \mI$, and $\mB' = \mB$.
This makes $\mB$ fully identifiable (“fully” identifiable is in contrast to conventional ICA theory where some indeterminacies always remain), which concludes the proof.

\subsection{Proofs of technical lemmas}
\label{app:ssec:lemma}

\subsubsection{Proof of Lemma~\ref{lemma:identifiability_lingam}}

Consider $\mW, \mW' \in \mathcal{W}$ and $\mQ$ a sign-permutation matrix. Suppose that
\begin{align}\label{eq:lingam_indeterminacy_inappendix}
    \mW' = \mQ^\top \mW \enspace.
\end{align}

\paragraph{A permutation inequality}
Using the definition of $\mathcal{W}$ in~\eqref{eq:domain_unmixing_matrix} and the decompositions of DAG matrices, we have
\begin{align}
    \mW 
    = 
    \mD^{-1} \mP^\top (\mI- \mT) \mP
    \;, \quad
    \mW' 
    = 
    \mD'^{-1}\mP'^\top (\mI - \mT') \mP'
\end{align}
where $\mD$, $\mD'$ are diagonal matrices with positive entries on the diagonal, $\mP, \mP'$ are permutation matrices, and $\mT, \mT'$ are strictly lower triangular matrices. Denote $\mL = \mI - \mT$ and $\mL' = \mI - \mT'$; these are lower triangular matrices that have unit diagonals. We plug the decompositions into~\eqref{eq:lingam_indeterminacy_inappendix}, and get
\begin{align}
    \label{app:eq:lemma_6_decomposition}
    \mD'^{-1} \mP'^\top \mL' \mP'
    =
    \mQ^\top \mD^{-1} \mP^\top \mL \mP
    \enspace.
\end{align}
To exploit the structure of the lower triangular matrices $ \mL$ and $\mL'$ and show how they constrain $\mQ$, we now switch notations from permutation --- or sign-permutation --- matrices $(\mP, \mP', \mQ)$, to their corresponding permutation functions $(\phi, \phi', \psi)$. \eqref{app:eq:lemma_6_decomposition} thus yields
\begin{align}
    \frac{\mL'_{\phi'(i), \phi'(j)}}{\mD'_{ii}}
    =
    \pm \frac{\mL_{\phi(\psi(i)), \phi(j)}}{\mD_{\psi(i), \psi(i)}} \;,
    \quad \forall i, j \in \llbracket 1, p \rrbracket
\end{align}
where the $\pm$ symbol comes from $\mQ$.
In particular, $\mL'$ has unit diagonal, so
\begin{align}
    \mL_{\phi(\psi(i)), \phi(i)}
    =
    \pm \frac{\mD_{\psi(i), \psi(i)}}{\mD'_{ii}}
    \neq
    0 
    \;, \quad \forall i \in \llbracket 1, p \rrbracket
\end{align}
and $\mL$ is lower triangular, so that its non-zero entries must satisfy
\begin{align}
    \phi(i) \leq \phi(\psi(i)) \;,
    \quad \forall i \in \llbracket 1, p \rrbracket \;.
\end{align}
More generally, we can replace $i$ with $\psi(i)$, and so on, and obtain
\begin{align}
    \label{eq:chained_inequalities}
    \phi(i)
    \leq 
    \phi(\psi(i))
    \leq
    \phi(\psi^{(2)}(i))
    \leq
    \ldots
    \quad \forall i \in \llbracket 1, p \rrbracket
\end{align}
where the superscript denotes composition and where we can apply the permutation $\psi$ an arbitrary number of times.

\paragraph{$\mQ$ must be a sign matrix}
Suppose that $\psi$ is not the identity. We can pick an index $k \in \llbracket 1, p \rrbracket$ where $k \neq \psi(k)$. Because $\phi$ and $\psi$ are injective, we can apply them to the inequality any number of times $n \in \mathbb{N}$ and get
\begin{align}
    \phi(\psi^{(n)}(k)) \neq \phi(\psi^{(n+1)}(k))
\end{align}
which together with~\eqref{eq:chained_inequalities} implies
\begin{align}
    \phi(k)
    < \phi(\psi(k)) 
    < \phi(\psi^{(2)}(k)) 
    < \ldots
\end{align}
which can be applied any number of times.
However, each inequality increases the index by at least one, while the index cannot go above $p$. 
Thus, applying the chain at least $p$ times, we have a contradiction.
So, the permutation $\psi$ in $\mQ$ must be the identity, and $\mQ$ boils down to a sign matrix, \textit{i.e.} a diagonal matrix with $1$ and $-1$ on the diagonal.

\paragraph{$\mQ$ must be the identity} Since $\mQ^\top = \mQ$, \eqref{app:eq:lemma_6_decomposition} can be reformulated as
\begin{align}
    \mP'^\top \mL' \mP'
    =
    \mD' \mQ \mD^{-1} \mP^\top \mL \mP
\end{align}
where we now know that $\mD' \mQ \mD^{-1}$ is a diagonal matrix.
The matrix $\mP'^\top \mL' \mP'$ has a diagonal of ones and so too does $\mP^\top \mL \mP$.
It follows that $\mD' \mQ \mD^{-1} = \mI$.
Since $\mD$ and $\mD'$ have positive diagonal entries, it follows that the entries of $\mQ$ cannot equal $-1$. So, $\mQ = \mI$, and thus $\mD' = \mD$.
This concludes the proof of the Lemma.

\subsubsection{Proof of Lemma~\ref{app:lemma:Bi_fully_connected}}
\label{app:sssec:lemma:Bi_fully_connected}

From the equality $\mP^\top \mT \mP = \mP'^\top \mT' \mP'$, we deduce that
\begin{align}
    \label{app:eq:B_Bprime}
    \mT
    =
    \tilde{\mP}^\top \mT' \tilde{\mP}
\end{align}
where $\tilde{\mP} = \mP' \mP^\top$ is the permutation matrix represented by function $\phi$, which is defined such that: $\forall k \in [\![1, p]\!]$, $\tilde{\mP}_{k, \phi(k)} = 1$ and $\forall l \neq \phi(k)$, $\tilde{\mP}_{k, l} = 0$.
Using the notation $\phi$ instead of $\tilde{\mP}$, \eqref{app:eq:B_Bprime} can be rewritten as, $\forall k, l \in [\![1, p]\!]$,
\begin{align}\label{app:eq:B^i_kl}
    \mT_{k, l}
    =
    \mT'_{\phi(k), \phi(l)}
    \enspace.
\end{align}
We proceed by a proof by contradiction: assume that $\tilde{\mP} \neq \mI$.
As a consequence, there exists an index $k$ such that $\phi(k) \neq k$. 
Let us fix such an index as $k$.
We can assume without loss of generality that 
\begin{align}
    \phi(k) > k
\end{align}
otherwise we just have to invert signs and switch rows and columns in the following.
By assumption, the strictly lower triangular part of $\mT$ only has non-zero elements, so we have
\begin{align}
    \mT_{\phi(k), k} 
    \neq 
    0 \enspace.
\end{align}
Yet, from~\eqref{app:eq:B^i_kl}, we know that $\mT_{\phi(k), k} = \mT'_{\phi(\phi(k)), \phi(k)}$ and all the non-zero elements of $\mT'$ are in its strictly lower triangular part, which implies
\begin{align}\label{app:eq:phi_phi>phi}
    \phi(\phi(k))
    >
    \phi(k)
    \enspace.
\end{align}
This logic can be applied repeatedly as in the preceding lemma's proof, and we see that
\begin{align}
    \phi(k)
    <
    \phi^{(2)}(k)
    < \phi^{(3)}(k)
    <
    \dots
    \enspace.
\end{align}
Now, this leads to an infinite strictly increasing sequence, which is contradictory since the index cannot grow greater than $p$.
So, $\tilde{\mP} = \mI$, which means that $\mP' \mP^\top \mP = \mP$, and thus, using the orthogonality of $\mP$,
\begin{align}\label{app:eq:P'_P}
     \mP' = \mP \enspace.
\end{align}
It also follows that $\mT' = \mT$, which concludes the proof of Lemma~\ref{app:lemma:Bi_fully_connected}.

\newpage
\section{Likelihood-based criterion}
\label{app:sec:theory_bivariate_model}

\subsection{Basic setting}
\label{app:ssec:basic_setting}

The fundamental question here is to choose the causal direction for two variables, having many views $i$. We denote the two variables by $x$ and $y$ to avoid too many indices. It could be that the causal direction is $x \rightarrow y$: 
\begin{equation}
  y^i = b_i x^i + e^i \qquad \text{for all } i
\end{equation}
or $y \rightarrow x$:
\begin{equation}
  x^i = c_i y^i + d^i \qquad \text{for all } i \enspace.
\end{equation}
Recall that superscripts are for view in the case of the random variables; however, for the regression coefficients we use subscripts because we need to take squares of such quantities. For the direction $x \rightarrow y$, collect the regressor and residuals in the vectors $\vx = (x^1, \dots, x^m)^\top$ and $\ve = (e^1, \dots, e^m)^\top$, respectively; and likewise for $\vy$ and $\vd$ in the opposite direction.
Second-order moment matrices are denoted by $\mSigma_x = \E[\vx \vx^\top]$, $\mSigma_y = \E[\vy \vy^\top]$, and likewise for $\vd$ and $\ve$. 
Note that the independence between $\vx$ and $\ve$ and between $\vy$ and $\vd$, and the fact that $\vx$ and $\vy$ are standardized, imply that 
\begin{equation}
  b_i = c_i = \mathrm{cov}(x_i, y_i) \qquad \text{for all } i \enspace.
\end{equation}

\subsection{Derivation of the expected log-likelihood in one direction}
\label{app:ssec:derivation_of_expected_likelihood}

We start by formulating the likelihood of the model in the case where disturbances are assumed to be Gaussian.

Consider the log-likelihood of $(\vx, \vy)$ for the direction $x \rightarrow y$.
Under the model, the prior (marginal) distribution of $\vx$ is $\cN (\vzero, \mSigma_x)$, and the prior distribution of $\ve = \vy - \bm{b} \odot \vx$ is $\cN(\vzero, \mSigma_e)$ --- where~$\odot$ denotes the elementwise product.
Thus, we have
\begin{multline}
    \log p(\vx,\vy; \bm{b}, \mSigma_x, \mSigma_e, x \rightarrow y)
    =
    \log p(\vy | \vx; \bm{b}, \mSigma_x, \mSigma_e, x \rightarrow y) 
    + \log p(\vx; \bm{b}, x \rightarrow y) \\
    = 
    -(\vy - \bm{b} \odot \vx)^T \mSigma_e^{-1} (\vy - \bm{b} \odot \vx) 
    - \log |\mSigma_e|
    - \vx^T \mSigma_x^{-1} \vx  
    - \log |\mSigma_x| \\
    = 
    - \tr \big( \mSigma_e^{-1} (\vy - \bm{b} \odot \vx) (\vy - \bm{b} \odot \vx)^\top \big)
    - \log |\mSigma_e|
    - \tr \big( \mSigma_x^{-1} \vx \vx^\top \big)
    - \log |\mSigma_x|
\end{multline}
where $\tr$ is the trace operator, and where we neglect the part of the normalization constant that does not depend on any parameters.

The expected log-likelihood is thus
\begin{align}
    \E [\log p(\vx,\vy; \bm{b}, \mSigma_x, \mSigma_e, x \rightarrow y)] 
    &= 
    - \tr \big( \mSigma_e^{-1} \E[ (\vy - \bm{b} \odot \vx) (\vy - \bm{b} \odot \vx)^\top ] \big)
    - \log |\mSigma_e| \\
    &- \tr \big( \mSigma_x^{-1} \E [\vx \vx^\top] \big)
    - \log |\mSigma_x| \\
    &=
    - \log |\mSigma_x| - \log |\mSigma_e| - 2m \enspace.
\end{align}
Denote by $\cL$ this expected log-likelihood. We have (up to constants):
\begin{equation}
\label{app:eq:population_log_likelihood_xtoy}
    \cL (\mSigma_x, \mSigma_e; x \rightarrow y)
    =
    - \log |\mSigma_x| - \log |\mSigma_e| \enspace.
\end{equation}

\subsection{Derivation of the likelihood-based criterion}
\label{app:ssec:derivation_of_likelihood_based_criterion}

We now derive a criterion that can be used to find the causal direction.

The expected log-likelihood of the model for the direction $y \rightarrow x$ can be obtained by switching the roles of $x$ and $y$ in~\eqref{app:eq:population_log_likelihood_xtoy}. Thus, we have
\begin{equation}
\label{app:eq:population_log_likelihood_ytox}
    \cL (\mSigma_y, \mSigma_d; y \rightarrow x)
    =
    - \log |\mSigma_y| - \log |\mSigma_d|
\end{equation}
where we recall that $\mSigma_y = \E[\vy \vy^\top]$ and $\mSigma_d = \E[\vd \vd^\top]$.

The correct direction can then be found by comparing the expected log-likelihoods of both directions. In particular, we calculate the difference between the two log-likelihoods, which can be reformulated as a likelihood ratio (LR):
\begin{equation}
    \mathrm{LR}
    :=
    \cL (\mSigma_x, \mSigma_e; x \rightarrow y)
    -
    \cL (\mSigma_y, \mSigma_d; y \rightarrow x)
    =
    \E \left[ \log \frac{p(\vx,\vy; \bm{b}, \mSigma_x, \mSigma_e, x \rightarrow y)}{p(\vx,\vy; \vc, \mSigma_y, \mSigma_d, y \rightarrow x)} \right] \enspace.
\end{equation}
Using~\eqref{app:eq:population_log_likelihood_xtoy} and~\eqref{app:eq:population_log_likelihood_ytox}, the criterion $\mathrm{LR}$ becomes
\begin{equation}
\label{app:eq:true_criterion}
    \mathrm{LR}
    =
    - \log |\mSigma_x| - \log |\mSigma_e|
    + \log |\mSigma_y| + \log |\mSigma_d| \enspace.
\end{equation}
Intuitively, the criterion in~\eqref{app:eq:true_criterion} will be large if the $x^i$ are highly correlated (more than the $y^i$) over the views, since then the first determinant will be small. This should make intuitive sense since it means the cause is highly structured. 
Likewise, if the residuals for $x \rightarrow y$ are highly structured, that direction is more likely. In other words, the direction that exhibits more structure wins.

Surprisingly, in this formulation, there is actually no particular model for the dependencies of the disturbances:  the likelihood should simply prefer dependent disturbances. It is important to note that the criterion makes sense even if the distribution is non-Gaussian, as will be proven below. We thus obtain a method that works for data of any distribution, but using only second-order statistics.

\subsection{Non-negativity of the likelihood-based criterion}
\label{app:ssec:non_negativity_of_likelihood_based_criterion}

In this section, we prove that the criterion in~\eqref{app:eq:true_criterion} is non-negative for the direction $x \rightarrow y$. Importantly, no assumption of Gaussianity needs to be made here.

Consider the direction $x \rightarrow y$, and let us write $\mB := \mathrm{diag}(b_1, \dots, b_m)$. 
Using the matrix $\mB$ rather than the vector $\bm{b}$ makes notations clearer in the proofs. In the following, we use interchangeably $\bm{b}$ and $\mB$.
The covariance matrix of $\vy = \mB \vx + \ve$ is
\begin{equation}
\label{app:eq:covariance_y}
    \mSigma_y = \mB \mSigma_x \mB + \mSigma_e \enspace.
\end{equation}
Given that $b_i = c_i$ for all $i$, we have $\vd = \vx - \mB \vy = \vx - \mB (\mB \vx + \ve) = (\mI - \mB)^2 \vx - \mB \ve$. Thus, the covariance matrix of $\vd$ is
\begin{equation}
    \mSigma_d = (\mI - \mB)^2 \mSigma_x (\mI - \mB)^2 + \mB \mSigma_e \mB \enspace.
\end{equation}
Using these expressions, the criterion in~\eqref{app:eq:true_criterion} becomes
\begin{align}
    \mathrm{LR}
    =
    - \log |\mSigma_x| 
    - \log |\mSigma_e|
    + \log \left| \mB \mSigma_x \mB + \mSigma_e \right| 
    + \log \left| (\mI - \mB)^2 \mSigma_x (\mI - \mB)^2 + \mB \mSigma_e \mB \right| 
    \enspace.
\end{align}
Collect the variances of the residuals $\ve$ in the diagonal matrix
\begin{equation}
\label{app:eq:definition_of_L}
    \mL := \mathrm{diag} \left( \sqrt{\mathrm{var}(e^1)}, \dots, \sqrt{\mathrm{var}(e^m)} \right)
\end{equation}
and define the correlation matrix of the residuals as $\widetilde{\mSigma}_e$, so that
\begin{equation}
\label{app:eq:identity_covariance_correlation_of_e}
    \mSigma_e = \mL \widetilde{\mSigma}_e \mL \enspace.
\end{equation}
Now, the objective becomes
\begin{multline}
    \mathrm{LR}
    =
    - \log |\mSigma_x| 
    - \log |\widetilde{\mSigma}_e|
    - \log |\mL^2|
    + \log \left| \mB \mSigma_x \mB + \mL \widetilde{\mSigma}_e \mL \right| 
    \\+ \log \left| (\mI - \mB)^2 \mSigma_x (\mI - \mB)^2 + \mB \mL \widetilde{\mSigma}_e \mL \mB \right| 
    \enspace.
\end{multline}
Importantly, the fact that $x$ and $y$ are standardized means that $\mSigma_x$ and $\mSigma_y$ have unit diagonal, so~\eqref{app:eq:covariance_y} implies 
\begin{equation}
    \mL^2 + \mB^2 = \mI \enspace
\end{equation}
and it follows that the diagonal entries of $\mL$ and $\mB$ are in $[-1, 1]$.
Thus, we have
\begin{align}
    \mathrm{LR}
    &=
    - \log |\mSigma_x| 
    - \log |\widetilde{\mSigma}_e|
    - \log |\mL^2|
    + \log \left| \mB \mSigma_x \mB + \mL \widetilde{\mSigma}_e \mL \right| 
    + \log \left| \mL^2 \mSigma_x \mL^2 + \mL \mB \widetilde{\mSigma}_e \mB \mL \right| \\
    &= 
    - \log |\mSigma_x| 
    - \log |\widetilde{\mSigma}_e|
    + \log \left| \mB \mSigma_x \mB + \mL \widetilde{\mSigma}_e \mL \right| 
    + \log \left| \mL \mSigma_x \mL + \mB \widetilde{\mSigma}_e \mB \right|
\end{align}
where $\mL$ and $\mB$ commuted in the first equality because they are diagonal. 

The following lemma (proven in Appendix~\ref{app:ssec:proof_of_lemma}) shows that this criterion is non-negative, and even positive under some assumptions.

\begin{lemma}
\label{app:lemma:positivity_of_criterion}
    Let $\mSigma_x$ and $\widetilde{\mSigma}_e$ be $m \times m$ real symmetric positive definite matrices with unit diagonal entries. Let $\mB = \mathrm{diag}(b_1, \dots, b_m)$ and $\mL = \mathrm{diag}(l_1, \dots, l_m)$ be diagonal matrices with entries in $[-1, 1]$, satisfying $\mB^2 + \mL^2 = \mI$ (i.e. $b_i^2 + l_i^2 = 1$ for all $i$).
    Define the function
    \begin{align}
        J(\mB, \mL) := - \log \det \mSigma_x - \log \det \widetilde{\mSigma}_e + \log \det \left( \mB \mSigma_x \mB + \mL \widetilde{\mSigma}_e \mL \right) + \log \det \left( \mL \mSigma_x \mL + \mB \widetilde{\mSigma}_e \mB \right).
    \end{align}
    Under the above assumptions, we have $J(\mB, \mL) \ge 0$. Moreover, $J(\mB, \mL) = 0$ if and only if
    \begin{equation}
        \mL \, \widetilde\mSigma_e \, \mB \; = \; \mB \, \mSigma_x \, \mL \enspace.
    \end{equation}
\end{lemma}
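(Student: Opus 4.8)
The plan is to recognize $J$ as a comparison between the determinant of a block-diagonal positive definite matrix and the product of the determinants of the diagonal blocks of an orthogonally rotated copy of it. The crucial observation is that the constraint $\mB^2 + \mL^2 = \mI$, combined with the fact that $\mB$ and $\mL$ are diagonal and hence commute, makes the $2m \times 2m$ matrix
\[
    R := \begin{pmatrix} \mB & \mL \\ -\mL & \mB \end{pmatrix}
\]
orthogonal. Indeed, a direct block multiplication gives $R^\top R = \begin{pmatrix} \mB^2 + \mL^2 & \mB\mL - \mL\mB \\ \mL\mB - \mB\mL & \mL^2 + \mB^2 \end{pmatrix} = \mI$, where the off-diagonal blocks vanish because $\mB\mL = \mL\mB$.

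Next I would introduce the block-diagonal SPD matrix $S := \begin{pmatrix} \mSigma_x & \vzero \\ \vzero & \widetilde{\mSigma}_e \end{pmatrix}$, whose determinant is $\det\mSigma_x \cdot \det\widetilde{\mSigma}_e$, and conjugate it by $R$. Carrying out the two block products yields
\[
    R^\top S R = \begin{pmatrix} \mB\mSigma_x\mB + \mL\widetilde{\mSigma}_e\mL & \mB\mSigma_x\mL - \mL\widetilde{\mSigma}_e\mB \\ \mL\mSigma_x\mB - \mB\widetilde{\mSigma}_e\mL & \mL\mSigma_x\mL + \mB\widetilde{\mSigma}_e\mB \end{pmatrix} = \begin{pmatrix} M_1 & C \\ C^\top & M_2 \end{pmatrix},
\]
where $M_1 := \mB\mSigma_x\mB + \mL\widetilde{\mSigma}_e\mL$ and $M_2 := \mL\mSigma_x\mL + \mB\widetilde{\mSigma}_e\mB$ are precisely the two matrices inside $J$, and $C := \mB\mSigma_x\mL - \mL\widetilde{\mSigma}_e\mB$ (the bottom-left block equals $C^\top$ by symmetry of $\mSigma_x, \widetilde{\mSigma}_e$). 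Since $S$ is SPD and $R$ orthogonal, $R^\top S R$ is SPD, and orthogonality gives the determinant identity $\det(R^\top S R) = \det S = \det\mSigma_x \cdot \det\widetilde{\mSigma}_e$.

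I would then invoke Fischer's inequality for positive definite block matrices, namely $\det\begin{pmatrix} M_1 & C \\ C^\top & M_2\end{pmatrix} \le \det M_1 \cdot \det M_2$. Combined with the determinant identity above this gives $\det\mSigma_x \cdot \det\widetilde{\mSigma}_e \le \det M_1 \cdot \det M_2$, which is exactly $J(\mB, \mL) \ge 0$ after taking logarithms. For the equality case I would use the Schur-complement factorization $\det\begin{pmatrix} M_1 & C \\ C^\top & M_2\end{pmatrix} = \det M_1 \cdot \det(M_2 - C^\top M_1^{-1} C)$. Because $M_1 \succ 0$, the term $C^\top M_1^{-1} C$ is positive semidefinite, so $M_2 - C^\top M_1^{-1} C \preceq M_2$; and for SPD matrices $A \preceq B$ one has $\det A = \det B$ iff $A = B$ (diagonalize $B^{-1/2} A B^{-1/2}$, whose eigenvalues lie in $(0,1]$ and multiply to $1$, hence all equal $1$). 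Equality therefore forces $C^\top M_1^{-1} C = 0$, i.e. $C = 0$, which is exactly $\mB\mSigma_x\mL = \mL\widetilde{\mSigma}_e\mB$, the stated condition.

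The main obstacle is simply spotting the orthogonal rotation $R$: once it is identified, the invariance of the determinant under orthogonal conjugation and Fischer's inequality do all the work. The only technical point requiring care is the equality analysis, where one must justify via the Schur complement that positive definite matrices with $A \preceq B$ and equal determinants must coincide; everything else is routine block algebra.
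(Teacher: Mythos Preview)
Your proof is correct and follows essentially the same route as the paper's: the same orthogonal block matrix $R$ (the paper calls it $\mM$), the same conjugation of the block-diagonal $\mathrm{diag}(\mSigma_x,\widetilde\mSigma_e)$, and the same Schur-complement argument for both the inequality and the equality case. The only cosmetic differences are that you conjugate by $R^\top(\cdot)R$ rather than $R(\cdot)R^\top$ (which flips the sign of the off-diagonal block, irrelevant for the conclusion) and that you invoke Fischer's inequality by name whereas the paper rederives it via the Schur complement.
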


\subsection{Proof of a useful lemma}
\label{app:ssec:proof_of_lemma}

Let us prove Lemma~\ref{app:lemma:positivity_of_criterion}.

\paragraph{Step 1: define $\mA$ and $\mC$ ---}
Define
\begin{equation}
    \mA := \mB \mSigma_x \mB + \mL \widetilde \mSigma_e \mL 
    \qquad \text{and} \qquad
    \mC := \mL \mSigma_x \mL + \mB \widetilde \mSigma_e \mB \enspace.
\end{equation}
We have
\begin{equation}
    J(\mB, \mL) 
    := 
    -\log\det \mSigma_x - \log\det \widetilde\mSigma_e + \log\det \mA + \log\det \mC \enspace.
\end{equation}

\paragraph{Step 2: define $\mM$ ---}
Define the $2m \times 2m$ block matrix
\begin{equation}
    \mM := \begin{pmatrix} \mB & \mL \\ -\mL & \mB \end{pmatrix} \enspace.
\end{equation}
Since $\mB$ and $\mL$ are diagonal, they commute, hence $\mB \mL = \mL \mB$. Using $\mB^2 + \mL^2 = \mI$, we compute
\begin{equation}
    \mM \mM^\top
    = \begin{pmatrix} \mB \mB + \mL \mL & -\mB \mL + \mL \mB \\ -\mL \mB + \mB \mL & \mL \mL + \mB \mB \end{pmatrix}
    = \begin{pmatrix} \mI & \vzero \\ \vzero & \mI \end{pmatrix} = \mI_{2m} \enspace.
\end{equation}
Thus $\mM$ is orthonormal, so $\det \mM = \pm 1$.

\paragraph{Step 3: define $\mathcal{B}$ ---}
Consider the block-diagonal matrix $\mathrm{diag}(\mSigma_x, \widetilde\mSigma_e)$ and conjugate by $\mM$:
\begin{equation}
    \mathcal B 
    := \mM \begin{pmatrix}\mSigma_x & \vzero \\[4pt] \vzero & \widetilde\mSigma_e \end{pmatrix} \mM^\top
    = \begin{pmatrix} \mA & \mS \\[4pt] \mS^\top & \mC \end{pmatrix},
    \qquad 
    \mS := \mL \widetilde\mSigma_e \mB - \mB \mSigma_x \mL \enspace.
\end{equation}
Because $\mM$ is orthonormal and $\mSigma_x, \widetilde\mSigma_e \succ \vzero$, the matrix $\mathcal B$ is symmetric positive definite (SPD). In particular, $\mA$ and $\mC$ are also SPD (as they are principal blocks of a SPD matrix) and thus $\mA^{-1}$ exists.

\paragraph{Step 4: Schur complement of $\mA$ in $\mathcal B$ ---}
For a symmetric block matrix $\begin{pmatrix} \mA & \mS \\ \mS^\top & \mC \end{pmatrix}$ with $\mA \succ \vzero$, the Schur complement of $\mA$ is $\mC - \mS^\top \mA^{-1} \mS$, and the Schur's formula yields
\begin{equation}
    \det \begin{pmatrix} \mA & \mS \\ \mS^\top & \mC \end{pmatrix}
    = \det(\mA) \, \det \big( \mC - \mS^\top \mA^{-1} \mS \big) \enspace.
\end{equation}
Applying this to $\mathcal B$ and using multiplicativity of determinant and the fact that $\det \mM = \pm 1$ gives
\begin{equation}
    \det(\mSigma_x) \det(\widetilde\mSigma_e)
    = \det(\mathcal B)
    = \det(\mA) \, \det \big( \mC - \mS^\top \mA^{-1} \mS \big) \enspace.
\end{equation}

\paragraph{Step 5: derive a matrix inequality ---}
Because $\mathcal B \succ \vzero$ and $A \succ \vzero$, it follows from the Schur complement characterization of positive definiteness that $\mC - \mS^\top \mA^{-1} \mS \succ \vzero$. 
In particular, we have $\det(\mC - \mS^\top \mA^{-1} \mS) > 0$. 
Moreover, since $\mA \succ \vzero$, we have $\mS^\top \mA^{-1} \mS \succeq \vzero$, and thus $\mC - \mS^\top \mA^{-1} \mS \preceq \mC$.

\paragraph{Step 6: derive a determinant inequality ---}
If $\vzero \preceq \mX \preceq \mY$ and $\mY \succ \vzero$, then conjugating by $\mY^{-1/2}$ shows $\vzero \preceq \mY^{-1/2} \mX \mY^{-1/2} \preceq  \mI$. Thus, all eigenvalues of $\mY^{-1/2} \mX \mY^{-1/2}$ lie in $[0,1]$, so their product satisfies $\det(\mY^{-1/2} \mX \mY^{-1/2}) \le 1$. Hence $\det(\mX) \le \det(\mY)$. 
Applying this to $\mX := \mC - \mS^\top \mA^{-1} \mS$ and $\mY := \mC$ yields
\begin{equation}
    \det \big( \mC - \mS^\top \mA^{-1} \mS \big) \le \det(\mC) \enspace.
\end{equation}

\paragraph{Step 7: $J(\mB, \mL) \ge 0$ ---}
From Steps 4 and 6, we get
\begin{equation}
    \det(\mSigma_x) \det(\widetilde\mSigma_e)
    = \det(\mA) \, \det \big( \mC - \mS^\top \mA^{-1} \mS \big)
    \le \det(\mA) \det(\mC) \enspace.
\end{equation}
Taking natural logarithms and rearranging gives
\begin{equation}
    \log\det(\mA) + \log\det(\mC) \; \ge \; \log\det(\mSigma_x) + \log\det(\widetilde\mSigma_e) \enspace,
\end{equation}
which is precisely $J(\mB, \mL) \ge 0$.

\paragraph{Step 8: equality condition ---}
The equality $J(\mB, \mL) = 0$ holds iff 
\begin{equation}
    \det\big( \mC - \mS^\top \mA^{-1} \mS \big) = \det(\mC) \enspace.
\end{equation}
Write $\mK := \mC^{-1/2} \big( \mS^\top \mA^{-1} \mS \big) \mC^{-1/2} \succeq \vzero$. We have
\begin{align}
    \det\big( \mC - \mS^\top \mA^{-1} \mS \big) 
    &= \det(\mC) \det\big( \mI - \mC^{-1} \mS^\top \mA^{-1} \mS \big) \\
    &= \det(\mC) \det\big( \mC^{-\frac12} \mC^{\frac12} - \mC^{-\frac12} \mC^{-\frac12} \mS^\top \mA^{-1} \mS \mC^{-\frac12} \mC^{\frac12} \big) \\
    &= \det(\mC) \det\big( \mC^{-\frac12} \big) \det (\mI - \mK) \det\big( \mC^{\frac12} \big) \\
    &= \det(\mC) \det (\mI - \mK) \enspace.
\end{align}
Thus, the equality condition is $\det(\mI - \mK) = 1$. From Step 5, we know that $\vzero \preceq \mC - \mS^\top \mA^{-1} \mS \preceq \mC$. Conjugating by $\mC^{-\frac12}$ gives $\vzero \preceq \mI - \mK \preceq \mI$, so each eigenvalue $\lambda_i$ of $\mK$ satisfies $0 \le \lambda_i \le 1$. The eigenvalues of $\mI - \mK$ are $1 - \lambda_i \in [0, 1]$, hence
\begin{equation}
    \prod_i (1 - \lambda_i) = \det(\mI - \mK) = 1 \enspace.
\end{equation}
The equality thus holds iff $\lambda_i = 0$ for all $i$, hence $\mK = \vzero$.
Since $\mC^{-\frac12}, \mA^{-1} \succ \vzero$, we get $\mS = \vzero$. Conversely, $\mS = \vzero$ clearly forces equality. Therefore, the equality occurs exactly when
\begin{equation}
    \mS = \mL \widetilde\mSigma_e \mB - \mB \mSigma_x \mL = \vzero \enspace,
\end{equation}
\textit{i.e.}
\begin{equation}
    \forall i,j: \qquad b_i l_j (\mSigma_x)_{ij} = l_i b_j (\widetilde\mSigma_e)_{ij} \enspace.
\end{equation}
This completes the proof.

\subsection{Conditions for strict positivity of the likelihood-based criterion}
\label{app:ssec:conditions_for_strict_positivity_of_likelihood_based_criterion}

From now on, we assume that the model is not degenerate in the sense that disturbances have a positive variance, that is $l_i = \mathrm{var}(e_i) > 0$ for all $i$.

The criterion $\mathrm{LR}$ derived in Appendix~\ref{app:ssec:non_negativity_of_likelihood_based_criterion} is used to determine the causal direction from its sign, with $\mathrm{LR} = 0$ indicating that the two directions cannot be distinguished.  
Lemma~\ref{app:lemma:positivity_of_criterion} shows that $\mathrm{LR} = 0$ if and only if
\begin{equation}
\label{app:eq:temporary_condition}
    \mL \widetilde\mSigma_e \mB = \mB \mSigma_x \mL \enspace.
\end{equation}
Entrywise, this is equivalent to
\begin{equation}
\label{app:eq:equality_condition_scalar}
    \forall i,j: \qquad l_i b_j (\widetilde\mSigma_e)_{ij} = b_i l_j (\mSigma_x)_{ij} \enspace.
\end{equation}
Let us fix $i, j \in [\![1, m]\!]$.

\paragraph{Case 1: $b_i = 0$ and $b_j = 0$ ---} The equality immediately holds.

\paragraph{Case 2: $b_i = 0$ and $b_j \ne 0$ ---} The right-hand side in~\eqref{app:eq:equality_condition_scalar} is equal to 0. Since $l_i = \mathrm{var}(e^i) > 0$ and $b_j \ne 0$, we must have $(\widetilde{\mSigma}_e)_{ij} = 0$.

\paragraph{Case 3: $b_i \ne 0$ and $b_j = 0$ ---} The left-hand side in~\eqref{app:eq:equality_condition_scalar} is equal to 0. Since $l_j = \mathrm{var}(e^j) > 0$ and $b_i \ne 0$, we must have $(\mSigma_x)_{ij} = 0$.

\paragraph{Case 4: $b_i \ne 0$ and $b_j \ne 0$ ---} Multiplying~\eqref{app:eq:equality_condition_scalar} by $\frac{l_j}{b_i b_j^2}$, and using $l_j^2 = 1 - b_j^2$, we obtain
\begin{equation}
\label{app:eq:unidentifiability_condition_temporary}
    \frac{l_i l_j}{b_i b_j} (\widetilde{\mSigma}_e)_{ij}
    =
    \bigg( \frac{1}{b_j^2} - 1 \bigg) (\mSigma_x)_{ij} \enspace.
\end{equation}
Switching the roles of $i$ and $j$, and using the symmetry of $(\widetilde{\mSigma}_e)_{ij}$ and $(\mSigma_x)_{ij}$, yield
\begin{equation}
    \frac{l_i l_j}{b_i b_j} (\widetilde{\mSigma}_e)_{ij}
    =
    \bigg( \frac{1}{b_i^2} - 1 \bigg) (\mSigma_x)_{ij} \enspace.
\end{equation}
hence
\begin{equation}
\label{app:eq:unidentifiability_sufficient_condition_bi}
    b_i^2 (\mSigma_x)_{ij} = b_j^2 \; (\mSigma_x)_{ij} \enspace.
\end{equation}
Similarly, exchanging the roles of $(b_i, b_j, (\mSigma_x)_{ij})$ and $(l_i, l_j, (\widetilde{\mSigma}_e)_{ij})$ yields
\begin{equation}
\label{app:eq:unidentifiability_sufficient_condition_li}
    l_i^2 (\widetilde{\mSigma}_e)_{ij} = l_j^2 \; (\widetilde{\mSigma}_e)_{ij} \enspace.
\end{equation}

Now suppose $(\mSigma_x)_{ij} \neq 0$ or $(\widetilde{\mSigma}_e)_{ij} \neq 0$.  
From~\eqref{app:eq:unidentifiability_sufficient_condition_bi} and~\eqref{app:eq:unidentifiability_sufficient_condition_li} it follows that $b_i^2 = b_j^2$ or $l_i^2 = l_j^2$. Since $b_i^2 + l_i^2 = 1$, both equalities hold simultaneously:
\begin{equation}
    |b_i| = |b_j|
    \qquad \text{and} \qquad
    l_i = l_j
\end{equation}
where the asbolute value can be dropped for $l_i$ because $l_i = \mathrm{var}(e_i) > 0$.
Plugging these equalities into~\eqref{app:eq:unidentifiability_condition_temporary} yields
\begin{equation}
    l_j^2 (\widetilde{\mSigma}_e)_{ij}
    =
    \pm (1 - b_j^2) (\mSigma_x)_{ij}
\end{equation}
hence
\begin{equation}
    |(\widetilde{\mSigma}_e)_{ij}|
    =
    |(\mSigma_x)_{ij}| \enspace.
\end{equation}
Moreover, for~\eqref{app:eq:equality_condition_scalar} to hold we must also have the sign equality
\begin{equation}
    \mathrm{sign}(b_i) \; \mathrm{sign}(b_j)
    =
    \mathrm{sign}((\mSigma_x)_{ij}) \; \mathrm{sign}((\widetilde{\mSigma}_e)_{ij}) \enspace.
\end{equation}

Altogether, either both $(\mSigma_x)_{ij}$ and $(\widetilde{\mSigma}_e)_{ij}$ vanish, or the following four conditions hold:
\begin{align}
    |(\widetilde{\mSigma}_e)_{ij}| &= |(\mSigma_x)_{ij}| \ne 0 \label{app:eq:condition_1} \\
    |b_i| &= |b_j| \label{app:eq:condition_2} \\
    l_i &= l_j \label{app:eq:condition_3} \\
    \mathrm{sign}(b_i) \; \mathrm{sign}(b_j) &= \mathrm{sign}((\mSigma_x)_{ij}) \; \mathrm{sign}((\widetilde{\mSigma}_e)_{ij}) \enspace. \label{app:eq:condition_4}
\end{align}
Conversely, if $(\mSigma_x)_{ij} = (\widetilde{\mSigma}_e)_{ij} = 0$, or if conditions~\eqref{app:eq:condition_1}–\eqref{app:eq:condition_4} are satisfied, then~\eqref{app:eq:equality_condition_scalar} holds.

\paragraph{Conclusion ---} By contraposition,~\eqref{app:eq:temporary_condition} is violated whenever there exists $(i, j)$ such that one of the following cases holds:
\begin{itemize}
    \item $b_i = 0$, $b_j \ne 0$, and $(\widetilde{\mSigma}_e)_{ij} \ne 0$

    \item $b_i \ne 0$, $b_j = 0$, and $(\mSigma_x)_{ij} \ne 0$

    \item $b_i \ne 0$, $b_j \ne 0$, $(\mSigma_x)_{ij} \ne 0$ or $(\widetilde{\mSigma}_e)_{ij} \ne 0$, and at least one of the conditions~\eqref{app:eq:condition_1}–\eqref{app:eq:condition_4} is not met.
\end{itemize}

In particular, a simple sufficient condition is
\begin{equation}
    \exists i, j : \qquad \big( b_i \ne 0 \qquad \text{or} \qquad b_j \ne 0 \big) \qquad \text{and} \qquad |(\widetilde{\mSigma}_e)_{ij}| \ne |(\mSigma_x)_{ij}| \enspace.
\end{equation}

\newpage
\section{Cross-covariance-based criterion}
\label{app:sec:cross_covariance_based_criterion}

Consider the setting described in Appendix~\ref{app:ssec:basic_setting}.

\subsection{Derivation of the cross-covariance-based criterion}
\label{app:ssec:derivation_of_cross_covariance_based_criterion}

A simple approach to infer the causal direction is to exploit the assumption that root variables and residuals are independent. Specifically, if the true direction is $x \rightarrow y$, then $\vx$ and $\ve$ are independent; if the true direction is $y \rightarrow x$, then $\vy$ and $\vd$ are independent. 
This implies that the cross-moment matrix $\E[\ve \vx^\top]$ should vanish when $x \rightarrow y$, while $\E[\vd \vy^\top]$ should vanish when $y \rightarrow x$. 
Comparing the Frobenius norms of these matrices therefore provides a natural criterion, based solely on cross-covariances:
\begin{equation}
    \mathrm{FC} := \left\Vert \E[\vd \vy^\top] \right\Vert_F - \left\Vert \E[\ve \vx^\top] \right\Vert_F \enspace.
\end{equation}
Intuitively, if this criterion is positive we deduce that $x \rightarrow y$, and if negative we deduce that $y \rightarrow x$.

\subsection{Conditions for strict positivity of the cross-covariance criterion}
\label{app:ssec:non_negativity_of_cross_covariance_based_criterion}

Assume without loss of generality that the true causal direction is $x \rightarrow y$, and let us analyze the sign of the criterion $\mathrm{FC}$.

Given that $x \rightarrow y$, the true model is 
\begin{equation}
    \vy = \mB \vx + \ve
\end{equation}
where $\mB = \mathrm{diag}(\mathrm{cov}(x^1, y^1), \dots, \mathrm{cov}(x^m, y^m))$, and $\ve$ is independent from $\vx$. So, we have 
\begin{equation}
\label{app:eq:cross_covariance_ex_is_0}
    \E[\vx \ve^\top] = \vzero \enspace.
\end{equation}
In the wrong direction, the residuals take the form 
\begin{equation}
    \vd = \vx - \mC \vy , \qquad \mC = \mB \enspace.
\end{equation}
The cross-covariance between observations $\vy$ and residuals $\vd$ is then
\begin{align}
    \mathbb{E} \big[ \vy \vd^\top \big] 
    &= \mathbb{E} \big[ \vy (\vx - \mB \vy)^\top \big] \\
    &= \mathbb{E} \big[ (\mB \vx + \ve) (\vx - \mB (\mB \vx + \ve))^\top \big] \\
    &= \mathbb{E} \big[ \mB \vx \vx^\top (\mI - \mB^2) \big] 
    + \mathbb{E} \big[ \ve \vx^\top (\mI - \mB^2)) \big] 
    - \mathbb{E} \big[ \mB \vx \ve^\top \mB \big] 
    - \mathbb{E} \big[ \ve \ve^\top \mB \big] \\
    &= \mB \; \mSigma_x \; (\mI - \mB^2) - \mSigma_e \; \mB \label{app:eq:cross_covariance_dy}
\end{align}
where $\mSigma_x = \mathbb{E} \big[ \vx \vx^\top \big]$ and $\mSigma_e = \mathbb{E} \big[ \ve \ve^\top \big]$.

From~\eqref{app:eq:cross_covariance_ex_is_0} and the non-negativity of the Frobenius norm, we see that $\mathrm{FC}$ is always non-negative. Moreover, using~\eqref{app:eq:cross_covariance_dy}, we deduce that $\mathrm{FC} = 0$ if and only if
\begin{equation}
    \mB \; \mSigma_x \; (\mI - \mB^2) - \mSigma_e \; \mB = \vzero \enspace.
\end{equation}
Note that the fact that $l_i > 0$, for all $i$, makes $\mL$ invertible. So, using the identity $\mI - \mB^2 = \mL^2$, and multiplying on the right by $\mL^{-1}$, we obtain
\begin{equation}
    \mB \; \mSigma_x \; \mL
    =
    \mSigma_e \; \mB \; \mL^{-1}
\end{equation}
hence
\begin{equation}
    \mB \; \mSigma_x \; \mL
    =
    \mL \; \widetilde{\mSigma}_e \; \mB
\end{equation}
since $\mSigma_e = \mL \; \widetilde{\mSigma}_e \; \mL$, and $\mB$ and $\mL$ commute.

This condition coincides exactly with~\eqref{app:eq:temporary_condition}. 
In particular, this shows that the likelihood-based criterion and the cross-covariance-based criterion share the same consistency assumptions.

\newpage
\section{Identifiability of LiMVAM}
\label{app:sec:identifiability_limvam}

\subsection{Multivariate conditions for strict positivity of a criterion} 
\label{app:ssec:multivariate_conditions_strict_positivity}

The following assumption is a multivariate generalization of the assumption detailed in~\ref{app:ssec:conditions_for_strict_positivity_of_likelihood_based_criterion} and~\ref{app:ssec:non_negativity_of_cross_covariance_based_criterion}.
\begin{assumption}[Diversity of the views]
\label{assumption:diversity_condition}
Let $j \ne j'$ denote any two component indices such that $\vx_j \rightarrow \vx_{j'}$.
Then, there exist two views $i$ and $i'$ such that the three following conditions hold:
\begin{enumerate}
\item $\mathrm{corr}(\vx_j^i, \vx_{j'}^i) \ne 0 \ \text{or} \ \mathrm{corr}(\vx_j^{i'}, \vx_{j'}^{i'}) \ne 0$
\inlineitem $\mathrm{corr}(\vx_j^i, \vx_j^{i'}) \ne 0$ or $\mathrm{corr}(\ve_{j'}^i, \ve_{j'}^{i'}) \ne 0$ 

\item At least one of the four following inequalities is met:
\begin{align*}
&|\mathrm{corr}(\vx_j^i, \vx_j^{i'})| \ne |\mathrm{corr}(\ve_{j'}^i, \ve_{j'}^{i'})|, 
\,
|\mathrm{corr}(\vx_j^i, \vx_{j'}^i)| \ne |\mathrm{corr}(\vx_j^{i'}, \vx_{j'}^{i'})|,
\,
\mathrm{var}(\ve_{j'}^i) \ne \mathrm{var}(\ve_{j'}^{i'}) 
\\
&\mathrm{sign}(\mathrm{corr}(\vx_j^i, \vx_{j'}^i)) \; \mathrm{sign}(\mathrm{corr}(\vx_j^{i'}, \vx_{j'}^{i'})) \ne \mathrm{sign}(\mathrm{corr}(\vx_j^i, \vx_j^{i'})) \; \mathrm{sign}(\mathrm{corr}(\ve_{j'}^i, \ve_{j'}^{i'}))
\end{align*}
\end{enumerate}
where $\mathrm{corr}$ denotes the correlation coefficient and $\mathrm{sg}$ denotes the sign function.
\end{assumption}
These conditions can only be met for indices $i \neq i'$, so they can be interpreted as a need for correlation and diversity across views. Because they may be hard to interpret at first glance, we next  visualize in Figure~\ref{fig:visualization_of_assumption} what these conditions mean for the causal graph.
\begin{figure}[ht]
    \centering
    \includegraphics[width=\textwidth]{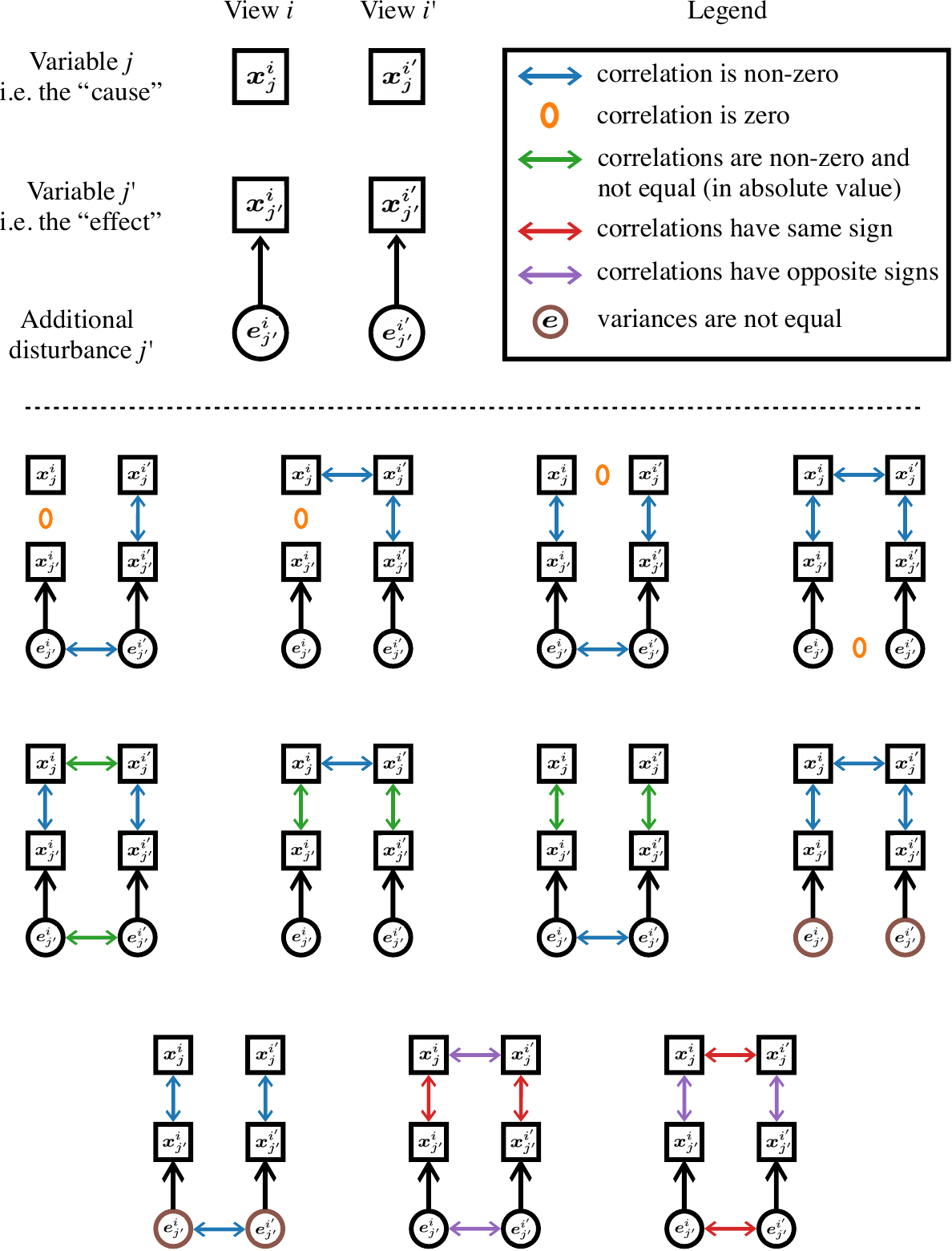}
    \caption{Assume the direction is $\vx_j \rightarrow \vx_{j'}$. Assumption~\ref{assumption:diversity_condition} corresponds to being in one of the 11 situations showed in the figure. Note that no arrow means that no information is required.}
    \label{fig:visualization_of_assumption}
\end{figure}

In particular, Assumption~\ref{assumption:diversity_condition} immediately implies its simpler version used in the main text, Assumption~\ref{assumption:simple_diversity_condition}.

\subsection{An example for interpreting the SOS-based identifiability condition in Assumption~\ref{assumption:simple_diversity_condition}}
\label{app:sec:condition_intepretation}
 
Consider one view and two variables denoted for notational simplicity $x$ and $y$. We have %
\begin{equation}
y^1=b^1 x^1 + e^1
\end{equation}
and assume for simplicity that $x^1$ and $y^1$ have been standardized. 

Now, suppose we observe two datasets from this one view, denote them by $(x^1,y^1)$ and $({x^1}',{y^1}')$, which have identical distributions and are independent of each other.
Crucially, all this data is from one view.
However, %
we could artificially generate a new “view” by adding those two datasets together:
\begin{equation}
x^2=x^1 + {x^1}'
\end{equation}
and likewise for $y^2$ and $e^2$. Clearly the new data follows the SEM with the same $b$:
\begin{equation}
y^2=b^1 x^2 + e^2
\end{equation}
since we are simply adding each of the terms separately in the two datasets.

But this “multi-view” data is degenerate. In fact, the correlation coefficients are
\begin{equation}
\text{corr}(x^1,x^2)=\frac{\text{cov}(x^1,x^1) + \text{cov}(x^1,{x^1}')}{\text{std}(x^1)\text{std}(x^2)} = \frac{\text{var}(x^1)}{\sqrt{2}\text{std}(x^1)\text{std}(x^1)} =\frac{1}{\sqrt{2}} \enspace.
\end{equation}
An exactly identical calculation applies for $\text{corr}(e^1,e^2)$ which is also equal to $1/\sqrt{2}$.

Thus, we see that this case violates the simple identifiability condition in Assumption~\ref{assumption:simple_diversity_condition}. This is understandable since no new information is created when computing the variable in the second view, and the two views are thus degenerate.

\clearpage
\newpage
\subsection{Proof of Theorem~\ref{thm:identifiability_causal_matrices_pairwise}}
\label{app:ssec:proof_of_thm1}

We proceed in three steps. (i) We show that the LiMVAM model always admits at least one root variable and that our LR and FC criteria recover one consistently. (ii) We prove a \emph{recursive residuals} lemma: removing a recovered root and regressing on it preserves the model form and the causal ordering, which implies that the full (partial) ordering is consistently recoverable. (iii) Conditional on the recovered ordering, each view-specific strictly lower-triangular system can be estimated consistently, hence the adjacency matrices are identifiable.

\paragraph{Preliminaries}
For clarity, we unpack the “shared ordering” assumption. This assumption is equivalent to the existence of a \emph{main} DAG $\cG$ that is the union of view-specific DAGs $\{\cG^i\}_{i=1}^m$: an edge appears in $\cG$ if it appears in at least one $\cG^i$. Root variables are therefore common candidates across views. Furthermore, in terms of vocabulary, we distinguish (a) \emph{directed paths} (a sequence of edges with consistent direction) from (b) \emph{direct edges} (one edge between adjacent variables). An \emph{indirect path} is a directed path with at least two edges.

\subsubsection{Recovering a root variable}
\label{app:sssec:proof_of_theorem_part_1}

Let observations follow LiMVAM in~\eqref{eq:multivariate_model}. Since all views share an ordering, there exists a (possibly non-unique) permutation $\mP$ such that
\begin{equation}
    \mT^i = \mP^\top \mB^i \mP \quad \text{is strictly lower triangular for all } i\in[\![1,m]\!] \enspace.
\end{equation}
Reordering the observations and disturbances with $\vx'^i = \mP \vx^i$ and $\ve'^i=\mP\ve^i$, the model is
\begin{equation}
    \vx'^i = \mT^i \vx'^i + \ve'^i, \qquad i\in[\![1,m]\!] \enspace.
\end{equation}
The first row of each $\mT^i$ is zero, so $x'^i_1$ is exogenous; hence a root variable always exists.

To \emph{find} a root, consider distinct indices $(j,k)$ and define $M_{jk}$ to be the criterion from~\eqref{eq:criterion} or~\eqref{eq:criterion_covariance} (with $M_{kj}=-M_{jk}$). Stack these into $\mM\in\R^{p\times p}$ with zero diagonal, and analyze signs under Assumption~\ref{assumption:diversity_condition} (which concerns only direct relations):
\begin{enumerate}
    \item \textbf{Direct edge:} $\vx_j \to \vx_k$ or $\vx_k \to \vx_j$. Under the bivariate conditions derived from Assumption~\ref{assumption:diversity_condition}, $M_{jk}>0>M_{kj}$ when $\vx_j\to \vx_k$, and $M_{jk}<0<M_{kj}$ when $\vx_k\to \vx_j$.

    \item \textbf{Indirect path:} $\vx_j \to \dots \to \vx_k$ or $\vx_k \to \dots \to \vx_j$.  
    Consider the reduced form of the model in~\eqref{eq:multi_view_ica_model}, $\vx^i = \mA^i \ve^i$ with $\mA^i = (\mI - \mB^i)^{-1}$. The entry $A^i_{kj}$ is the \emph{total causal effect} of $x_j^i$ on $x_k^i$, \textit{i.e.} the sum over all directed paths from $j$ to $k$ of the products of edge coefficients. Hence for each view $i$,
    \begin{equation}
        x_k^i = A^i_{kj} x_j^i + \epsilon^i
    \end{equation}
    where $\epsilon^i$ collects all terms not caused by $x_j^i$ and is independent of $x_j^i$. Thus, the pair $(x_j^i, x_k^i)$ behaves as if there were a direct edge with coefficient $A^i_{kj}$, and the sign analysis of Item~1 applies but with non-strict inequalities: $M_{jk}\ge 0\ge M_{kj}$ when $\vx_j \to \dots \to \vx_k$ (and conversely). If the bivariate conditions hold for $(j,k)$, the inequalities are strict; however, we do not require this here.

    \item \textbf{No relation, no common ancestor:} then $\vx_j$ and $\vx_k$ depend on disjoint disturbance sets and are independent, hence $M_{jk}=M_{kj}=0$.

    \item \textbf{No relation but with a common ancestor.}  
    If $\vx_j$ and $\vx_k$ are not connected by any directed path but share at least one ancestor, then their pairwise criteria $M_{jk}$ and $M_{kj}$ may take mixed signs. However, since each of $\vx_j$ and $\vx_k$ has at least one parent, there exist variables $\vx_{j'}$ and $\vx_{k'}$ such that $M_{j j'}<0$ and $M_{k k'}<0$. Thus, even if the signs of $M_{jk}$ and $M_{kj}$ are not determined, the $j$-th and $k$-th rows of $\mM$ necessarily contain negative entries.  
    For instance, with three variables, $\vx_1 \to \vx_2$ and $\vx_1 \to \vx_3$ makes $\vx_2$ and $\vx_3$ siblings: there is no directed path between them, but both rows $2$ and $3$ of $\mM$ contain negative entries due to their direct parent $\vx_1$.
\end{enumerate}

Consequently, $\vx_j$ is a root (no incoming edges) \emph{if and only if} the $j$-th row of $\mM$ has only nonnegative entries. Hence a root exists and is consistently detectable. If multiple roots exist, we pick one at random.

\subsubsection{Recovering the causal ordering}
\label{app:sssec:proof_of_theorem_part_2}

We now show the recursive step.

\begin{lemma}[Residuals preserve LiMVAM and the ordering]
    \label{app:lemma:residuals_follow_limvam}
    Let $j$ be a root variable. Regress every other variable on $\vx_j$ (within each view) and remove $\vx_j$. The residual vector still follows a LiMVAM model with the \emph{same} ordering over the remaining variables.
\end{lemma}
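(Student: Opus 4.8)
The plan is to reduce to the identity ordering and exploit the reduced (ICA) form of the model. Since $j$ is a root, we may place it first in the shared ordering; after applying the common permutation $\mP$ we may assume without loss of generality that $j=1$ and that each $\mT^i = \mP^\top \mB^i \mP$ is strictly lower triangular. Writing the model in its reduced form $\vx^i = \mA^i \ve^i$ with $\mA^i = (\mI - \mT^i)^{-1}$, the matrix $\mA^i$ is unit lower triangular (the inverse of a unit lower-triangular matrix). In particular its first row is $(1, 0, \dots, 0)$, so that $x_1^i = e_1^i$ exactly, which is just a restatement of the fact that the root has no parents.

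First I would compute the population least-squares coefficient of $x_k^i$ on $x_1^i$ for each $k \neq 1$ and each view $i$. Using $x_k^i = \sum_l A^i_{kl}\, e_l^i$, the within-view mutual independence of the disturbances, and $x_1^i = e_1^i$, I get $\mathrm{cov}(x_k^i, x_1^i) = A^i_{k1}\,\mathrm{var}(e_1^i)$ and $\mathrm{var}(x_1^i) = \mathrm{var}(e_1^i)$, so the regression coefficient equals the total effect $A^i_{k1}$. The residual is therefore
\begin{equation}
    r_k^i = x_k^i - A^i_{k1}\, x_1^i = \sum_{l \geq 2} A^i_{kl}\, e_l^i \enspace,
\end{equation}
i.e. regressing on the root removes exactly its disturbance $e_1^i$, leaving a linear mixture of the remaining, mutually independent disturbances.

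Next I would identify the mixing of the residual vector. Collecting $\vr^i = (r_k^i)_{k \geq 2}$ and $\tilde{\ve}^i = (e_l^i)_{l \geq 2}$, the displayed identity reads $\vr^i = \tilde{\mA}^i \tilde{\ve}^i$, where $\tilde{\mA}^i$ is the principal submatrix of $\mA^i$ obtained by deleting the first row and column. Deleting the first row and column of a unit lower-triangular matrix again yields a unit lower-triangular matrix, so $\tilde{\mA}^i$ is invertible with $(\tilde{\mA}^i)^{-1} = \mI - \tilde{\mT}^i$ for some strictly lower-triangular $\tilde{\mT}^i$. Hence $\vr^i = \tilde{\mT}^i \vr^i + \tilde{\ve}^i$, which is precisely a LiMVAM on the remaining $p-1$ variables. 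The shared-ordering property is inherited because the same $\mP$ triangularizes every view simultaneously, and the within-view independence together with the across-view correlations of $\tilde{\ve}^i$ are inherited from $\ve^i$, since $\tilde{\ve}^i$ is merely the sub-collection $(e_l^i)_{l \neq j}$.

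The main obstacle is the identification of the regression coefficient with the total effect $A^i_{k1}$; everything else is linear-algebraic bookkeeping. This is exactly where rootness is essential: only because $x_1^i = e_1^i$ is the bare disturbance does the population regression subtract precisely the $e_1^i$ contribution and nothing else, so that the residual mixture involves exactly the disturbances indexed by $l \neq j$. Had $j$ possessed parents, regressing on $x_j^i$ would reintroduce ancestor disturbances and the clean principal-submatrix structure would be destroyed. I would therefore state and verify this coefficient computation carefully, after which the lower-triangularity of the deleted submatrix delivers the conclusion immediately.
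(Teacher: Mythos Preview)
Your proposal is correct and follows essentially the same approach as the paper: reduce to the identity ordering via the shared permutation, pass to the reduced form $\vx^i=\mA^i\ve^i$ with $\mA^i$ unit lower triangular, identify the population regression coefficient on the root with the total effect $A^i_{k1}$, and observe that the residuals are mixed by the principal submatrix $\mA^i_{2:p,2:p}$. Your write-up is slightly more explicit than the paper's in spelling out that $(\tilde{\mA}^i)^{-1}=\mI-\tilde{\mT}^i$ with $\tilde{\mT}^i$ strictly lower triangular and that the within-view independence and cross-view correlations of the disturbances are inherited by the sub-collection $(e_l^i)_{l\neq j}$, but the argument is the same.
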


\begin{proof}
    The proof is a direct adaptation of~\citet[Lemma 2 and Corollary 1]{shimizu2011directlingam} to the LiMVAM model: the original result is in the single-view setting with non-Gaussian disturbances.

    Choose a permutation $\mP$ that places $\vx_j$ first for simplicity (since $\vx_j$ is a root variable, this permutation exists). In the permuted system we write
    \begin{equation}
        \vx'^i = \mT^i \vx'^i + \ve'^i, \qquad \vx'^i = \mP \vx^i, \;\; \ve'^i = \mP \ve^i
    \end{equation}
    where each $\mT^i$ is strictly lower triangular and $\vx'_1 = \vx_j$.  
    Equivalently, the model can be expressed in reduced form
    \begin{equation}
        \vx'^i = \mA^i \ve'^i, \qquad \mA^i := (\mI-\mT^i)^{-1}
    \end{equation}
    where the matrix $\mA^i$ is lower triangular with unit diagonal.
    
    The entry $A^i_{k1}$ ($k \ge 2$) quantifies the dependence of $x'^i_k$ on $e'^i_1$. Since $x'_1=e'^i_1$ is exogenous, we have
    \begin{equation}
        \E \left[ x'^i_k x'^i_1 \right] = A^i_{k1} \, \E \left[ (x'^i_1)^2 \right]
    \end{equation}
    so the least-squares regression coefficient of $x'^i_k$ on $x'^i_1$ equals $A^i_{k1}$.  
    Therefore, regressing each $x'^i_k$ ($k\ge 2$) on $x'^i_1$ yields residuals $\vr^i_{(1)} \in \R^{p-1}$ such that
    \begin{equation}
        \vr^i_{(1)} 
        = \vx'^i_{2:p} - \mA^i_{2:p,1}\,x'^i_1 
        = \mA^i_{2:p,:}\ve'^i - \mA^i_{2:p,1} e'^i_1 
        = \mA^i_{2:p,2:p}\,\ve'^i_{2:p} \enspace.
    \end{equation}
    Here $\mA^i_{2:p,2:p}$ is again lower triangular with unit diagonal, and $\ve'^i_{2:p}$ has mutually independent components. Hence $\vr^i_{(1)}$ follows a LiMVAM model on $p-1$ variables. Moreover, since $\mA^i_{2:p,2:p}$ is precisely the submatrix of $\mA^i$ obtained by deleting the first row and column, the relative structure among the remaining variables is unchanged. In other words, the causal ordering of variables $2,\dots,p$ is preserved.
\end{proof}

Applying Lemma~\ref{app:lemma:residuals_follow_limvam} recursively --- find a root via $\mM$ (under Assumption~\ref{assumption:diversity_condition}), regress out its effect, remove it, and repeat --- recovers the full (partial) causal ordering consistently.

\subsubsection{Recovering the adjacency matrices}
\label{app:sssec:proof_of_theorem_part_3}

Sections~\ref{app:sssec:proof_of_theorem_part_1} and~\ref{app:sssec:proof_of_theorem_part_2} showed that, under Assumption~\ref{assumption:diversity_condition}, the algorithms PairwiseLiMVAM and DirectLiMVAM consistently recover the causal ordering. Equivalently, they recover a permutation matrix $\mP$ that makes the $\mT^i = \mP^\top \mB^i \mP$ strictly lower triangular. So, we can reorder the variables as follows:
\begin{equation}
    \vx'^i = \mT^i \vx'^i + \ve'^i \;, \qquad i \in [\![1, m]\!] \enspace.
\end{equation}

Fix $j \in [\![1, p]\!]$. We have
\begin{equation}
    x'^i_j = \sum_{k=1}^{j-1} T^i_{jk} x'^i_k + e'^i_j \;, \qquad i \in [\![1, m]\!] \enspace.
\end{equation}
Because $e'^i_j$ is independent of $(x'^i_1,\dots,x'^i_{j-1})$, the regressors are exogenous. This is a system of linear regressions across views with potentially cross-view–correlated errors. Feasible GLS is consistent in this setting: once a consistent estimate of the cross-view error covariance is obtained, the GLS estimator converges to the true coefficients~\citep{greene2003econometric}. Therefore, $\{T^i_{jk}\}_{k<j}$ are consistently estimated for every $(i,j)$, hence each $\mT^i$ is consistently recovered. Finally, $\mB^i = \mP \mT^i \mP^\top$, so the view-specific adjacency matrices are identifiable. This concludes the proof.

\newpage
\section{SOS-based Algorithms for LiMVAM}
\label{app:sec:sos_based_algorithms_limvam}

\subsection{Consistent estimator of the likelihood-based criterion}
\label{app:ssec:implementation_of_likelihood_based_criterion}

Assume we have observed $n$ independent samples, concatenated as $\mX = [\vx_1, \dots, \vx_n]$ and $\mY = [\vy_1, \dots, \vy_n]$, with sample index $j$.

In practice, the four covariance matrices in~\eqref{app:eq:true_criterion} can be estimated by consistent estimators from the samples in $\mX$ and $\mY$.
Since $b_i = c_i = \mathrm{cov}(x_i, y_i)$ for all $i$, we define
\begin{equation}
    \forall i: \qquad 
    \hat{b}_i 
    := 
    \hat{c}_i 
    :=
    \frac1n \sum_{j=1}^n x_{ij} \, y_{ij}
    \overset{p}{\longrightarrow} 
    \mathrm{cov}(x_i, y_i) 
    = 
    b_i
    =
    c_i
\end{equation}
so that $\hat{\bm{b}} := (\hat{b}_1, \dots, \hat{b}_m)^\top \overset{p}{\longrightarrow} \bm{b}$ and $\hat{\vc} := (\hat{c}_1, \dots, \hat{c}_m)^\top \overset{p}{\longrightarrow} \vc$. 
Using these vectors, we define the empirical covariances of the residuals
\begin{align}
    \widehat{\mSigma}_e &:= \frac1n \sum_{j=1}^n (\vy_j - \hat{\bm{b}} \odot \vx_j) (\vy_j - \hat{\bm{b}} \odot \vx_j)^\top \overset{p}{\longrightarrow} \mSigma_e \\
    \widehat{\mSigma}_d &:= \frac1n \sum_{j=1}^n (\vx_j - \hat{\bm{b}} \odot \vy_j) (\vx_j - \hat{\bm{b}} \odot \vy_j)^\top \overset{p}{\longrightarrow} \mSigma_d
\end{align}
and we also define the empirical covariances of the observations
\begin{align}
    \widehat{\mSigma}_x &:= \frac1n \sum_{j=1}^n \vx_j \vx_j^\top \overset{p}{\longrightarrow} \mSigma_x \\
    \widehat{\mSigma}_y &:= \frac1n \sum_{j=1}^n \vy_j \vy_j^\top \overset{p}{\longrightarrow} \mSigma_y \enspace.
\end{align}
We can then evaluate the criterion in these estimators.
\begin{equation}
    \widehat{\mathrm{LR}}
    :=
    \cL (\widehat{\mSigma}_x, \widehat{\mSigma}_e; x \rightarrow y)
    -
    \cL (\widehat{\mSigma}_y, \widehat{\mSigma}_d; y \rightarrow x)
    =
    - \log |\widehat{\mSigma}_x| - \log |\widehat{\mSigma}_e|
    + \log |\widehat{\mSigma}_y| + \log |\widehat{\mSigma}_d|
\end{equation}
and it follows that this empirical criterion is a consistent estimator of the true criterion:
\begin{equation}
    \cL (\widehat{\mSigma}_x, \widehat{\mSigma}_e; x \rightarrow y)
    -
    \cL (\widehat{\mSigma}_y, \widehat{\mSigma}_d; y \rightarrow x)
    \overset{p}{\longrightarrow}
    \cL (\mSigma_x, \mSigma_e; x \rightarrow y)
    -
    \cL (\mSigma_y, \mSigma_d; y \rightarrow x) 
    =: \mathrm{LR} \enspace.
\end{equation}

\subsection{Consistent estimator of the cross-covariance-based criterion}
\label{app:ssec:implementation_of_covariance_based_criterion}

Assume we have observed $n$ independent samples, concatenated as $\mX = [\vx_1, \dots, \vx_n]$ and $\mY = [\vy_1, \dots, \vy_n]$, with sample index $j$.

In practice, the matrices $\mathbb{E} \big[ \ve \vx^\top \big]$ and $\mathbb{E} \big[ \vd \vy^\top \big]$ are estimated from the data.
Performing univarite least-squares regression of $\vy$ on $\vx$ (resp. $\vx$ on $\vy$) yields the residual matrices $\widehat{\mE} = [\widehat{\ve}_1, \dots, \widehat{\ve}_n]$ (resp. $\widehat{\mD} = [\widehat{\vd}_1, \dots, \widehat{\vd}_n]$). 
The corresponding cross-covariance estimators are
\begin{align}
    \frac1n \sum_{j=1}^n \widehat{\ve}_j \vx_j^\top
    \overset{p}{\longrightarrow} 
    \mathbb{E} \big[ \ve \vx^\top \big] 
    \qquad \text{and} \qquad
    \frac1n \sum_{j=1}^n \widehat{\vd}_j \vy_j^\top
    \overset{p}{\longrightarrow} 
    \mathbb{E} \big[ \vd \vy^\top \big] \enspace.
\end{align}
Consequently, the empirical criterion
\begin{equation}
    \widehat{\mathrm{FC}}
    :=
    \left\Vert \frac1n \sum_{j=1}^n \widehat{\vd}_j \vy_j^\top \right\Vert_F
    -
    \left\Vert \frac1n \sum_{j=1}^n \widehat{\ve}_j \vx_j^\top \right\Vert_F
\end{equation}
is a consistent estimator of the population criterion:
\begin{equation}
    \widehat{\mathrm{FC}}
    \overset{p}{\longrightarrow} 
    \left\Vert \mathbb{E} \big[ \vd \vy^\top \big] \right\Vert_F
    -
    \left\Vert \mathbb{E} \big[ \ve \vx^\top \big] \right\Vert_F
    =: 
    \mathrm{FC} \enspace.
\end{equation}

\newpage
\subsection{Algorithms for estimating the two criteria}
\label{app:ssec:both_criteria_algo}

In this section, the observations $\mX, \mY \in \R^{m \times n}$ lie in two dimensions: the $m$ views, and the $n$ independent samples. We use the index $i$ for the views, and $k$ for the samples.

\begin{algorithm}[h]
    \caption{Likelihood-based criterion}
    \label{alg:likelihood_based_criterion}

    \textbf{Input:} Observations $\mX, \mY \in \R^{m \times n}$ corresponding to two variables, residuals $\mE, \mD \in \R^{m \times n}$ corresponding to the directions $x \rightarrow y$ and $y \rightarrow x$, respectively.

    \begin{enumerate}

        \item Compute the covariance matrices for the direction $x \rightarrow y$:
        \begin{align}
            \widehat{\mSigma}_x = \frac1n \sum_{j=1}^n \mX_j \mX_j^\top 
            \qquad \text{and} \qquad
            \widehat{\mSigma}_e = \frac1n \sum_{j=1}^n \mE_j \mE_j^\top
        \end{align}
        and for the direction $y \rightarrow x$:
        \begin{align}
            \widehat{\mSigma}_y = \frac1n \sum_{j=1}^n \mY_j \mY_j^\top
            \qquad \text{and} \qquad
            \widehat{\mSigma}_d = \frac1n \sum_{j=1}^n \mD_j \mD_j^\top \enspace.
        \end{align}

        \item Compute the likelihood based-criterion:
        \begin{equation}
            \widehat{\mathrm{LR}}
            =
            - \log |\widehat{\mSigma}_x| - \log |\widehat{\mSigma}_e|
            + \log |\widehat{\mSigma}_y| + \log |\widehat{\mSigma}_d| \enspace.
        \end{equation}
    \end{enumerate}
    
    \textbf{Output:} Likelihood-based criterion $\widehat{\mathrm{LR}}$.
\end{algorithm}

\begin{algorithm}[h]
    \caption{Cross-covariance-based criterion}
    \label{alg:cross_covariance_based_criterion}
    
    \textbf{Input:} Observations $\mX, \mY \in \R^{m \times n}$ corresponding to two variables, residuals $\mE, \mD \in \R^{m \times n}$ corresponding to the directions $x \rightarrow y$ and $y \rightarrow x$, respectively.

    \begin{enumerate}
        \item Compute the second-moment matrices
        \begin{align}
            \mM_1 = \frac1n \sum_{j=1}^n \mE_j \mX_j^\top 
            \qquad \text{and} \qquad
            \mM_2 = \frac1n \sum_{j=1}^n \mD_j \mY_j^\top \enspace.
        \end{align}

        \item Compute the cross-covariance-based criterion:
        \begin{equation}
            \widehat{\mathrm{FC}}
            =
            \left\Vert \mM_2 \right\Vert_F
            -
            \left\Vert \mM_1 \right\Vert_F \enspace.
        \end{equation}
    \end{enumerate}
    
    \textbf{Output:} Cross-covariance-based criterion $\widehat{\mathrm{FC}}$.
\end{algorithm}

\subsection{How to find the root variable}
\label{app:ssec:find_root_from_M}

Assume we have a scalar criterion of the causal direction between two variables $\vx_j = (x_j^1, \dots, x_j^m)^\top$ and $\vx_k = (x_k^1, \dots, x_k^m)^\top$, such that the criterion is positive if $\vx_j \rightarrow \vx_k$ and negative in the opposite direction. Let $M_{jk}$ denote the criterion obtained for the pair $(j,k)$, and collect all pairwise criteria in a matrix $\mM$ with a zero diagonal.
We now have a matrix $\mM$ whose entries  determine the causal direction by their sign: $M_{ij} > 0$ indicates $\vx_i$ causes $\vx_j$, and  $M_{ij} < 0$ indicates that $\vx_j$ causes $\vx_i$. 
In this setting, $\vx_j$ is a root variable if and only if the $j$-th row of $\mM$ contains only non-negative entries. 
We follow~\citet[Section 3.2.2]{Hyva13JMLR} to derive a principled criterion for finding the root variable given $\mM$. We next recall their argument.

Suppose that the entries in $\mM$ follow a normal distribution, so that $M_{ij} \sim \mathcal{N}(\mu_{ij}, \sigma^2)$ where $\sigma^2$ models the estimation error due to finite samples (and is supposed to be constant for simplicity). Then, the log-likelihood of $\vx_i$ being the root variable is
\begin{align}
    \log \prod_{j=1}^p P(\mu_{ij} > 0 | M_{ij})
    &=
    \sum_{j=1}^p \log P \bigg(
    \frac{\mu_{ij} - M_{ij}}{\sigma} > \frac{-M_{ij}}{\sigma} | M_{ij}
    \bigg)
    \\
    &=
    \sum_{j=1}^p \log \Phi \bigg(
    \frac{M_{ij}}{\sigma}
    \bigg)
    \approx 
    \frac{-1}{2 \sigma^2}
    \sum_{j=1}^p 
    \min(0, M_{ij})^2
\end{align}
where $\Phi$ is the cumulative distribution function of a standardized Gaussian, whose log we then approximated. We therefore obtain the index of the root variable using
\begin{align}
    \arg\min_{i} \sum_{j=1}^p 
    \min(0, M_{ij})^2 \enspace.
\end{align}

\subsection{One-step Feasible Generalized Least Squares (FGLS)}
\label{app:ssec:one_step_FGLS}

The FGLS procedure begins with an Ordinary Least Squares (OLS) regression in each view to obtain residuals $(e'^1_j, \dots, e'^m_j)$, which are then used to estimate the cross-view covariance of the disturbances. In a second step, the OLS coefficients are adjusted  via Generalized Least Squares using this estimated covariance. In the population limit, this estimator coincides with OLS but is asymptotically more efficient whenever disturbances are correlated across views, as is the case here. 

The estimation proceeds as follows.
Assume we have observed $n$ independent samples. Here, we consider the model
\begin{equation}
    \vx'^i = \mT^i \vx'^i + \ve'^i \;, \qquad i \in [\![1, m]\!]
\end{equation}
where $\vx'^i$ and $\ve'^i$ are the reordered observations and disturbances, respectively, and $\mT^i$ are strictly lower triangular matrices. By construction, each variable $x'^i_j$ only depends on its predecessors $\vx'^i_{<j} := (x'^i_1, \dots, x'^i_{j-1})$.

Fix a variable index $j \in [\![1, p]\!]$. In view $i$, the structural equation reads
\begin{equation}
    x'^i_j = \sum_{k=1}^{j-1} T^i_{jk} \, x'^i_k + e'^i_j \enspace.
\end{equation}
Stacking all views, we obtain the block regression model
\begin{equation}
    \vx'_j
    =
    \mX_{<j} \, \mT_{<j} + \ve'_j
\end{equation}
where
\begin{itemize}
    \item $\vx'_j = \big(x'^{1}_j, \dots, x'^{m}_j \big)^\top \in \R^{mn}$ stacks all samples across the $m$ views,

    \item $\mX_{<j} = \mathrm{diag} \big(\vx'^1_{<j}, \dots, \vx'^m_{<j} \big) \in \R^{mn \times m(j-1)}$ is block-diagonal and contains the predecessors,

    \item $\mT_{<j} = \big( \mT^1_{j, <j}, \dots, \mT^m_{j, <j}\big)^\top \in \R^{m(j-1)}$ collects the coefficients,

    \item $\ve'_j = \big(e'^{1}_j, \dots, e'^{m}_j \big)^\top \in \R^{mn}$ are disturbances.
\end{itemize}
The disturbance vector $\ve'_j$ has covariance
\begin{equation}
    \Omega_j 
    := 
    \E \big[ \ve'_j (\ve'_j)^\top \big]
    =
    \mSigma_{\ve_j} \otimes \mI_n \in \R^{mn \times mn}
\end{equation}
where $\mSigma_{\ve_j} \in \R^{m \times m}$ captures cross-view covariances for component $j$.

The one-step FGLS procedure is:
\begin{enumerate}
    \item \textit{OLS step:}
    \begin{equation}
        \widehat{\mT}_{<j}^{\text{OLS}} 
        =
        \big( \mX_{<j}^\top \, \mX_{<j} \big)^{-1} \mX_{<j}^\top \, \vx'_j
    \end{equation}
    yielding residuals $\hat \ve'_{j} = \vx'_j - \mX_{<j} \, \widehat{\mT}_{<j}^{\text{OLS}}$.

    \item \textit{Covariance estimation:}
    \begin{equation}
        \widehat{\mSigma}_{\ve_j}
        = 
        \frac{1}{n} \sum_{k=1}^n \hat \ve'_{j,k} (\hat \ve'_{j,k})^\top
    \end{equation}
    where $k$ denotes the sample index.

    \item \textit{FGLS update:}
    \begin{equation}
        \widehat{\mT}_{<j}^{\text{FGLS}} 
        =
        \big( \mX_{<j}^\top \, \widehat{\mSigma}_{\ve_j}^{-1} \, \mX_{<j} \big)^{-1} \mX_{<j}^\top \, \widehat{\mSigma}_{\ve_j}^{-1} \, \vx'_j \enspace.
    \end{equation}
\end{enumerate}

This estimator reduces to OLS if $\mSigma_{\ve_j}$ is diagonal (no cross-view correlation). More generally, it achieves asymptotic efficiency by exploiting cross-view correlations.

\subsection{Full algorithm}

\begin{algorithm}[h]
    \caption{Pairwise-comparison algorithms for multi-view causal discovery}
    \label{alg:pairwise}

    \textbf{Input:} Observations $\mX \in \R^{m \times p \times n}$.
    
    \begin{enumerate}

        \item \textit{Preprocess the data} $\mX$. Subtract the mean over the samples axis.
        
        \item \textit{Estimate the causal ordering} $\mP$.
                
        \begin{enumerate}

            \item Perform pairwise regressions  between each pair of variables,  $\vx_j$ on $\vx_i$, such that  $i \neq j$. Compute the residuals $\ve_{i \rightarrow j}$. 
                         
            \item Compute a skew-symmetric matrix $\mM \in \R^{p \times p}$ that determines the causal direction between pairs of variables. \\
                        
            The entries $M_{ij}$ are computed with  \eqref{eq:criterion} or \eqref{eq:criterion_covariance} for all $i < j$, using the regression results $(\vx_i, \vx_j, \ve_{i \rightarrow j}, \ve_{j \rightarrow i})$. Then set $M_{ji} = -M_{ij}$. $M_{ij}$ positive means  $\vx_i$  causes $\vx_j$, negative means $\vx_j$ causes $\vx_i$. \\ 
                
            \item Determine the root variable $k$: it causes all others, so $M_{kj}$ should be positive for all $j$.
            \begin{equation}
                k = \argmin_i \sum_j \min(0, M_{ij})^2 \enspace.
            \end{equation}
        
            \item Remove the  root variable $k$ and replace observations with residuals obtained when regressing on $k$. Repeat.

            \item Store the estimated ordering in a permutation matrix $\mP$, and reorder the variables in $\mX$ according to this permutation. This yields a new model $\vx'^i = \mT^i \vx'^i + \ve'^i$, where $\vx'^i$ and $\ve'^i$ are reordered versions of $\vx^i$ and $\ve^i$, respectively, and the $\mT^i$ are strictly lower triangular matrices.

        \end{enumerate}

        \item \textit{Estimate the adjacency matrices} $\mB^i$. For each variable $j \in [\![2, p]\!]$, estimate the $j$-th row of all matrices $\mT^i$ with one-step Feasible Generalized Least Squares~\cite{zellner1962efficient}.
        Recover matrices $\mB^i = \mP^\top \mT^i \mP$.
        
    \end{enumerate}

    \textbf{Output:} Adjacency matrices $(\mB^1, \dots, \mB^m) \in \R^{m \times p \times p}$.
\end{algorithm}

\subsection{Computational complexity}
\label{app:ssec:computational_complexity_pairwise_direct_limvam}

The computational complexity of Algorithm~\ref{alg:pairwise}  can be broken down across the different steps. 

In Step 1, the preprocessing is in $O(m \cdot p \cdot n)$. 

In Step 2.a., we solve $O(p^2)$ LS regression problems, each of which has complexity $O(m \cdot n)$.

In Step 2.b., we compute $O(p^2)$ entries of a matrix. Each entry is computed either using the cross-covariance-based criterion in Algorithm~\ref{alg:cross_covariance_based_criterion} which is in $O(m^2 \cdot n)$, or using the likelihood-based criterion in Algorithm~\ref{alg:likelihood_based_criterion} which is in $O(m^2 \cdot n + m^3)$ due to 
computing 
log determinants. 
So the total complexity of this step is $O(m^2 \cdot p^2 \cdot n)$ or $O(m^2 \cdot p^2 \cdot n + m^3 \cdot p^2)$.

In Step 2.c., we make $O(p^2)$ operations.

In Step 2.d., we simplify redefine a matrix. 

These steps 2.a-d., are repeated $O(p)$ times, until all variables are removed, so this yields a complexity in either $O(m^2 \cdot p^3 \cdot n)$ or $O(m^2 \cdot p^3 \cdot n + m^3 \cdot p^3)$.

In Step 2.e., we reorder a matrix, which costs $O(m \cdot p \cdot n)$ operations.

In Step 3, we run the one-step Feasible GLS procedure, which is in $O(m^3 \cdot p^3 \cdot n)$. It scales cubically in the $m$ views and $p$ dimensions as it involves matrix multiplications and inversions. It then does $m$ matrix multiplications. Each of them involves permutation matrices which has a quadratic cost $O(p^2)$. So the cost of this step
is dominated by $O(m^3 \cdot p^3 \cdot n)$.

The total computational complexity of the algorithm is thus $O(m^3 \cdot p^3 \cdot n)$.

\newpage
\section{Identifiability of LiMVAM with shared disturbances}
\label{app:sec:identifiability_limvam_shared_disturbances}

In this section, we analyze the identifiability of the LiMVAM model with shared disturbances
\begin{equation}
\label{app:eq:multiview_model_shared_disturbances}
    \vx^i = \mB^i \vx^i + \mD^i \vs + \vn^i \;, \qquad i \in [\![1, m]\!]
\end{equation}
where the $\mD^i$ are diagonal matrices with positive entries on the diagonal, $\vs$ has mutually independent entries and second-order moment $\E[\vs \vs^\top] = \mI_p$, the view-specific noises are Gaussian $\vn^i \sim \cN(\vzero, \mSigma^i)$ with diagonal $\mSigma^i$, and the vectors $\vs$ and $\vn^i,i=1,\ldots,m$, are mutually independent.

\subsection{Proof of Theorem~\ref{theorem:identifiability_B_tilde_i} (first claim)}
\label{app:ssec:identifiability_B_tilde_i}

Consider two sets $\Theta = (\mD^1, \dots, \mD^m, \mSigma^1, \dots, \mSigma^m, \mB^1, \dots, \mB^m)$ 
and $\Theta' = (\mD'^1, \dots, \mD'^m, \mSigma'^1, \break \dots, \mSigma'^m, \mB'^1, \dots, \mB'^m)$ that parameterize the same statistical model in~\eqref{app:eq:multiview_model_shared_disturbances}, with $\mD^i$ and $\mD'^i$ being diagonal matrices with positive entries on the diagonal, $\mSigma^i$ and $\mSigma'^i$ being diagonal matrices with non-zero entries on the diagonal, and $\mB^i$ and $\mB'^i$ being DAG matrices.
The model in~\eqref{app:eq:multiview_model_shared_disturbances} can be reformulated as
\begin{align}
    \vx^i = (\mI - \mB^i)^{-1} \mD^i (\vs + (\mD^i)^{-1} \vn^i)
\end{align}
which corresponds to the Shared ICA model
\begin{align}
    \label{app:eq:multi_view_ICA_model}
    \vx^i = \mA^i (\vs + \tilde{\vn}^i)
\end{align}
for $\mA^i = (\mI - \mB^i)^{-1} \mD^i$ and $\tilde{\vn}^i = (\mD^i)^{-1} \vn^i$.
We can observe that the unmixing matrices $\mW^i = (\mA^i)^{-1} = (\mD^i)^{-1} (\mI - \mB^i)$ and $\mW'^i = (\mA'^i)^{-1} = (\mD'^i)^{-1} (\mI - \mB'^i)$ belong to the domain $\mathcal{W}$, which is included in the space of invertible matrices.
Thus, the two sets of $\mW^i$ and $\mW'^i$ matrices are valid sets of unmixing matrices for the same multi-view ICA model in~\eqref{app:eq:multi_view_ICA_model}.
Moreover, Assumption~\ref{assump:noise_diversity} allows the rescaled noises $\tilde{\vn}^i$ to meet the noise diversity condition of Shared ICA.
So, from the identifiability theory of multi-view ICA~\citep[Theorem $1$]{richard2021sharedica}, we know that there exist a sign-permutation matrix $\mQ$ such that for any view $i \in \llbracket 1, m \rrbracket$, we have
\begin{align}
\begin{split}
    \mW'^i
    &=
    \mQ^\top \mW^i
    \\
    \mSigma'^i
    &=
    \mQ^\top \mSigma^i \mQ \enspace.
\end{split}
\end{align}
Note that, contrary to the single-view context, the fact that we obtain $\mW$ up to \emph{sign} and permutation rather than \emph{scale} and permutation is because $\E[\vs \vs^\top] = \mI$.
Then, we apply Lemma~\ref{lemma:identifiability_lingam}, which shows that being in the domain $\mathcal{W}$ imposes $\mQ = \mI$, $\mD'^i = \mD^i$, and $\mB'^i = \mB^i$.

\subsection{Proof of Theorem~\ref{theorem:identifiability_B_tilde_i} (second claim)}
\label{app:ssec:identifiability_multiview_shared_P}

In the context of shared causal ordering $\mP$, we prove that, under Assumptions~\ref{assump:noise_diversity} and~\ref{assump:total_order_shared_P}, the decomposition of matrices $\mB^i$ into matrices $\mT^i$ and $\mP$ is unique.

Consider the two sets $\Theta = (\mSigma^1, \dots, \mSigma^m, \mP, \mT^1, \dots, \mT^m)$ and $\Theta' = (\bar{\mSigma}^1, \dots, \bar{\mSigma}^m, \bar{\mP}, \bar{\mT}^1, \dots, \bar{\mT}^m)$ that parameterize the same statistical model in~\eqref{app:eq:multiview_model_shared_disturbances}, where $\mP$ is some permutation matrix, $\mT^i$ are strictly lower triangular matrices, and $\bar{\mP}$ and $\bar{\mT}^i$ are the particular matrices given by Assumption~\ref{assump:total_order_shared_P}.
Note that no assumption on the sparsity of the $\mT^i$ is made.
From Theorem~\ref{theorem:identifiability_B_tilde_i}, we know that $\mSigma^i = \bar{\mSigma}^i$.

Assumption~\ref{assump:total_order_shared_P} amounts to saying that the “reunion” of the $\bar{\mT}^i$ represents a fully connected graph.
In other words, if we define the reunion as $\bar{\mT}^U = \sum_{i=1}^m \operatorname{abs}(\bar{\mT}^i)$, where $\operatorname{abs}$ denotes the element-wise absolute value function, then we are assuming that the strictly lower triangular part of $\bar{\mT}^U$ only has non-zero elements.
In a similar way to the definition of $\bar{\mT}^U$, let us define
\begin{align}
    \bar{\mW}^U = \sum_{i=1}^m \operatorname{abs}(\bar{\mW}^i)
\end{align}
where $\bar{\mW}^i = \mI - \bar{\mP}^\top \bar{\mT}^i \bar{\mP}$.
The non-zero elements of $\bar{\mP}^\top \bar{\mT}^i \bar{\mP}$ are outside of the diagonal, so matrices $\mI$ and $\bar{\mP}^\top \bar{\mT}^i \bar{\mP}$ contain non-zero elements at different locations.
Thus, we have
\begin{align}
    \operatorname{abs}(\mI - \bar{\mP}^\top \bar{\mT}^i \bar{\mP})
    =
    \mI + \operatorname{abs}(\bar{\mP}^\top \bar{\mT}^i \bar{\mP})
    \enspace.
\end{align}
Furthermore, applying $\bar{\mP}$ to the rows and columns of $\bar{\mT}^i$ only shuffles its entries, without modifying their values. So we have
\begin{align}
    \operatorname{abs}(\bar{\mP}^\top \bar{\mT}^i \bar{\mP})
    =
    \bar{\mP}^\top \operatorname{abs}(\bar{\mT}^i) \bar{\mP} 
    \enspace.
\end{align}
Consequently,
\begin{equation}
    \bar{\mW}^U
    =
    \sum_{i=1}^m \mI + \bar{\mP}^\top \operatorname{abs}(\bar{\mT}^i) \bar{\mP}
    =
    \sum_{i=1}^m \bar{\mP}^\top (\mI + \operatorname{abs}(\bar{\mT}^i)) \bar{\mP}
    =
    \bar{\mP}^\top (m\mI + \bar{\mT}^U) \bar{\mP}
    \enspace.
\end{equation}
So, dividing both sides by $m$ gives
\begin{align}
    \frac1m \bar{\mW}^U
    =
    \bar{\mP}^\top \left( \mI - \left( -\frac1m \bar{\mT}^U \right) \right) \bar{\mP}
\end{align}
where $-\frac1m \bar{\mT}^U$ is strictly lower triangular, and its strictly lower triangular part only has non-zero elements.
    
Next, we apply the same reasoning to the alternative set of parameters, given by $\mW^i = \mI - \mP^\top \mT^i \mP$, and we consider $\mW^U = \sum_{i=1}^m \operatorname{abs}(\mW^i)$.
The proof of Theorem~\ref{theorem:identifiability_B_tilde_i} already implied that $\bar{\mW}^i = \mW^i$ for all $i$. Thus, we have $\bar{\mW}^U = \mW^U$ and
\begin{align}
    \label{app:eq:identifiability_P_reunion_Bi}
    \bar{\mP}^\top \left( \mI - \left( -\frac1m \bar{\mT}^U \right) \right) \bar{\mP}
    = 
    \mP^\top \left( \mI - \left( -\frac1m \mT^U \right) \right) \mP
\end{align}
where $\mT^U$ is defined in a similar way as $\bar{\mT}^U$, except that its strictly lower triangular part can be sparse.
Using Lemma~\ref{app:lemma:Bi_fully_connected} on~\eqref{app:eq:identifiability_P_reunion_Bi},
we obtain that $\mP = \bar{\mP}$, and thus $\mT^i = \bar{\mT}^i$ for all~$i$.
In conclusion, all the sets of DAG decompositions that parameterize the same model are equal, as soon as for one of these decompositions, the reunion of the $\mT^i$ is dense.
We conclude that matrices $\mP$ and $\mT^i$ are unique, and thus identifiable in our terminology.
This proof also implies that there can be only one matrix $\bar{\mP}$ that fulfills Assumption~\ref{assump:total_order_shared_P}.

\subsection{Results for view-specific causal orderings}
\label{app:ssec:identifiability_multiview_different_Pi}

Next, we consider a case which is outside of the theory of the main paper, although a simple extension: we allow the causal orderings to be different in different views.
It turns out that in the view-specific $\mP^i$ case, identifiability is obtained by assuming that the directed acyclic graph $\mB^i$ is dense enough in each view, as formalized in the following assumption.
However, since the $\mB^i$ can be permuted to strictly lower triangular matrices, they cannot be denser than  having $\frac{p(p-1)}{2}$ non-zero entries.
\begin{assumption}[Dense connectivity in each view]
\label{assump:total_order_different_Pi}
    For each view $i$, the matrix $\mB^i$ has exactly $\frac{p(p-1)}{2}$ non-zero entries. 
\end{assumption}

Using Assumption~\ref{assump:total_order_different_Pi}, the following theorem states that, in addition to identifying $\mB^i$, $\mD^i$, and $\mSigma^i$, one can also identify $\mT^i$ and $\mP^i$.

\begin{theorem}(Identifiability of multiple causal orderings)
    \label{theorem:identifiability_multiview_different_Pi}
    Consider the LiMVAM model with multiple causal orderings. Consider the quantities $(\mD^1, \dots, \mD^m, \mSigma^1, \dots, \mSigma^m, \mP^1, \dots, \mP^m, \mT^1, \allowbreak \dots, \mT^m)$ as the set of parameters to be estimated. 
    Under Assumptions~\ref{assump:noise_diversity} and~\ref{assump:total_order_different_Pi}, all these parameters are identifiable.
\end{theorem}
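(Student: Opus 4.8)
The plan is to reuse almost verbatim the machinery already developed for the shared-ordering case, and to invoke the new density hypothesis only at the very end. The crucial observation is that the Shared ICA identifiability argument used in the proof of Theorem~\ref{theorem:identifiability_B_tilde_i} (first claim) never exploits the fact that the causal orderings coincide across views: it recovers each unmixing matrix up to a \emph{single} sign-permutation common to all views. I would therefore start by rewriting the model in reduced form $\vx^i = \mA^i(\vs + \tilde{\vn}^i)$ with $\mA^i = (\mI - \mB^i)^{-1}\mD^i$ and $\tilde{\vn}^i = (\mD^i)^{-1}\vn^i$, exactly as before, so that the unmixing matrices $\mW^i = (\mA^i)^{-1} = (\mD^i)^{-1}(\mI - \mB^i)$ lie in the domain $\mathcal{W}$. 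Under Assumption~\ref{assump:noise_diversity} the rescaled noises meet the diversity condition of Shared ICA, so \citet[Theorem 1]{richard2021sharedica} yields a shared sign-permutation matrix $\mQ$ with $\mW'^i = \mQ^\top \mW^i$ and $\mSigma'^i = \mQ^\top \mSigma^i \mQ$ for every $i$. Applying Lemma~\ref{lemma:identifiability_lingam} to each pair $(\mW^i, \mW'^i)$ forces $\mQ = \mI$, hence $\mD'^i = \mD^i$, $\mSigma'^i = \mSigma^i$, and $\mB'^i = \mB^i$ for all views. This settles identifiability of $\mD^i$, $\mSigma^i$, and $\mB^i$ with no reference to the orderings.

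The genuinely new step is to pass from the identified $\mB^i$ to the view-specific decompositions $(\mP^i, \mT^i)$, and this is where Assumption~\ref{assump:total_order_different_Pi} enters. Since $\mB^i = (\mP^i)^\top \mT^i \mP^i$ with $\mT^i$ strictly lower triangular, the matrix $\mT^i = \mP^i \mB^i (\mP^i)^\top$ is merely a simultaneous row/column permutation of $\mB^i$, so it has the same number of non-zero entries. The assumption that $\mB^i$ attains the maximal count $\frac{p(p-1)}{2}$ therefore means that \emph{all} strictly-lower-triangular positions of $\mT^i$ are non-zero, i.e. each view's DAG is complete. This is precisely the hypothesis required by Lemma~\ref{app:lemma:Bi_fully_connected}.

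To conclude, I would fix a view $i$ and let $(\mP^i, \mT^i)$ and $(\mP'^i, \mT'^i)$ be the decompositions arising from the two parameter sets. Because the first step already established $\mB^i = \mB'^i$, we obtain $(\mP^i)^\top \mT^i \mP^i = (\mP'^i)^\top \mT'^i \mP'^i$ with the true $\mT^i$ dense in its strictly lower triangular part. Lemma~\ref{app:lemma:Bi_fully_connected} then gives $\mP^i = \mP'^i$ and $\mT^i = \mT'^i$. Running this argument independently for each $i \in [\![1, m]\!]$ identifies all the $\mP^i$ and $\mT^i$, completing the proof.

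I do not anticipate a serious obstacle; the only point requiring care is justifying that the Shared ICA recovery in the first step is truly ordering-agnostic, since one might worry that view-specific structures could break the shared-ambiguity guarantee. They do not: Shared ICA constrains only the mixing matrices and the common source $\vs$, so the per-view SEM structure is irrelevant at that stage, and the combinatorial uniqueness of the topological ordering of a complete DAG---captured by Lemma~\ref{app:lemma:Bi_fully_connected}---does the remaining work view by view.
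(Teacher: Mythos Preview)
Your proposal is correct and follows essentially the same route as the paper: first invoke the Shared ICA identifiability together with Lemma~\ref{lemma:identifiability_lingam} to pin down $\mD^i$, $\mSigma^i$, and $\mB^i$ (the paper simply cites Theorem~\ref{theorem:identifiability_B_tilde_i} for this, whereas you unpack its proof), and then apply Lemma~\ref{app:lemma:Bi_fully_connected} view by view using Assumption~\ref{assump:total_order_different_Pi} to conclude $\mP^i=\mP'^i$ and $\mT^i=\mT'^i$. Your remark that the Shared ICA step is ordering-agnostic is exactly the point the paper relies on implicitly when citing Theorem~\ref{theorem:identifiability_B_tilde_i}.
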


In the context of view-specific causal orderings $\mP^i$, we prove that, under Assumptions~\ref{assump:noise_diversity} and~\ref{assump:total_order_different_Pi}, the decomposition of matrices $\mB^i$ into matrices $\mT^i$ and $\mP^i$ is unique.

Consider two sets of parameters $\Theta = (\mSigma^1, \dots, \mSigma^m, \mP^1, \dots, \mP^m, \mT^1, \dots, \mT^m)$ and $\Theta' = (\mSigma'^1, \dots, \allowbreak \mSigma'^m, \allowbreak \mP'^1, \dots, \mP'^m, \mT'^1, \dots, \mT'^m)$ that parameterize the same statistical model in~\eqref{app:eq:multiview_model_shared_disturbances}, where $\mP^i, \mP'^i$ are permutation matrices, and $\mT^i, \mT'^i$ are strictly lower triangular matrices.
Note that here we parameterize by $\mP^i$ and $\mT^i$ rather than $\mB^i$ or $\mW^i$.
From Theorem~\ref{theorem:identifiability_B_tilde_i}, we know that $\mSigma^i = \mSigma'^i$ and that the resulting causal matrices must be equal:
\begin{equation}\label{app:eq:PBP_prime}
    (\mP^i)^\top \mT^i \mP^i
    =
    (\mP'^i)^\top \mT'^i \mP'^i
    \enspace.
\end{equation}

Assumption~\ref{assump:total_order_different_Pi} states that, for each view $i$, the matrix $\mB^i = (\mP^i)^\top \mT^i \mP^i = (\mP'^i)^\top \mT'^i \mP'^i$ contains exactly $\frac{p (p-1)}{2}$ non-zero elements, so it is also the case for $\mT^i$ and $\mT'^i$ which thus represent fully connected graphs.
So, from Lemma~\ref{app:lemma:Bi_fully_connected}, we deduce that, for each view $i$, we have $\mP^i = \mP'^i$ and $\mT^i = \mT'^i$, which concludes the proof.

\newpage
\section{ICA-based Algorithm for LiMVAM with shared disturbances}
\label{app:sec:ica_based_algorithm_limvam}

\subsection{ICA-based Algorithm}

\begin{algorithm}[th]
    \caption{ICA-LiMVAM}
    \label{alg:ica_limvam}
    \begin{enumerate}

        \item \textit{Estimate the adjacency matrices} $\mB^i$. 

    \begin{enumerate}
        
      \item Estimate the unmixing matrices $\mW^i$ and the noise variance matrices $\mSigma^i$, by running the Shared ICA estimation algorithm on the data. \\
      The algorithm returns an estimate of the true unmixing matrix $\mW^i = \mM (\mI-\mB^i)$ but up to sign-permutation matrix $\mM$ that is the same for all views~\citep{richard2021sharedica}.
        
    \item Determine the sign-permutation indeterminacy $\mM$, using the structure of the underlying $(\mA^i)^{-1}$ that has a diagonal of ones. Thus, we can adapt the simple two-step heuristic by \citet{shimizu2006linear}.
    
    First, find a permutation matrix $\mM$ such that $\mM \mW$ has a non-zero diagonal, by solving  with the Hungarian algorithm
        \begin{align}
            \min_{\mM} \sum_{j=1}^p 
            \frac{1}{| (\mM \mW)_{jj} |}
            \;, \quad
            \mW = \sum_{i=1}^m \operatorname{abs}(\mW^i) \;,
        \end{align}
        and then rescale the rows of $\mM$ to ensure that $\mM \mW$ has a diagonal of ones
        \begin{align}
            \label{eq:rescale_diagonal}
            \mM_{ij} \leftarrow \text{sign} ((\mM \mW)_{ii}) \mM_{ij} \enspace,
        \end{align}
        where the sign function outputs $1$ or $-1$.

        \item Determine the scale matrices $\mD^i = \text{diag} \left( \frac{1}{(\mM \mW^i)_{11}}, \dots, \frac{1}{(\mM \mW^i)_{pp}} \right)$.
        
        \item Determine the causal matrices $\mB^i = \mI - \mD^i \mM \mW^i$ and update the noise variance matrices with $\mSigma^i \leftarrow (\mD^i)^2 \mSigma^i$.
    
        \end{enumerate}

        \item \textit{Determine the causal ordering} $\mP$. \\
        
        The DAG decomposition states that $\mB^i := \mP^\top \mT^i \mP$ where $\mT^i$ is lower triangular, so we penalize for that
        \begin{align}
            \min_{\mP} \sum_{l \geq k} \big(
            \mP \mB {\mP}^\top
            \big)_{kl}^2
            \;, \quad
            \mB = \sum_{i=1}^m \operatorname{abs}(\mB^i) \;.
        \end{align}
        We approximately minimize this objective using the heuristic algorithm presented in~\citet[Algorithm C]{shimizu2006linear}.
    \end{enumerate}
\end{algorithm}

Note that the algorithm described in~\citet{chen2024individual} finds the same adjacency matrices as we do. This is because of the sign-permutation matrix $\mM$ from ICA: to determine that matrix, we ensure $\mM \mW$ has a diagonal of ones by dividing by the scalings $\mD^i$. For~\cite{chen2024individual}, these scalings are not part of the model: they are simply a part of the algorithm which determines $\mM$. For us, these scalings are part of our model. 

\subsection{Computational complexity}
\label{app:ssec:computational_complexity_ica_limvam}

We next detail the worst-case computational complexity of the ICA-LiMVAM algorithm which is in $O(tnmp^3 + mp^5)$, for $t$ iterations of the SharedICA-ML algorithm, $n$ samples, $m$ views and $p$ components. Note that the term in $p^5$ comes from Algorithm C of the original LiNGAM and is worst-case: in practice, it took a couple of iterations only in our experiments. Below are the detail of the derivations for each step of Algorithm~\ref{alg:ica_limvam}.

\paragraph{Computational complexity}
The computational complexity of ICA-LiMVAM can be broken down across the four steps in Algorithm~\ref{alg:ica_limvam}.
Step 1 calls SharedICA, with its most costly variant (SharedICA-ML) running in $O(t \cdot n \cdot m \cdot p^3)$, where $t$ is the number of iterations, $n$ the number of samples, $m$ the number of views, and $p$ the number of components.
Step 2 solves the permutation indeterminacy via a linear sum assignment in $O(m \cdot p^3)$, instead of trying $p!$ permutations.
Step 3 involves $m$ multiplications with permutation matrices and additions, costing $O(m p^2)$.
Step 4 optimizes over permutations. Instead of trying $p!$ permutations, we use the heuristic Algorithm C from LiNGAM. It is $O(m \cdot p^5)$ in the worst case (and much lower in practice). Further details are provided next:

\begin{enumerate}

    \item Run the ICA algorithm SharedICA-ML, $O(t n m p^3)$

    Each of the $t$ iterations of the optimization performs the so-called “E-step” and “M-step” of the E-M algorithm, detailed in Section 4 of~\citet{richard2021sharedica}. Their complexity is in $m n p^3$: the intuition is that for each view $m$ and each sample $n$ require some matrix operations (\textit{e.g.} addition, multiplication, computing gradients and Hessians) all of which are dominated by $p^3$. So the final complexity is in $O(t n m p^3)$.

    \item Solve the permutation-sign indeterminacy, $O(m p^3)$

    Computing the objective can be done in $O(p^2)$. The detail of this is: we begin by computing the matrix $W$ which is in $O(m p^2)$, then multiplying it with a permutation matrix which is in $O(p^2)$, and finally summing the diagonal elements which is in $O(p)$.
    
    Finding the correct permutation matrix can be done by solving a linear-sum assignment problem; this can be done using the Hungarian algorithm which is in $O(p^3)$. 

     \item Update the entries of a matrix which is in $O(p^2)$.

     \item Update the entries of a matrix which is in $O(p^2)$.

    Adding the complexities thus far leads to a complexity of $O(mp^2 + p^3)$, which is dominated by $O(mp^3)$, for Step 2.

    \item Compute the causal matrices, $O(m p^2)$

    Each of the $m$ causal matrices requires multiplying a dense matrix with a permutation which is in $O(p^2)$ and then adding a matrix which is in $O(p^2)$. The final cost is $O(m p^2)$. 

    \item Find the causal ordering(s), $O(m p^5)$

    Algorithm B in the original LiNGAM paper finds a row that has all zeros $O(p^2)$, removes it, does this $p$ times. Its complexity is in $O(p^3)$. 
    
    Algorithm C in the original LiNGAM paper calls Algorithm B at most $p (p-1) / 2 = O(p^2)$ times, so its final complexity is $O(p^2 p^3) = O(p^5)$. 
    
    When the causal ordering is shared across views: first we compute sum $m$ matrices with $p^2$ entries which is in $O(m p^2)$, and then we use Algorithm C once which is in $O(p^5)$. The final complexity is in $O(m p^2 + p^5)$ which is dominated by  $O(m p^5)$.
    
    When the causal ordering is view-dependent: for each of the $m$ views, we use Algorithm C which is in $O(p^5)$, so the final complexity is in $O(m p^5)$. 

\end{enumerate}

\newpage
\section{Experiments}
\label{app:sec:experiments}

\subsection{Synthetic experiments}
\label{app:ssec:synthetic_experiments}

\subsubsection{Data generation in Figure~\ref{fig:simulation_study}}
\label{app:sssec:details_about_figure1}

The causal effect matrices $\mB^i = \mP^\top \mT^i \mP$ are generated from a random permutation $\mP$ and strictly lower triangular $\mT^i$ obtained from a standard Gaussian; the diagonal matrices $\mD^i$ are drawn from a uniform density between~0.1 and~3; the common disturbances in $\vs$ can be either Gaussian or non-Gaussian: Gaussian disturbances $s_j$ are generated from a standardized Gaussian and their corresponding noises $n^i_j$ have standard deviation $\Sigma^i_{jj}$ obtained by sampling from a uniform density between 0 and 1, while non-Gaussian disturbances are generated from a Laplace distribution (with scale parameter equal to $\frac12$) and their corresponding noises all have a std of $\frac12$.
We use $m=5$ views, $p=4$ variables, and vary the number of samples $n$ between $10^2$ and $10^4$.

\subsubsection{Simulation study in higher dimension}
\label{app:sssec:simulation_higher_dimension}

We evaluate the methods by measuring the estimation error on the matrices $\mB^i$ across a range of settings involving more views and components than those considered in the main text. 
Specifically, we vary the number of views among \{3, 5, 8, 12, 16, 20\} and the number of components among \{3, 6, 9, 12\}, using 1000 samples and 50 random seeds for each configuration.

\begin{figure*}[ht]
    \centerline{\includegraphics[width=0.95\columnwidth]{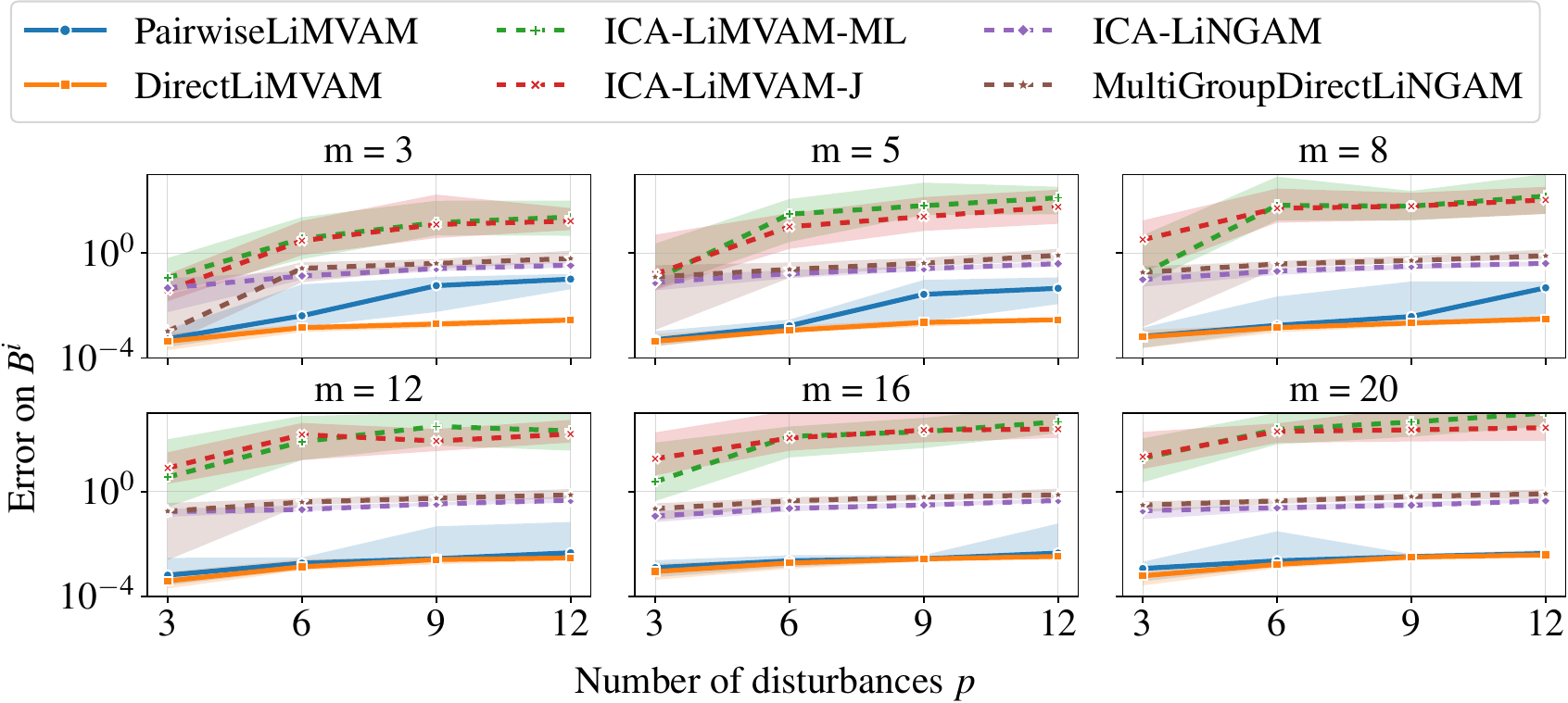}}
    \caption{
    Separation performance of three pairwise-comparisons-based algorithms --- MultiGroupDirectLiNGAM from~\citet{shimizu2012joint}, and DirectLiMVAM and PairwiseLiMVAM which use the criteria in~\eqref{eq:criterion} and~\eqref{eq:criterion_covariance}, respectively --- and three ICA-based algorithms --- ICA-LiNGAM, which naively uses~\citet{shimizu2006linear} separately in each view, and our implementation of the two variants of ICA-LiMVAM~\cite{chen2024individual}.    
    We varied the number of views and components.
    Disturbances are generated using the generalized normal distribution with shape parameter $\beta \in \{1.5, 2, 2.5\}$.
    The metric used is the $\ell_2$-distance between true and estimated causal effect matrices $\mB^i$ (lower is better).
    }
    \label{app:fig:simulation_higher_dimension}
\end{figure*}

Data are generated from the model in~\eqref{eq:multivariate_model}, where disturbances $\ve_j$ are split into three types—sub-Gaussian, Gaussian, and super-Gaussian—which motivates the choice of disturbance counts as multiples of 3. 
Sub-Gaussian disturbances follow a generalized normal distribution with shape parameter $\beta = 2.5$; super-Gaussian disturbances use $\beta = 1.5$; Gaussian disturbances correspond to $\beta = 2$. 
Each $\mT^i$ is sampled as a strictly lower-triangular matrix with Gaussian entries, and a common permutation matrix $\mP$ is used to construct the $\mB^i = \mP^\top \mT^i \mP$.
For each method, we report the median error on the matrices $\mB^i$, with the 25th and 75th percentiles shown as error bars.

As shown in Fig.~\ref{app:fig:simulation_higher_dimension}, DirectLiMVAM and PairwiseLiMVAM consistently achieve lower errors than the other approaches. DirectLiMVAM performs best when the number of views is limited (3, 5, or 8) while the number of components remains relatively high (9 or 12), thereby outperforming PairwiseLiMVAM in these settings.
This slightly surprising observation that DirectLiNGAM is a bit better than PairwiseLiNGAM is perhaps due to the fact that there the particular non-Gaussianities used here are ill-suited for the Gaussian likelihood underlying PairwiseLiNGAM.

In contrast, ICA-LiNGAM and MultiGroupDirectLiNGAM achieve slightly better-than-average performance when both the number of views and the number of components are small, but their performance quickly degrades to the average level as dimensionality increases. This behavior reflects their inability to properly handle Gaussian disturbances. Finally, both ICA-LiMVAM variants fail completely, as they are not designed to operate without shared disturbances. Overall, the estimation error tends to increase with the number of components.

\subsubsection{Testing Assumption~\ref{assump:noise_diversity}}
\label{app:sssec:assumption_1}

\begin{wrapfigure}{r}{0.5\textwidth}
  \centering
  \includegraphics[width=0.5\textwidth]{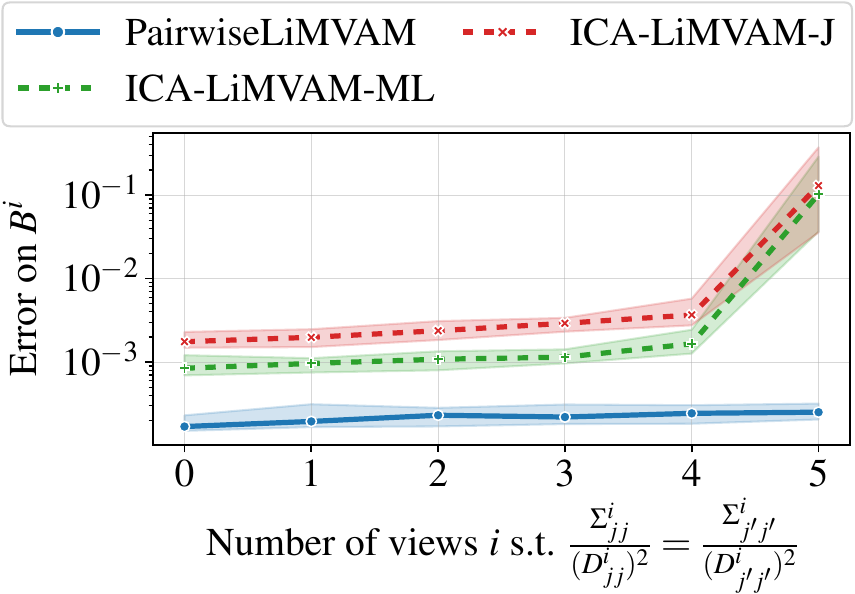}
  \caption{
  Effect of violating the noise diversity assumption (Assumption~\ref{assump:noise_diversity}) on the estimation error of the causal matrices $\mB^i$. We report the average $\ell_2$ error over 50 repetitions for ICA-LiMVAM and PairwiseLiMVAM, as the number of views with identical scaled noise variances increases.}
  \label{app:fig:assumption_noise_diversity}
\end{wrapfigure}

Assumption~\ref{assump:noise_diversity} provides a sufficient condition for the identifiability of the causal matrices $\mB^i$ in the LiMVAM model with shared, Gaussian disturbances. 
To evaluate the practical impact of this assumption, we simulated data from the model in~\eqref{eq:multivariate_model}, with shared disturbances as in~\eqref{eq:disturbance_decomposition}. We used 5 views, 4 common disturbances in $\vs$ (2 Gaussian and 2 Laplacian), and 1000 samples.

The scale matrices $\mD^i$ were drawn uniformly from the interval $[0.5, 2]$, and the strictly lower triangular matrices $\mT^i$ were sampled from a Gaussian distribution and then permuted to form the causal matrices $\mB^i$. The noise variances $\mSigma^i_{jj}$ corresponding to the Laplacian disturbances were fixed to $\frac12$, while the noise variances corresponding to the two Gaussian disturbances, indexed by $j$ and $j'$, were initially sampled uniformly in $[0, 1]$. To test the assumption, we selected an increasing number of views $i$ such that the scaled noise variances $\frac{\mSigma^i_{jj}}{\mD^i_{jj}} = \frac{\mSigma^i_{j'j'}}{\mD^i_{j'j'}}$, ranging from 0 to 5. Note that Assumption~\ref{assump:noise_diversity} only fails when this condition holds across all 5 views. We repeated this experiment over 50 random seeds.

Figure~\ref{app:fig:assumption_noise_diversity} shows that ICA-LiMVAM maintains low estimation error on the $\mB^i$ matrices as long as the noise diversity assumption holds, but its performance deteriorates sharply once the assumption is violated. This confirms the practical relevance of the assumption. Interestingly, PairwiseLiMVAM appears unaffected by the violation of this condition.

\subsubsection{Testing Assumptions~\ref{assump:total_order_shared_P} and~\ref{assump:total_order_different_Pi}}
\label{app:sssec:assumption_2_and_3}

Assumption~\ref{assump:total_order_shared_P} (resp. Assumption~\ref{assump:total_order_different_Pi}) gives a sufficient condition for the identifiability of the causal matrix $\mP$ (resp. causal orderings $\mP^i$) and matrices $\mT^i$, in the shared causal ordering (resp. multiple causal orderings) case. 
However, it is not straightforward how robust the algorithms are to these assumptions. 

\begin{figure*}[ht]
\centerline{\includegraphics[width=0.8\columnwidth]{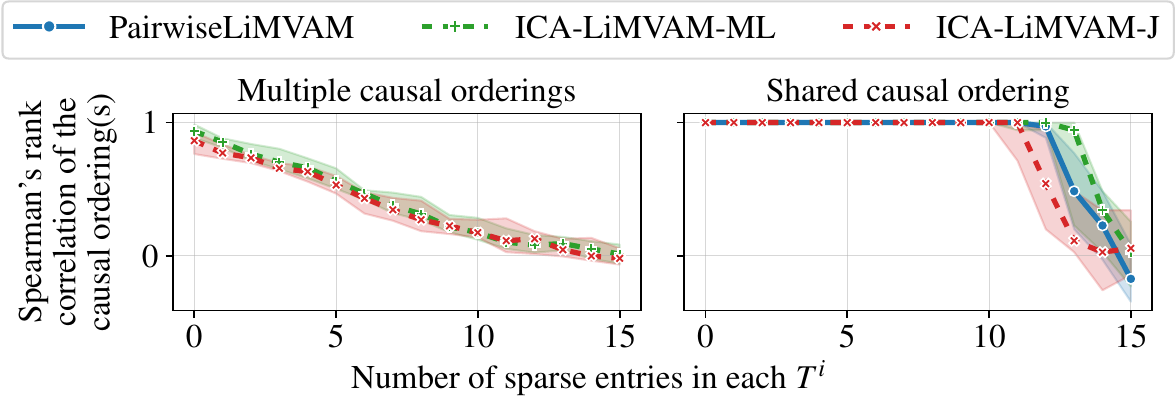}}
\caption{
Spearman’s rank correlation between true and estimated causal orderings as a function of the number of sparse entries in each $\mT^i$. Left: In the multiple-ordering setting, performance of ICA-LiMVAM-ML and ICA-LiMVAM-J degrades linearly with increasing sparsity, as the assumption of full support is violated. PairwiseLiMVAM is excluded since it assumes a shared ordering. Right: In the shared-ordering setting, all methods maintain near-perfect recovery up to 11 sparse entries, after which performance drops sharply.
}
\label{app:fig:sparsity_Ti}
\end{figure*}

To evaluate the practical impact of these two assumptions, we simulated data from the model in~\eqref{eq:multivariate_model}, with shared disturbances as in~\eqref{eq:disturbance_decomposition}. We used 8 views, 6 common disturbances (two sub-Gaussians sampled form a generalized normal distribution with shape parameter $\beta = 2.5$; two super-Gaussians for $\beta = 1.5$; two Gaussians corresponding to $\beta = 2$), and 1000 samples.
The scale matrices $\mD^i$ were drawn uniformly from the interval $[0.5, 2]$, the strictly lower triangular matrices $\mT^i$ were sampled from a Gaussian distribution and then permuted to form the causal matrices $\mB^i$, and the variances $\mSigma^i$ were sampled uniformly in $[0, 1]$.
The experiment was repeated 30 times.

Fig.~\ref{app:fig:sparsity_Ti} reports the Spearman’s rank correlation between the true and estimated causal orderings as a function of the number of sparse entries in each $\mT^i$, randomly chosen among the $6 \times 5 = 30$ non-zero entries.

In the multiple causal orderings setting (left panel), both ICA-LiMVAM variants exhibit a roughly linear drop in performance as sparsity increases. This behavior is expected, as Assumption~\ref{assump:total_order_different_Pi} no longer holds once any entry in the strictly lower triangular part of $\mT^i$ is set to zero. PairwiseLiMVAM is not included in this setting, as it is designed under the assumption of a shared causal ordering.

In the shared causal ordering setting (right panel), all methods benefit from the shared structure and recover the true ordering $\mP$ reliably up to 11 sparse entries. Beyond this point, performance degrades sharply for all methods, reflecting the increasing violation of Assumption~\ref{assump:total_order_shared_P}.

These results empirically support the practical relevance of Assumptions~\ref{assump:total_order_shared_P} and~\ref{assump:total_order_different_Pi}.

\subsection{Real data experiments}
\label{app:ssec:real_data_experiments}

\subsubsection{Details on the preprocessing of MEG data}
\label{app:sssec:preprocessing_meg}

The MEG data measures participants' responses to auditory and visual stimuli. The auditory stimuli were binaural pure tones.
The visual stimuli were checkerboards presented both to the left and right of a central fixation for 34-ms duration.
This task leads to strong signal power modulations during the motor preparation and motor execution.

The original MEG data were acquired with 306 sensors, recorded at 1000~Hz, and band-pass filtered between 0.03 and 330~Hz. 
All MEG processing was done using the MNE-Python library~\citep{gramfort2013mnepython,gramfort2014mnepython} and we largely followed the pre-processing steps used in~\citet{power2023camcandictionarylearning}.
We applied a Maxwell filter \cite{Taulu_2006} to improve data quality and a band-pass filter between 8 and 27~Hz to focus on power effects spread over the alpha and beta bands of the brain.
This range of waves is supposed to be particularly active in sensorimotor tasks, especially during movement preparation and execution, and it typically shows a characteristic suppression (event-related desynchronization) during movement, followed by a rebound (event-related synchronization) after movement cessation, which is thought to reflect sensorimotor processing and inhibitory mechanisms.
In particular, the suppression and rebound are reflected in the energies of the signals, not raw signals.

The data were then parsed into trials synchronized to each button press, with a duration of 4.5~s, including a 1.5~s pre-movement interval.
The 4.5~s window length was selected to ensure a sufficient post-movement interval to capture the entire beta rebound response.
Trials were excluded if the button press occurred more than 1~s after the audiovisual cue (indicating poor task performance) or if another button press occurred within the time window.
Then, a baseline correction was applied using the pre-movement interval $(-1.5\,\text{s}, -1\,\text{s})$. The procedure led to about 60 trials per participant on average.

\begin{figure}[ht]
    \centering
    \includegraphics[width=0.95\columnwidth]{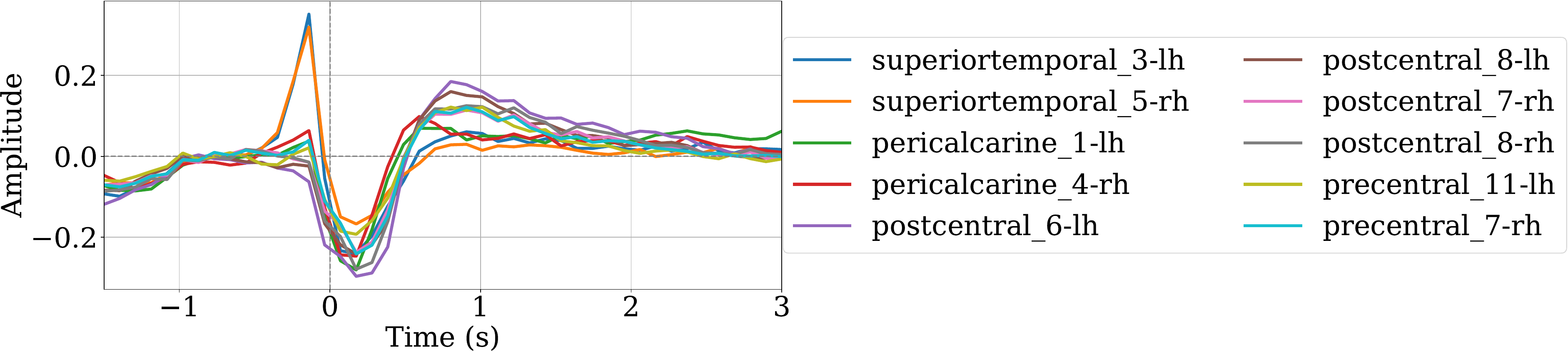}
    \caption{Example of time series obtained after the preprocessing. We averaged the data across subjects and trials, resulting in 10 time series, one for each brain region.}
    \label{app:fig:average_envelope}
\end{figure}

Next, we performed a cortical projection.
First, each participant’s MRI (additionally given in the Cam-CAN database) was segmented using FreeSurfer~\cite{dale1999cortical}.
The segmentation provided a digitization of the cortical surface for source estimation, a transformation to the average brain (\textit{i.e.} fsaverage) for spatial normalization and group statistics, and a boundary element model of the head to provide more accurate calculation of the forward solution.
The inverse solution, based on the MNE method \cite{hamalainen-mne}, allowed to consider cortical region activations for further analysis.

\begin{wrapfigure}{r}{0.25\textwidth}
    \vspace{-0.7cm}
    \centering
    \includegraphics[width=0.25\textwidth]{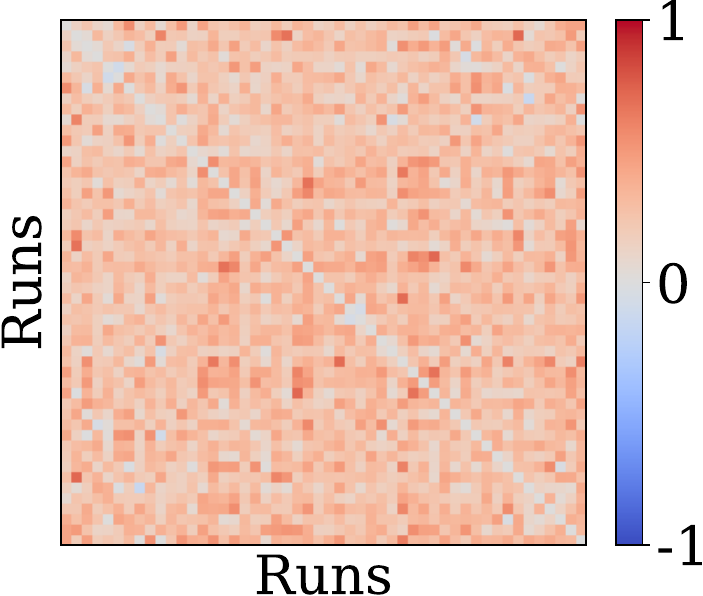}
    \caption{Pearson correlations between median causal effect matrices across 50 runs of ICA-LiMVAM-ML, each using a different random subset of 30 participants from the Cam-CAN cohort.}
    \label{app:fig:histogram_MICaDo}
    \vspace{-0.2cm} %
\end{wrapfigure}

We used the cortical parcellation from~\cite{KHAN201857} to divide the cortical mantel (both hemispheres) into 448 distinct regions and summarized each region by an averaged time course.
Then, we selected 10 of the 448 regions based on their known importance in sensorimotor tasks.
Specifically, we picked for each hemisphere three regions in the motor cortex (two parcels in the “postcentral” and one in the “precentral”; visible in blue in Fig.~\ref{fig:brain_pairwise}), one region in the auditory cortex (“superiortemporal” parcel; highlighted in pink), and one region in the visual cortex (“pericalcarine” parcel; not visible in the figure).
Note again that the task for the participant is active as right index button presses are triggered by audiovisual stimulations.

For each participant, we thus extracted one time series for each of the 10 regions and fixed the number of trials to 40.
Participants with less than 40 trials were discarded and extra trials were averaged.
Participants for whom some of the parcels did not have any vertex in the source space were also discarded, resulting in a total of 98 available participants.
We performed a Z-score normalization of the time series to correct the depth bias and, importantly, computed the Hilbert \textit{envelope} of the signals to study modulations in cortical source power.
Finally, the signals were centered and downsampled from 1000~Hz to 10~Hz due to the slowly changing nature of the envelopes (energies).
The final dataset consisted of 98 subjects, 10 brain regions, and 1760 time points.

Figure~\ref{app:fig:average_envelope} shows the typical time series obtained after preprocessing. For visualization, we averaged the time courses across participants and trials. As expected, we observe a prominent peak in the “superiortemporal” auditory regions shortly before the button press ($t=0$\,s), reflecting the stimulus onset. In the motor regions, a characteristic rebound pattern emerges: the signals decrease around the time of the button press, consistent with event-related desynchronization of beta rhythms, and subsequently increase after approximately 0.3 seconds, indicating beta resynchronization. This figure also labels the ten cortical parcels selected for analysis.

\subsubsection{MEG experiment using ICA-LiMVAM-ML}
\label{app:sssec:MEG_using_ICSL}

In the following, we present additional analyses that extend the experiments described in Section~\ref{ssec:real_data_experiments}.

Figure~\ref{app:fig:six_brains} displays median causal effect maps estimated using ICA-LiMVAM-ML on the Cam-CAN dataset. For each of the six panels, ICA-LiMVAM-ML was applied to a randomly chosen subset of 50\% of the participants. These results are consistent with the patterns observed in Figure~\ref{fig:brain_pairwise}, notably the frequent presence of directed connections between motor areas in opposite hemispheres. This further supports the hypothesis of consistent inter-hemispheric causal influences in sensorimotor processing.

\begin{figure*}[ht]
    \vspace{-0.2cm}
    \centerline{\includegraphics[width=0.95\columnwidth]{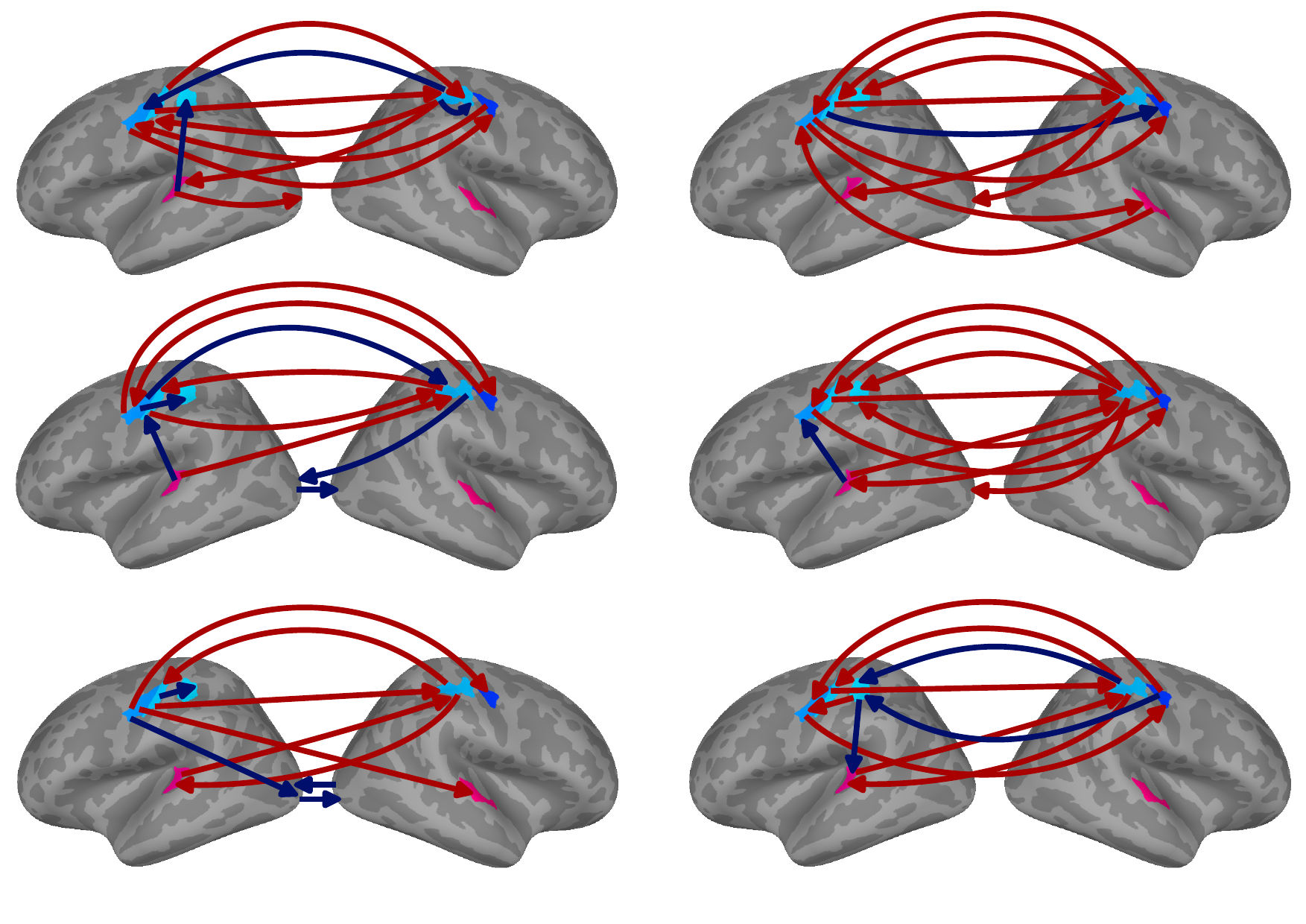}}
    \caption{
    Top ten strongest median causal effects (estimated by ICA-LiMVAM-ML) of six different runs. Each run was performed on the data of 49 randomly chosen subjects.
    Red arrows represent positive effects and blue arrows negative effects.}
    \label{app:fig:six_brains}
\end{figure*}

Finally, we conducted the same robustness analysis for ICA-LiMVAM-ML as previously done for PairwiseLiMVAM (Figure~\ref{fig:pearson_coefs_B}). Specifically, we ran ICA-LiMVAM-ML 50 times, each time using a randomly selected subset of 30 participants from the full cohort, and computed the Pearson correlations between the element-wise median of the estimated individual matrices $\mB^i$ across runs. The resulting correlations are shown in Figure~\ref{app:fig:histogram_MICaDo}. While the correlations remain predominantly positive (with an average of 0.27), they are noticeably lower than those obtained with PairwiseLiMVAM. This suggests that, in terms of stability across subsets, PairwiseLiMVAM exhibits greater consistency than ICA-LiMVAM-ML.

\end{document}